\newcommand{\R}{\mathbb{R}}
\newcommand{\E}{\mathbb{E}}
\newtheorem{theorem}{Theorem}[section]
\newtheorem{proposition}[theorem]{Proposition}
\newtheorem{assumption}{Assumption}
\newtheorem{lemma}[theorem]{Lemma}
\newtheorem{remark}[theorem]{Remark}
\newtheorem{conjecture*}{Conjecture}
\theoremstyle{plain}
\begin{document}

\title{Two-sample Statistics Based on Anisotropic Kernels}

\author[1]{Xiuyuan Cheng\footnote{The majority of this work was done while the first two authors were Gibbs Assistant Professors at Yale.}}
\author[2]{Alexander Cloninger$^*$}
\author[3]{Ronald R. Coifman}
\affil[1]{Department of Mathematics, Duke University}
\affil[2]{Department of Mathematics, University of California, San Diego}
\affil[3]{Department of Mathematics, Yale University}
\date{}

\maketitle

\begin{abstract}
	
The paper introduces a new kernel-based Maximum Mean Discrepancy (MMD) statistic 
for measuring the distance between two distributions given finitely-many multivariate samples. 
When the distributions are locally low-dimensional,
the proposed test can be made more powerful to distinguish certain alternatives
by incorporating local covariance matrices and constructing an anisotropic kernel.
The kernel matrix is asymmetric; it computes the affinity between $n$ data points and a set of $n_R$ reference points,
where $n_R$ can be drastically smaller than $n$.
While the proposed statistic can be viewed as a special class of Reproducing Kernel Hilbert Space MMD, 
the consistency of the test is proved, under mild assumptions of the kernel, 
as long as $\|p-q\| \sqrt{n} \to \infty $, 
and a finite-sample lower bound of the testing power is obtained.
Applications to flow cytometry and diffusion MRI datasets are demonstrated,
which motivate the proposed approach to compare distributions.

\end{abstract}

\section{Introduction}

We address the problem of comparing two probability distributions $p$ and $q$ from finite samples in $\R^d$, 
where both distributions are assumed to be continuous (with respect to Lebesgue measure) and compactly supported. 
We consider the case where each distribution is observed from i.i.d. samples,
called $X$ ($\sim p$) and $Y$ ($\sim q$)  respectively,
and the two datasets $X$ and $Y$ are independent. 
The methodology has applications in a variety of fields, 
particularly in bio-informatics.
It can be used, for example,
to test genetic similarities between subtypes of cancers,  
to compare patient groups to determine potential treatment propensity, 
and to detect small anomalies in medical images that are symptomatic of a certain disease.
We will cover applications to flow cytometry and diffusion MRI data sets in this paper. 

Due to the complicated nature of the datasets we would like to study, we are interested in the general alternative hypothesis test ${\cal H}_0: p = q $ against ${\cal H}_1: p\neq q $.  
This goes beyond tests of possible shifts of finite moments, for example, that of 
mean-shift alternatives namely $\E_{X\sim p}[X]\stackrel{?}{=} \E_{Y\sim q}[Y]$.
We also focus on the medium dimensional setting, in which $1<d \ll \min(n_1,n_2)$, where $n_1$ ($n_2$) is the number of samples in $X$ ($Y$).  
As $n \to \infty $, the dimension $d$ is assumed to be fixed. 
(For the scenario where $d \sim O(n)$, the convergence of kernel matrices to the limiting integral operator needs to be treated quite differently, and a new analysis is needed.)
We are particularly interested in the situation where data is sampled from distributions which are locally low-rank,
which means that  the local covariance matrices are generally of rank much smaller than $d$. 
As will be clear in the analysis, the approach of constructing anisotropic kernels is most useful when data
exhibits such characteristics in a high dimensional ambient space.

We are similarly concerned with a $k$-sample problem, in which the question is to determine the global relationships between $k$ distributions, each of which has a finite set of i.i.d. samples. 
This can be done by measuring a pairwise distance between any two distributions and combining these pairwise measurements to build a weighted graph between the $k$ distributions. Thus we focus on the two-sample test as the primary problem.

In the two-sample problem, the two data sets do not have any point-to-point correspondence, 
which means that they need to be compared in terms of their probability densities.
One way to do this is to construct ``bins" at many places in the domain, compute the histograms of the two datasets at these bins, and then compare the histograms. 
This turns out to be a good description of our basic strategy, 
and the techniques beyond this include using ``anisotropic gaussian" bins at every local point and ``smoothing" the histograms when needed. 
Notice that there is a trade-off in constructing these bins: 
if a bin is too small, which may leave no points in it most of the time, the histogram  will have a large variance compared to its mean. 
When a bin is too large, it will lose the power to distinguish $p$ and $q$ when they only differ slightly within the bin.
In more than one dimension, one may hope to construct anisotropic bins which are large in some directions and small in others, 
so as maximize the power to differentiate $p$ and $q$ while maintaining small variance.
It turns out to be possible when the deviation $(q-p)$ has certain preferred (local) directions. 
We illustrate this idea in a toy example below, 
and the analysis of testing consistency, including the comparison of different ``binning" strategies, 
is carried out by analyzing the spectrum of the associated kernels in Section \ref{sec:theory}.

At the same time, we are not just interested in whether $p$ and $q$ deviate, but how and where they deviate. 
The difference of the histograms of $X$ and $Y$ measured at multiple bins,
introduced as above, can surely be used as an indication of where $p$ and $q$ differ.  
The formal concept is known as the \emph{witness function} in literature,
which we introduce in Section \ref{subsec:witness} and use throughout the applications.

The idea of using bins and comparing histograms 
dates back to measuring distribution distances based on kernel density estimation, 
and is also closely related to two-sample statistics by Reproducing Kernel Hilbert Space (RKHS) Maximum-mean discrepancy (MMD).
We discuss these connections in more detail in Section \ref{sec:review}.
While anisotropic kernels have been previously studied in manifold learning and image processing,
kernel-based two-sample statistics with anisotropic kernels have not been examined in the situation where the data is locally low-rank,
which is the theme of the current paper.

This paper yields several contributions to  the two sample problem.  
Methodologically,
we introduce a kernel-based MMD statistic that increases the testing power against certain deviated distributions by adopting an anisotropic kernel, 
and reduces both the computation and memory requirements. 
The proposed method can be combined with spectral smoothing of the histograms
in order to reduce variability and possibly optimize the importance of certain regions of data, so that the power of the test maybe furtherly improved.
Theoretically, asymptotic consistency is proved for any fixed deviation beyond the critical regime $\|p-q\| \sim O(n^{-1/2})$
under generic assumptions. 
Experimentally, 
we provide two novel applications of two-sample and $k$-sample problems for biological datasets.

The rest of the paper is organized as follows: 
we begin with a sketch of the main idea and motivating example in the remainder of this section, 
together with a review of previous studies.
Section \ref{sec:2} formalizes the definition of the MMD statistics being proposed.
Asymptotic analysis  is given in Section \ref{sec:theory}.
The algorithm and other implementation matters, including computation complexity, 
are discussed in Section \ref{sec:practical}.
Section \ref{sec:applications} covers numerical experiments on synthetic and real-world datasets.

\subsection{Main Idea}\label{subsec:main-idea}

\begin{figure}
\footnotesize
	\begin{center}
		\begin{tabular}{ccccc}
			\includegraphics[height=.15\textwidth]{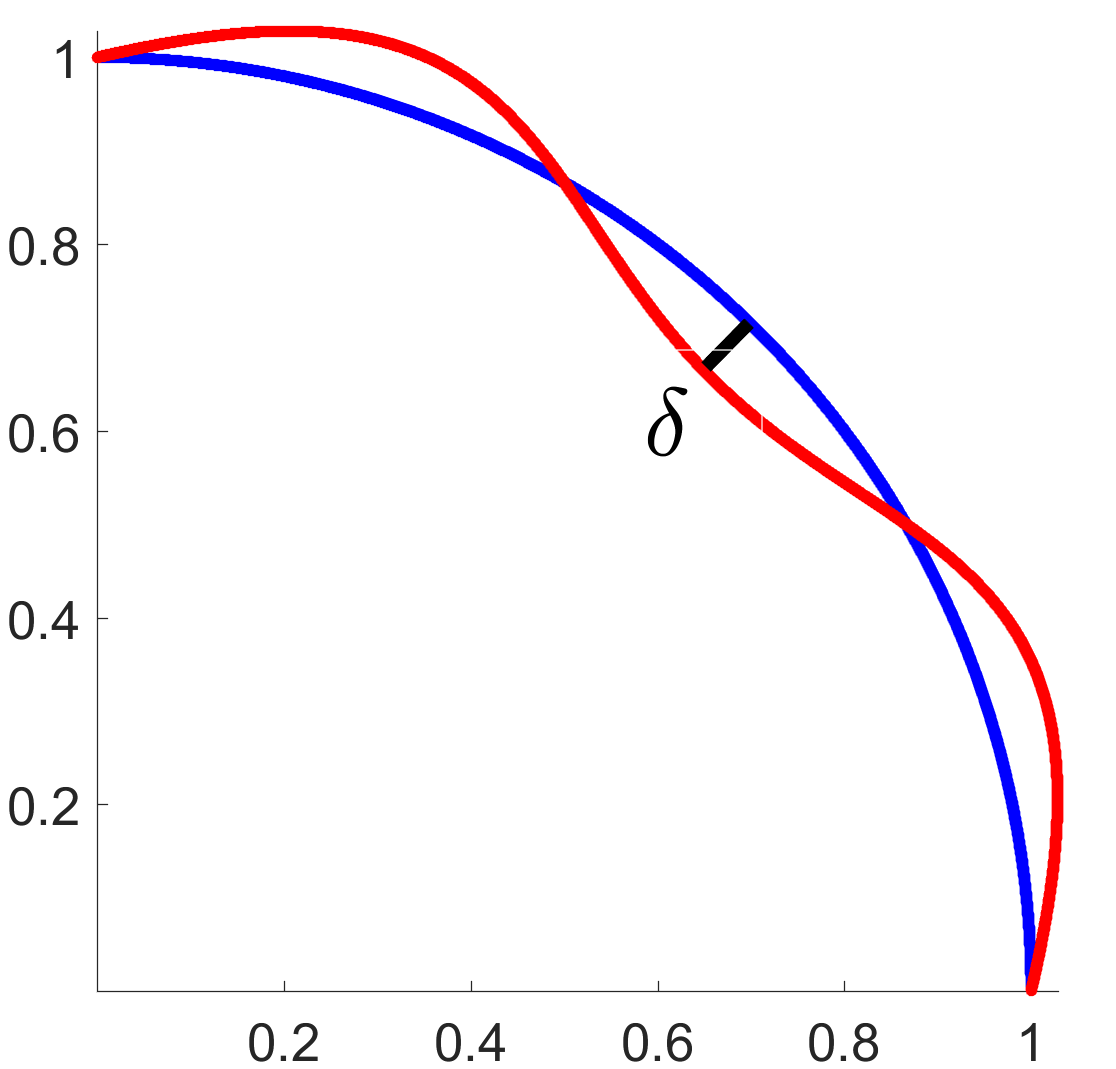} & \hspace{-.5cm}
			\includegraphics[height=.15\textwidth]{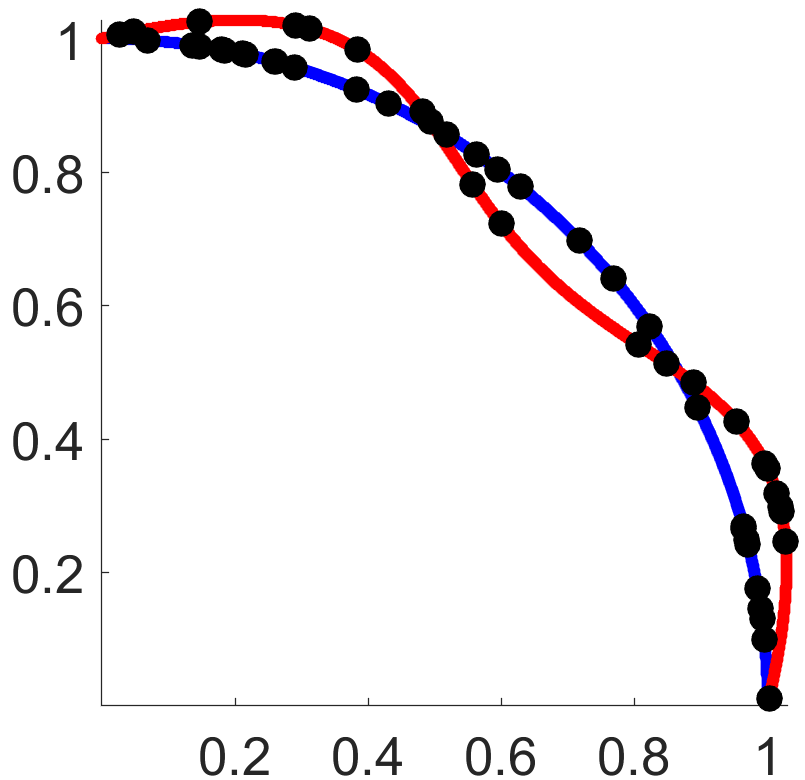} & \hspace{-.5cm}
			\includegraphics[height=.15\textwidth]{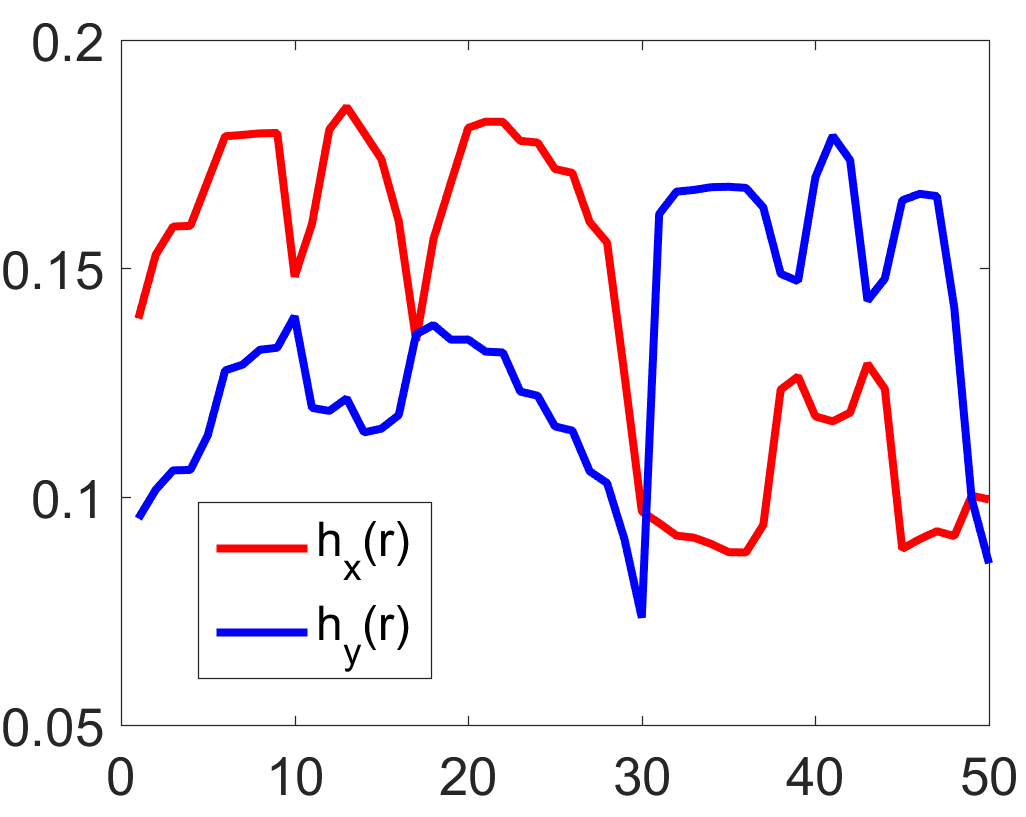} &
					\includegraphics[height=.15\textwidth]{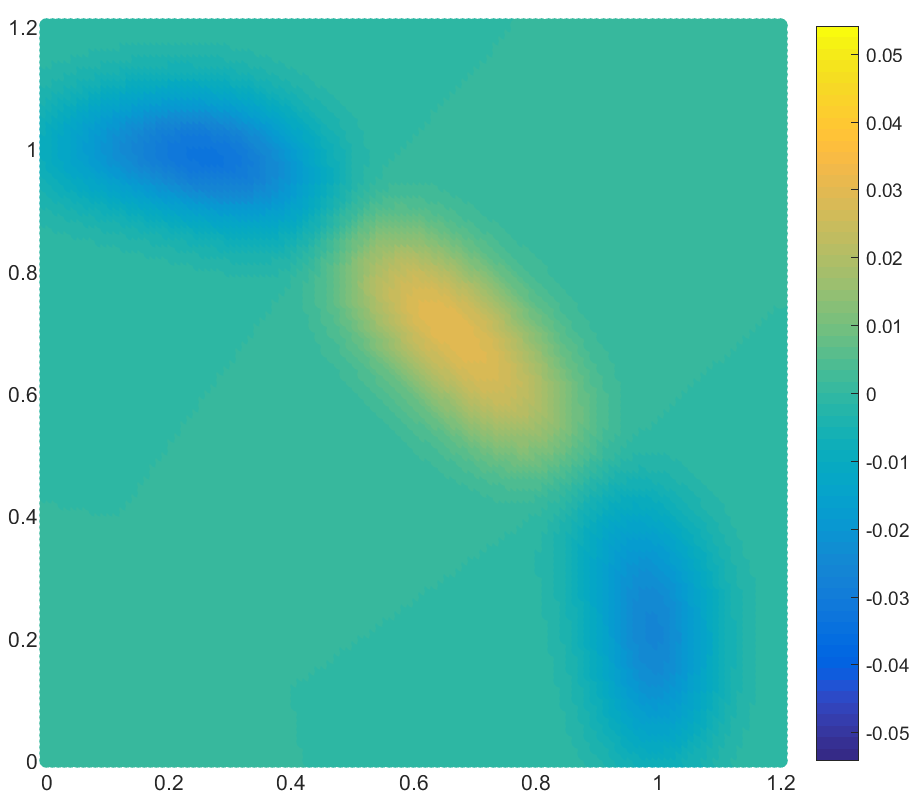} & \hspace{-1cm}
					\includegraphics[height=.15\textwidth]{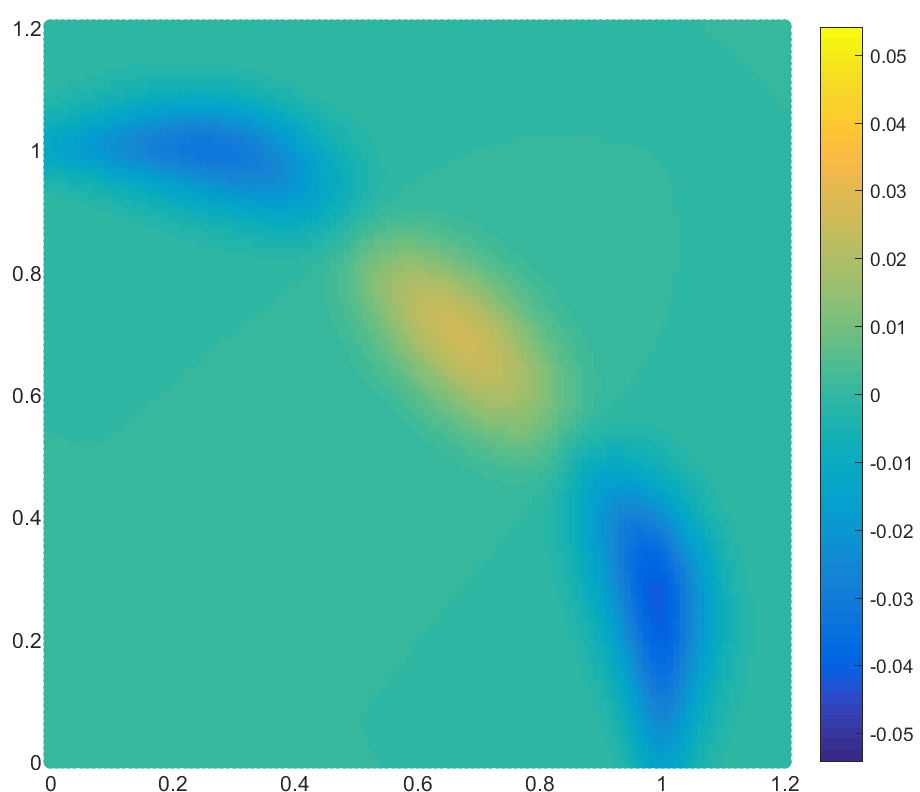} \\
					A1&  \hspace{-1cm} A2 & \hspace{-1cm} A3& 
							C1& \hspace{-1cm} C2\\
			\includegraphics[height=.15\textwidth]{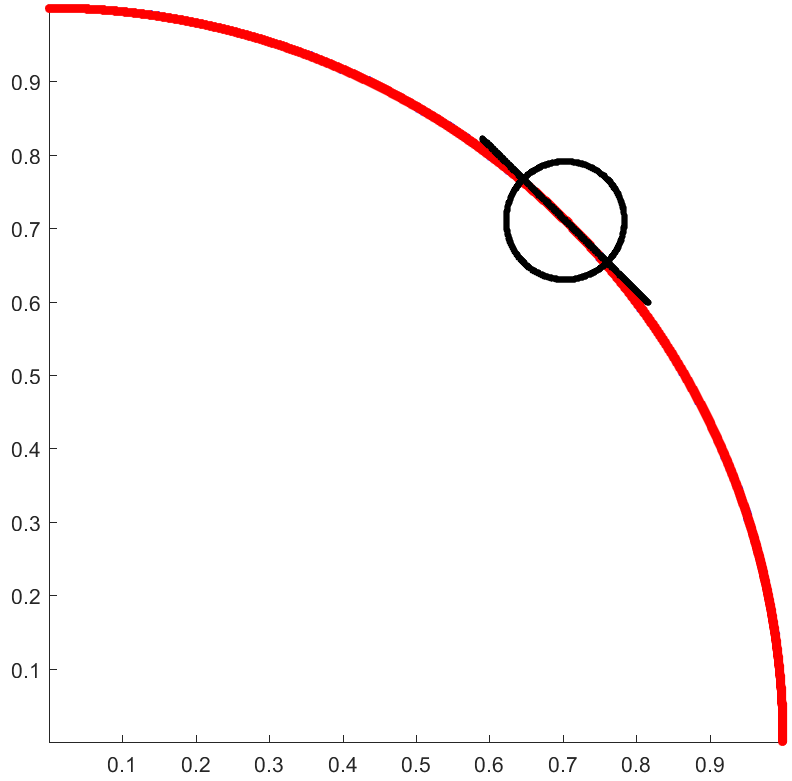} &  \hspace{-.5cm}
			\includegraphics[height=.15\textwidth]{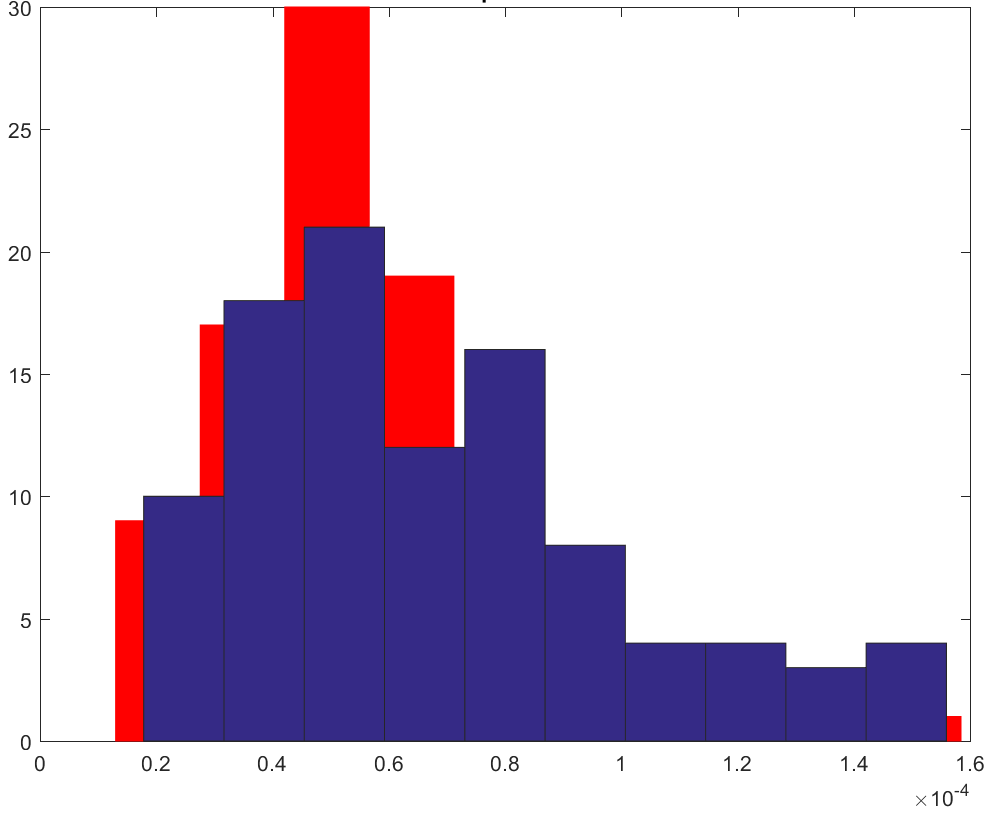}&  \hspace{-.5cm}
			\includegraphics[height=.15\textwidth]{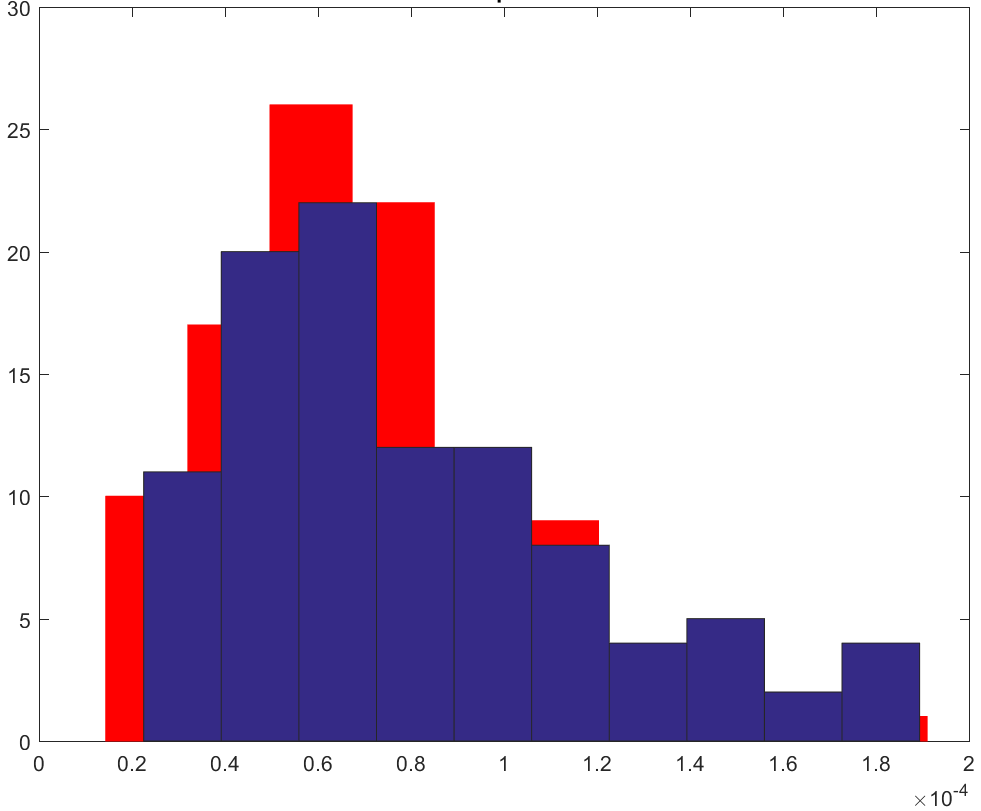} &
					\includegraphics[height=.15\textwidth]{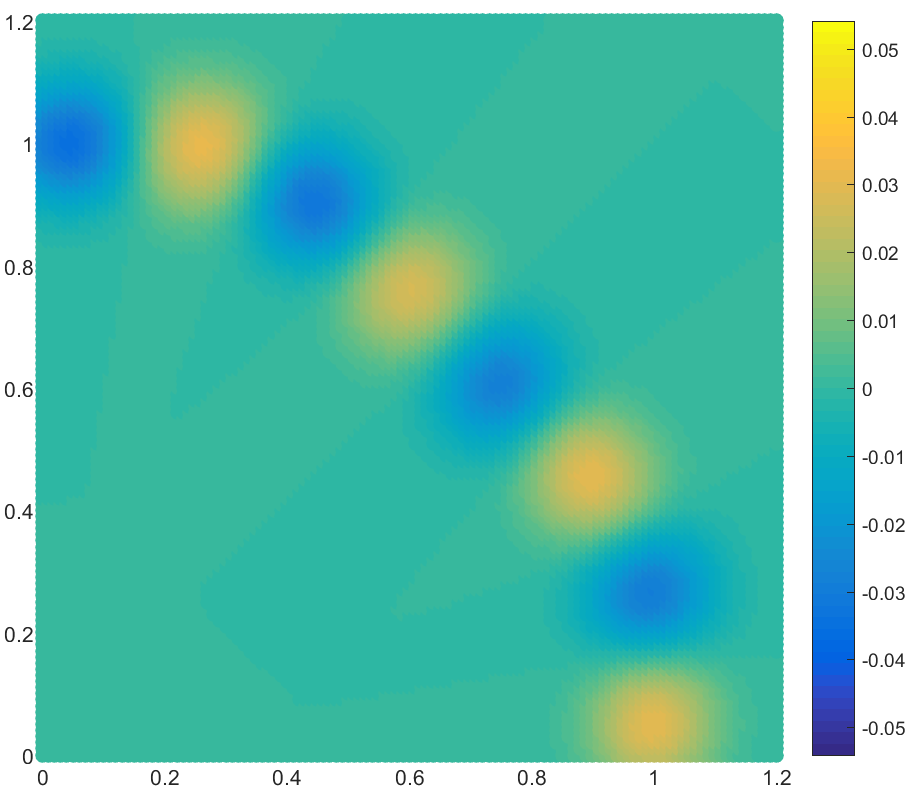} &\hspace{-1cm}
					\includegraphics[height=.15\textwidth]{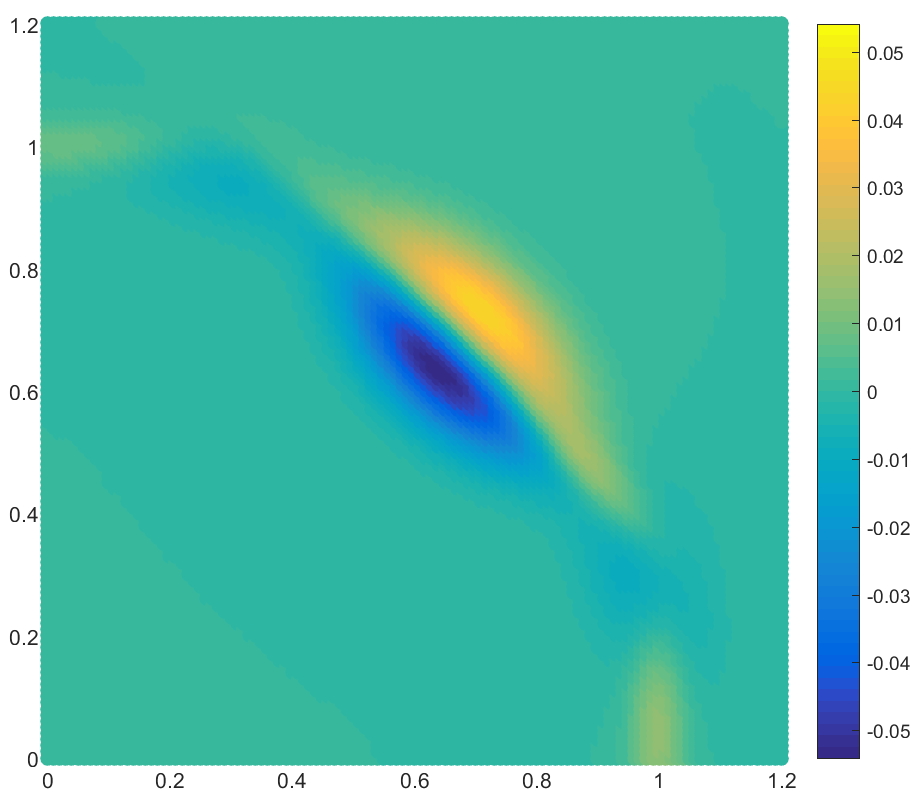} \\
					B1&  \hspace{-1cm} B2 & \hspace{-1cm} B3& 
							C3& \hspace{-1cm} C4\\
			\includegraphics[height=.15\textwidth]{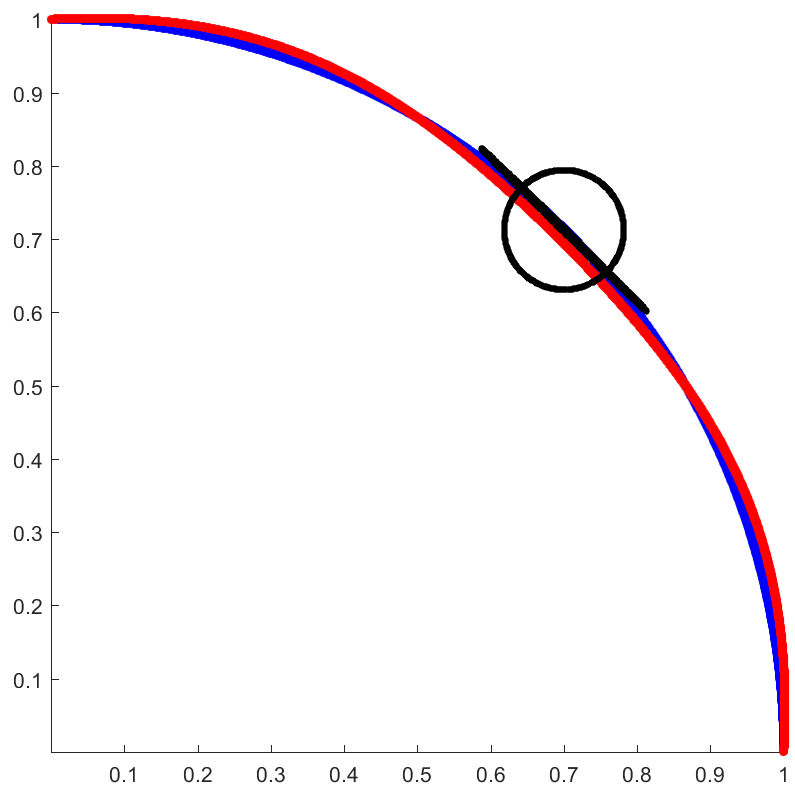} & \hspace{-.5cm}
			\includegraphics[height=.15\textwidth]{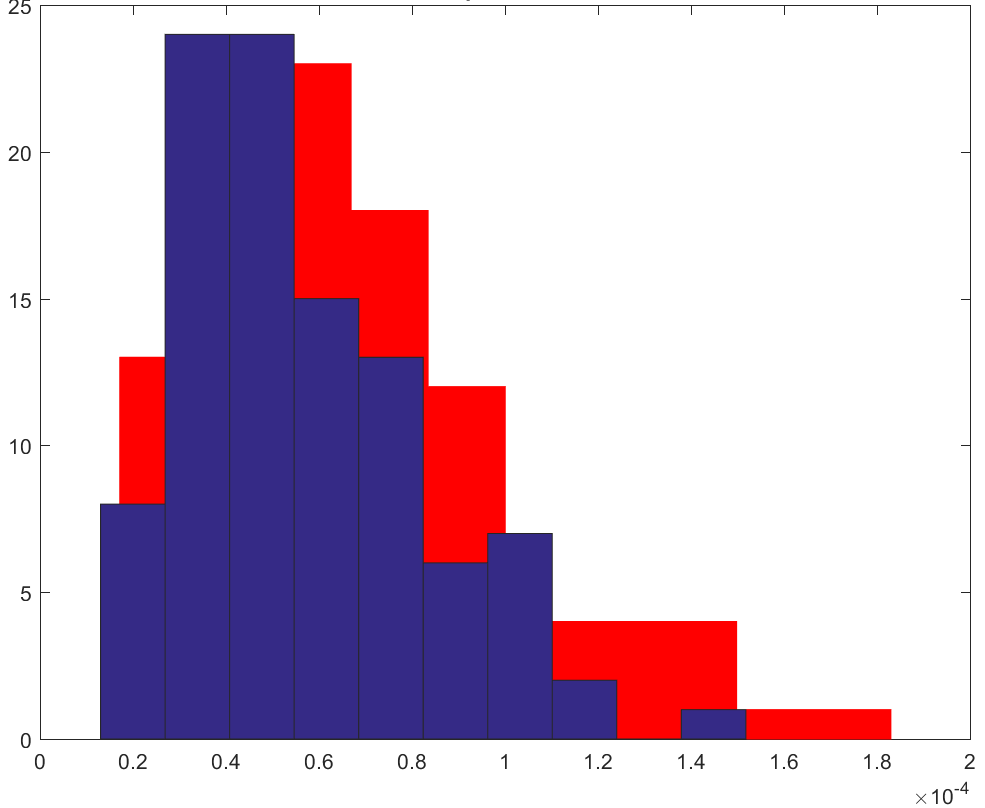}& \hspace{-.5cm}
			\includegraphics[height=.15\textwidth]{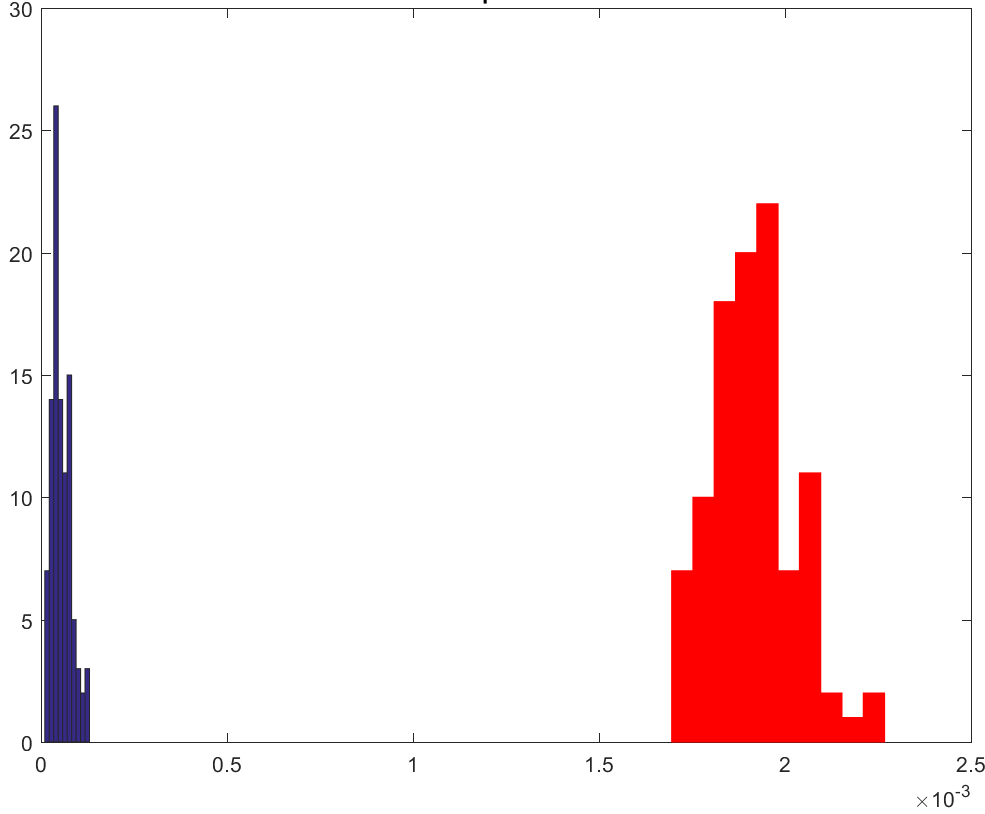} &
					\includegraphics[height=.15\textwidth]{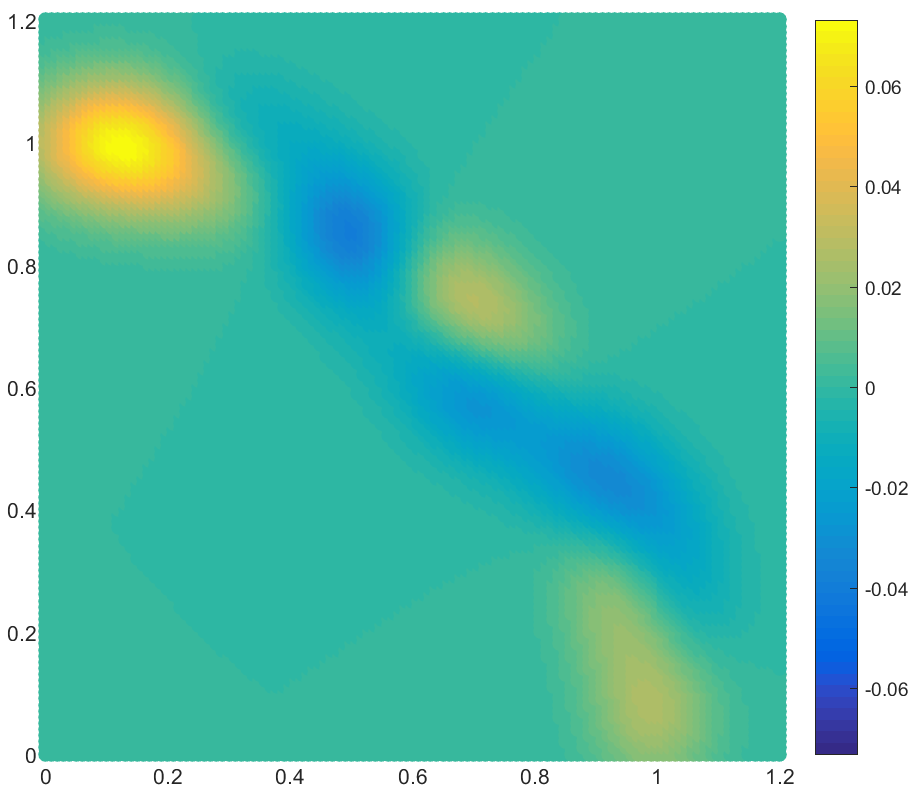}  &\hspace{-1cm}
					\includegraphics[height=.15\textwidth]{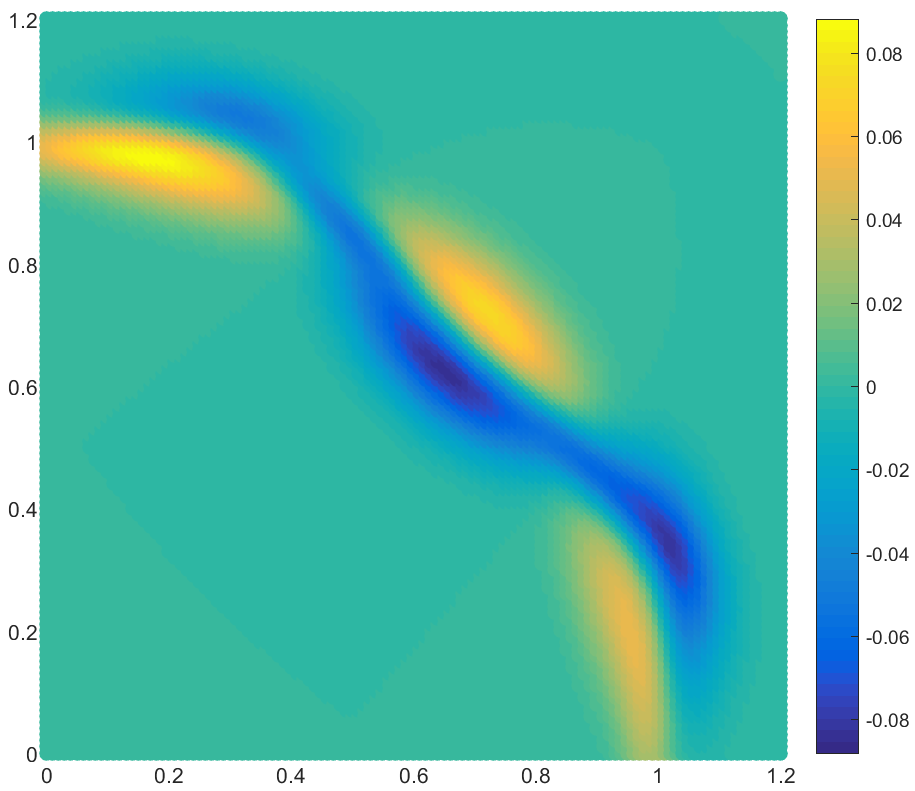}\\
					B4& \hspace{-1cm} B5 & \hspace{-1cm} B6& 
							C5& \hspace{-1cm} C6\\
		\end{tabular}
	\end{center}
	\caption{
		{\bf A1-A3}: 
		(left) $X$ and $Y$ sampled from curves with separation delta and $n_1 = n_2 = 500$, (middle) location of $50$ reference points, (right) $\hat{h}_X(r)$ and $\hat{h}_Y(r)$ for anisotropic kernel.
		{\bf B1-B6}: 
		(left) representative isotropic and anisotropic bins, (middle) histograms $T_0$ (blue) and $T_1$ (red) with isotropic kernel, 
(right) histograms with anisotropic kernel, (top) $\delta=0$, (bottom) $\delta=0.01$.
		{\bf C1-C6}: 
		(left) isotropic kernel, (right) anisotropic kernel, (top) Heatmap of $\psi_3$ as singular functions of kernel, (middle) Heatmap of $\psi_8$, (bottom) witness function.  $\delta=0.05$, and $\psi_k$ agree up to $k=6$.
%
	}\label{fig:twoCircleData}
\end{figure}

Let $p$ and $q$ be two distributions supported on a compact set $\Omega \subset \R^d$.
Suppose that a reference set $R$ is given, and for each point $r\in R$ there is a (non-degenerate) covariance matrix $\Sigma_r$ (e.g. computed by local PCA). We define the asymmetric affinity kernel to be 
\begin{equation}
\label{eq:asymKernel}
a(r,x)=e^{-\|r-x\|^2_{\Sigma_r}}
=\exp\left\{-\frac{1}{2}(x-r)^T\Sigma_r^{-1}(x-r)\right\},
\quad \forall r\in R, x\in \Omega.
\end{equation}
Consider the two independent datasets $X \sim p$ and $Y \sim q$, where $X$ has $n_1$ i.i.d. samples and $Y$ has $n_2$ i.i.d. samples.
The empirical histograms of $X$ and $Y$ at the reference point $r$ are defined as 
\begin{equation}\label{eq:hathXhY}
\hat{h}_X(r) =\frac{1}{n_1}\sum_{i=1}^{n_1} a(r,X_i), \quad
\hat{h}_Y(r) =\frac{1}{n_2}\sum_{j=1}^{n_2} a(r,Y_j), 
\end{equation}
for which the population quantities are
\begin{equation}\label{eq:hphq}
h_p(r) = \int a(r,x)p(x)dx, \quad
h_q(r) =  \int a(r,y)q(y)dy.
\end{equation}
The empirical histograms are nothing else but the Gaussian binning of $X$ and $Y$ at point $r$ with the anisotropic bins corresponding to the covariance matrix $\Sigma_r$. 
We then compute the quantity
\begin{eqnarray}\label{eq:asymMMD}
T = \frac{1}{n_R}\sum_{r \in R} ( \hat{h}_X(r) - \hat{h}_Y(r))^2
\end{eqnarray}
as a measurement of the (squared) distance between the two datasets.

We use the following example to illustrate the difference between 
(1) using anisotropic kernel where $\Sigma_r$ is aligned with the tangent space of the manifold data, 
and 
(2) using the isotropic ones where $\Sigma_r$ is a multiple of the identity matrix. 

The data is like in Figure \ref{fig:twoCircleData}, 
where $p$ and $q$ are supported on two arcs in $\R^2$ separated by a gap of size $\delta$ at various regions.  
We begin by specifying a reference set $R$ and the covariance field $\{\Sigma_r\}_{r\in R}$. 
For simplicity, we do this by uniformly sampling $n_R = 50$ reference points from $X \cup Y$ (see Figure \ref{fig:twoCircleData}). 
At each reference point $r$, we take $k = 50$ neighbors $\{x_{i_j}\}_{j=1}^{k}$ to estimate the local covariance matrix
by $\Sigma_r = \frac{1}{k} \sum_{i=1}^k (x_{j_i} - r)(x_{j_i} - r)^T$. 
The empirical histograms $\hat{h}_X(r)$ and  $\hat{h}_Y(r)$ are computed as in Eqn. (\ref{eq:hathXhY}) at every point $r$ (see Figure \ref{fig:twoCircleData}),
as well as the quantity $T$ as in Eqn. \eqref{eq:asymMMD}. 
We also compute $T$ under a permutation of the data points in $X$ and $Y$ so as to mimic the null hypothesis $p=q$,
and we call the two values $T_1$ and $T_0$ respectively.
The experiment is repeated 100 times, where $n_1=n_2=2000$, 
and the distribution of $T_1$ and $T_0$ are shown as red and blue bars in Figure \ref{fig:twoCircleData}.
The simulation is done across three datasets where $\delta$ takes value as $\{0, 0.01, 0.05\}$,
and we compare  isotropic and anisotropic kernels. 
When $\delta=0$,  the distributions of $T_0$ and $T_1$ overlay each other, as expected.
When $\delta>0$, greater separation between distributions of   $T_0$ and $T_1$ implies greater power for the test.
The advantage of the anisotropic kernel is clearly demonstrated, particularly when $\delta = 0.01$ (the middle row).

The analysis of the testing power of $T$ hinges on the singular value decomposition of $a(r,x)$,
formally written as  $a(r,x) = \sum_k \sigma_k \phi_k(r) \psi_k(x)$ and will be defined in Section \ref{sec:theory}.
The histogram $\hat{h}_X(r)$ 
thus becomes 
$$
\hat{h}_X(r)
= \frac{1}{n_1}\sum_{i=1}^{n_1} a(r,X_i)
= \sum_k \sigma_k \phi_k(r) \left( \frac{1}{n_1} \sum_i \psi_k(x_i) \right),
$$
and similarly for $\hat{h}_Y(r)$,
which means that the ability of $T$ to distinguish $p$ and $q$ is determined by the amount that $X$ and $Y$ differ when projected onto the singular functions $\psi_k$. 
For the example above, 
the first few singular functions are visualized in Figure \ref{fig:twoCircleData}, 
where the $\psi_k$'s of the anisotropic kernel project along the directions where $q$ and $p$ deviate at a lower index of $k$, 
thus contributing more significantly to the quantity $T$.  
Figure \ref{fig:twoCircleData} also shows the ``witness function'' 
(\cite{gretton2012kernel}, c.f. Section \ref{subsec:witness}) 
of kernels, 
which indicates the regions of deviation between $p$ and $q$.

Throughout the paper, we refer to the use of a local Mahalanobis distance with $\Sigma_r\neq c \cdot I$ as an anisotropic kernel and $\Sigma_r = c\cdot I$ as an isotropic kernel.  
Similarly, we refer to a kernel measuring affinity from all points to a reference set (i.e. $\R^d\times \Omega\rightarrow \R$) as an asymmetric kernel. 
The analysis in Section \ref{sec:theory} is for the symmetrized version of the kernel $k(x,y) = \int a(r,x) a(r,y) d\mu_R(r)$,
while in practice one never computes the $n$-by-$n$ kernel 
but only the $n_R$-by-$n$ asymmetric kernel $a(r,x)$ which is  equivalent and way more efficient. 
We discuss more about computation in Section \ref{sec:compcomplexity}.

\subsection{Related Work}\label{sec:review}

The question of two sample testing is a central problem in statistics.  In one dimension, one classical approach to two sample testing is the Kolmogorov-Smirnov distance, which compares the $L^\infty$ distance between the two empirical cumulative distribution functions
\cite{kolmogorov,smirnov1948table}.  While there exist generalizations of these infinitely supported bins in high dimensions \cite{bickel1969distribution,friedman1979multivariate, rosenbaum2005exact,hall2002permutation}, these require a large computational cost for either computing a minimum spanning tree or running a large number of randomized tests.  This warranted binning functions that are defined more locally and in a data-adaptive fashion.  Another high-dimensional extension of Kolmogorov-Smirnov is to randomly project the data into a low-dimensional space and compute the test in each dimension.  

The 1D Kolmogorov-Smirnov statistic can be seen as a special case of the MMD discrepancy, which is generally defined as 
\[
\text{MMD}(p,q; {\cal F}) = \sup_{f \in {\cal F}} \int f(x) (p(x)-q(x))dx,
\]
where ${\cal F}$ is certain family of integrable functions. When ${\cal F}$ equals the set of all indicator functions of intervals $(-\infty, t)$ in $\R$, the MMD discrepancy gives the Kolmogorov-Smirnov distance.
Kernel-based MMD has been studied in \cite{gretton2012kernel}, where the function class ${\cal F}$ consists of all functions s.t. $\|f\|_{\cal H} \le 1$, where $\|\cdot \|_{\cal H}$ indicates the norm of the Hilbert space associated with the reproducing kernel. Specifically, suppose the PSD kernel is $k(x,y)$, the (squared) RKHS MMD can be written as 
\begin{equation}\label{eq:MMD2RKHS}
\text{MMD}^2(p,q) = \int\int k(x,y) (p(x)-q(x)) (p(y)-q(y))dx dy,
\end{equation}
and can be estimated by (here we refer to the biased estimator in \cite{gretton2012kernel} which includes the diagonal terms)
\begin{eqnarray*}
	\text{MMD}^2(X,Y) 
	= \frac{1}{n_1^2}\sum_{x,x'\in X} k(x,x') + \frac{1}{n_2^2}\sum_{y,y'\in Y} k(y,y') - \frac{2}{n_1n_2}\sum_{x\in X,y\in Y} k(x,y).
\end{eqnarray*}
The methodology and theory apply to any dimensional data as long as the kernel can be evaluated.

We consider a kernel $k(x,y)$ of the form $k(x,y)=\int a(r,x)a(r,y)d\mu_R(r)$ and its variants, where $d\mu_R$ is certain measure of the reference points.  This can be seen as a special case of RKHS MMD \cite{gretton2012kernel}, which considers a general PSD kernel $k:X\times X \rightarrow \R^+$.
When $a(r,x) = a(r-x)$ and is isotropic,  $d\mu_R$ is the Lebesgue measure, 
$k(x,y)$ is reduced to gaussian kernel.  However, returning to the asymmetric kernel as we do, allows us to easily redefine the local geometry around reference points and incorporate the local dimensionality reduction in \eqref{eq:asymKernel}.  
While the asymmetric kernel requires the additional technical assumption that the eigenmodes of the kernel do not vanish on the support of $p$ and $q$, 
which is discussed in \cite{gretton2012kernel} in reference to isotropic Parzen windows, 
the construction yields a more powerful test for distributions that are concentrated near locally low-dimensional structures.  
Theoretically, our analysis gives a comparison of the testing power of different kernels,
which turns out to be determined by the spectral decomposition of the kernels.  
This analysis augments that in \cite{gretton2012kernel}:
the asymptotic results in \cite{gretton2012kernel} does not imply testing power in high dimensions, 
as pointed out later by \cite{ramdas2015decreasing}.
Our analysis makes use of the spectral decomposition of the kernel with respect to the data distribution. 
While the empirical spectral expansion of translation-invariant kernels
has been previously used to derive two-sample statistics,  
e.g. in \cite{fernandez2008test} and more recently in \cite{chwialkowski2015fast, zhao2015fastmmd}, 
and the idea dates back to earlier statistical works (see e.g. \cite{epps1986omnibus}), 
our setting is different due to the new construction of the kernel.

We generalize the construction by considering a family of kernels 
via a ``spectral filtering'',
which truncates the spectral decomposition of the anisotropic kernels and modifies the eigenvalues, 
c.f. Section \ref{sec:specfilter}.
The modified kernel may lead to improved testing power for certain departures.
The problem of optimizing over kernels has been considered by \cite{gretton2012optimal}, 
where one constructs a convex combination of a finite number of kernels drawn from a given family of kernels.  The family of kernels considered in \cite{gretton2012optimal} are isotropic kernels, and possibly linear time computed kernels with high variance, which has the effect of choosing better spectral filters for separating two distributions.  However, because the kernels we consider are anisotropic, they lie outside the family of kernels considered in \cite{gretton2012optimal} and, in particular, have fundamentally different eigenfunctions over which we build linear combinations.  Also, building spectral filters directly on the eigenfunctions yields a richer set of filters than those that can be constructed by finite convex combinations.

Our approach is also closely related to the previous study of the distribution distance based on kernel density estimation \cite{anderson1994two}. 
We generalize the results  in \cite{anderson1994two} by considering non-translation-invariant kernels, which greatly increases the separation between the expectation of $T_n$ under the null hypothesis and the expectation of $T_n$ under an alternative hypothesis.  
Moreover, it is well-known that kernel density estimation, which \cite{anderson1994two} is based on, converges poorly in high dimension. 
In the manifold setting, the problem was remedied by normalizing the (isometric) kernel in a modified way 
and the estimation accuracy was shown to only depend on the intrinsic dimension \cite{ozakin2009submanifold}. 
Our proposed approach takes extra advantage of the locally low dimensional structure, and obtains improved distinguishing power compared to the one using isotropic kernels when possible.

At last, the proposed approach can be viewed as related to two sample testing via nearest neighbors \cite{henze1988multivariate}.  
In \cite{henze1988multivariate}, one computes the nearest neighbors of a reference point $r$ to the data $X\cup Y$ and derives a statistical test based on the amount the empirical ratio $\frac{k_X}{k_X + k_Y}$, where $k_X$ is the number of neighbors from $X$ (similarly $k_Y$), deviates from the expected ratio under the null hypothesis, namely $\frac{n_1}{n_1 + n_2}$.  Because the nearest neighbor algorithm is based on Euclidean distance, it is equivalent to a kernel-based MMD with a hard-thresholded isotropic kernel. The approach can be similarly combined with a local Mahalanobis distance as we do, which has not been explored.

%

\section{MMD Test Statistics and Witness Functions}\label{sec:2}

Given two independent datasets $X$ and $Y$, 
where $X$ has $n_{1}$ i.i.d. samples drawn from distribution
$p$, and $Y$ has $n_{2}$ i.i.d samples drawn from $q$,
we aim to 
(i) test the hypothesis $p = q$ against the alternative, 
and 
(ii) when $q\neq p$, 
detect where the two distributions differ. 
We assume that $p$ and $q$ are supported on compact subset of $\mathbb{R}^{d}$,
and both distributions have continuous probability densities,
so we also use $p$ and $q$ to denote the densities and write integration w.r.t. $dp(x)$ as $p(x)dx$
and similarly for $q$.

As suggested in Section \ref{subsec:main-idea},
the reference set $R$ and the covariance field $\{\Sigma_r \}_{r \in R}$ are important for the construction of the anisotropic kernel. 
In this section and next,
we assume that $R$ is given and $\{\Sigma_r \}_r$ is pre-defined. 
In practice, 
$R$ will be computed by a preprocessing procedure,
and $\Sigma_r$ can be estimated by local PCA if not given  {\it a priori}, 
c.f. Section \ref{sec:practical}.

\subsection{Kernel MMD statistics}

Using the  kernel $a(r,x)$ defined in  \eqref{eq:asymKernel}, 
we consider the following empirical statistic: 
\begin{equation}
\hat{T}_{L^{2}} =\int|\hat{h}_{X}(r)-\hat{h}_{Y}(r)|^{2}d\mu_{R}(r),
\label{eq:hatT-l2}
\end{equation}
where $\hat{h}_{X}(r)$ and $\hat{h}_{Y}(r)$ are defined in \eqref{eq:hathXhY}. 
Note that \eqref{eq:hatT-l2} assumes the measure $\mu_{R}$ 
along with the  covariance field $\{\Sigma_r \}_{r}$ needed in \eqref{eq:asymKernel}.  
$\mu_R$ can be any distribution in general,
and in practice, 
it is an empirical distribution over the finite set $R$, i.e. $d\mu_{R}=\frac{1}{n_{R}}\sum_{r\in R}\delta_{r}$,
where $n_{R}$ is the number of points in $R$. 
For now we leave $\mu_{R}$ to be general.
The population statistic corresponding to 
\eqref{eq:hatT-l2} 
is  
\begin{equation}\label{eq:def-T-L2}
T_{L^{2}}=\int|h_{p}(r)-h_{q}(r)|^{2}d\mu_{R}(r).
\end{equation}
$\hat{T}_{L^{2}}$ can be viewed as a special form of RKHS MMD: 
by \eqref{eq:MMD2RKHS}, 
\eqref{eq:hatT-l2} is the (squared) RKHS MMD with the kernel 
\begin{equation}\label{eq:kernel2}
k_{L^{2}}(x,y)=\int a(r,x)a(r,y)d\mu_{R}(r).
\end{equation}
The kernel \eqref{eq:kernel2} is 
clearly
positive semi-definite (PSD), 
however, not necessarily ``universal'', 
meaning
that the population MMD as in \eqref{eq:def-T-L2} 
being zero does not guarantee
that $q=p$.
The test is thus restricted to the departures within the Hilbert space
(Assumption \ref{assump:A2}). 

We introduce a spectral decomposition of the kernel $k_{L^2}$ based upon that of the asymmetric kernel $a(r,x)$,
which sheds light on the analysis:
Let $d\mu(x)$ be a distribution of data point $x$
(which is a mixture of $p$ and $q$ to be specified later).
Since $a(r,x)$ is bounded by 1, 
so is the integral $\int\int a(r,x)^{2}d\mu_{R}(r)d\mu(x)$,
and thus the asymetric kernel is Hilbert-Schmidt and the integral
operator is compact. 
The singular value decomposition of $a(r,x)$
with respect to $d\mu_{R}$ and $d\mu$ can be written as 
\begin{equation}\label{eq:svd-a}
a(r,x)=\sum_{k}\sigma_{k}\phi_{k}(r)\psi_{k}(x),
\end{equation}
where $\sigma_{k}>0$, $\{\phi_{k}\}_{k}$ and $\{\psi_{k}\}$ are
ortho-normal sets w.r.t $d\mu_{R}(r)$ and $d\mu(x)$ respectively.
Then \eqref{eq:kernel2} can be written as 
\begin{equation}\label{eq:kernel-L2}
k_{L^{2}}(x,y)=\sum_{k}\sigma_{k}^{2}\psi_{k}(x)\psi_{k}(y).
\end{equation}
This formula suggests that the ability of the kernel MMD to distinguish
$p$ and $q$ is determined by 
(i) how discriminative the eigenfunctions $\psi_k$ are (viewed as ``feature extractors''),
and 
(ii) how the spectrum $\sigma_k^2$ decay (viewed as ``weights'' to combine the $L^2$ differences extracted per mode).
It also naturally leads to generalizing the definition by modify the weights $\sigma_k^2$, which is next.

\subsection{Generalized kernel and spectral filtering}\label{sec:specfilter}

We consider the situation 
where the distributions $p$ and $q$ lie around certain lower-dimensional manifolds in the ambient space,
and both densities are smooth with respect to the manifold and decay off-manifold,
which is typically encountered in the applications of interest (c.f. Section \ref{sec:applications}). 
Meanwhile, since the reference set is sampled near the data,
$\mu_{R}$ is also centered around the manifold. 
Thus one would expect the population histograms $h_{p}(r)$ and
$h_{q}(r)$ to be smooth on the manifold as well.
This suggests building a ``low-pass-filter'' for the empirical
histograms 
before computing the $L^{2}$ distance between them, namely the MMD statistic.

We thus introduce a general form of kernel as  
\begin{equation}\label{eq:kernel-spec}
k_{\text{spec}}(x,y)=\sum_{k}f_{k}\psi_{k}(x)\psi_{k}(y)
\end{equation}
where $f_{k}$ is a sequence of sufficiently decaying positive numbers,
the requirement to be detailed in Section \ref{sec:theory}.
Our analysis will be based on kernels in form of \eqref{eq:kernel-spec},
which includes $k_{L^2}$ as a special case when $f_k = \sigma_k^2$.
While the eigenfunctions $\psi_k$ are generally not analytically available,
to compute the MMD statistics one only needs
to evaluate $\psi_k$'s on the data points in $X \cup Y$, 
which can be approximated by the 
empirical singular vectors of the $n_{R}$-by-$(n_{1}+n_{2})$ kernel matrix
$\{ a(r,x) \}_{r \in R, \, x \in X \cup Y}$
and computed efficiently for MMD tests (c.f.  Section \ref{sec:practical}). 
Note that the approximation by empirical spectral decomposition may degrade as $k$ increases,
however,
for the purpose of smoothing histograms one typically assigns small values
of $f_{k}$ for large $k$ so as to suppress the ``high-frequency components''.

The construction \eqref{eq:kernel-spec} gives a large family of kernels and is versatile:
First, 
setting $f_k=\sigma_k^{2(m+1)}$
for some positive integer $m$
is equivalent to using the kernel as
$
k(x,y)  = \int \int L(r,r') a(r,x) a(r,y) d\mu_R(r)  d\mu_R(r')
$
where 
$L(r,r') = ( \int a(r,x)a(r',x)d\mu(x) )^m$.
This can be interpreted as redefining the affinity of points $x$ and $y$ by allowing $m$-steps of
``intermediate diffusion'' on the reference set, and thus ``on the data'' \cite{coifman2005geometric}.
When $d \mu_R$ is uniform over the whole ambient space, 
raising $m$ is equivalent to enlarging the bandwidth of the gaussian kernel. 
However, when $\mu_R$ is chosen to adapt to the densities $p$ and $q$,
then the kernel becomes a data-distribution-adapted object. 
As $m$ increases, $f_k$ decays rapidly, 
which ``filters out" high-frequency components in the histograms when computing the MMD statistic,
because 
$
T_n = \int \int L(r,r') (\hat{h}_X(r)-\hat{h}_Y(r))
				(\hat{h}_X(r')-\hat{h}_Y(r')) d\mu_R(r)  d\mu_R(r')$.
Generally, setting $f_k$ to be a decaying sequence has the effect of spectral filtering.
Second,
in the case that prior knowledge about the magnitude of the projection
$\int(p(x)-q(x))\psi_{k}(x)d\mu(x)$ is available, 
one may also choose $f_{k}$ accordingly to select the ``important modes''. 
Furthermore, one may view the kernel MMD with \eqref{eq:kernel-spec} 
as a weighted squared-distance statistics after projecting to the spectral coordinates by $\{ \psi_k\}_k$,
where the coordinates are uncorrelated thanks to the orthogonality
of $\psi_{k}$. We further discuss the possible generalizations in the last section. 
The paper is mainly concerned with the kernel $k_{\text{spec}}$, including $k_{L^{2}}$, with anisotropic $a(r,x)$,
while 
the above extension of MMD may be of interest even when $a(r,x)$ is isotropic.

\subsection{Witness functions}\label{subsec:witness}

Following the convection of RKHS MMD \cite{gretton2012kernel}, 
the ``witness function'' $w(x)$ is defined as
\[
w(x) := 
\arg \max_{w \in{\cal H},\,\|w\|_{{\cal H}}=1}\langle w,\mu_{p}-\mu_{q}\rangle_{{\cal H}}
\]
where $\mu_{p}$ and $\mu_{q}$ are the mean embedding of $p$ and
$q$ in ${\cal H}$ respectively. By Riez representation, $w$ equals
$(\mu_{p}-\mu_{q})$ multiplied by a constant. We will thus consider
$\mu_{p}-\mu_{q}$ as the witness function. 
By definition, $\mu_{p}(x)=\int k(x,y)p(y)dy$,
and similarly for $\mu_{q}$, thus $w(x)=\int k(x,y)(p(y)-q(y))dy$. 
We then have that for the statistic $T_{L^{2}}$, 
\begin{equation}\label{eq:wL2population}
w_{L^{2}}(x)=\int a(r,x)(h_{p}(r)-h_{q}(r))d\mu_{R}(r),
\end{equation}
and generally for $T_{\text{spec}}$, 
\begin{equation}\label{eq:wspecpopulation}
w_{\text{spec}}(x)=\sum_{k}f_{k}\psi_{k}(x)\int\psi_{k}(y)(p(y)-q(y))dy.
\end{equation}
The computation of empirical witness functions will be discussed in Section \ref{subsec:witness_empirical}.

Although the witness function is not an estimator of the difference $(p-q)$ ,
 it gives a measurement of how $q$ and $p$ deviate at local places.
This can be useful for user interpretation of the two sample test, 
  as shown in Section \ref{sec:applications}.
 The witness function augments the MMD statistic, which is a global quantity.

\subsection{Kernel parameters}\label{subsec:kernel-para}

If the reference distribution $\mu_R$ and the covariance field $\Sigma_r$ are given,
there is no tuning parameter to compute $k_{L^2}$-MMD (c.f. Algorithm \ref{method:L2}).
To compute $k_{\text{spec}}$-MMD with general $f_k$, 
which is truncated to be of finite rank $r_f$,
the number $r_f$ and the values $\{f_k \}_{k=1}^{r_f}$ are tunable parameters (c.f. Algorithm \ref{method:spec}).

If the covariance field $\Sigma_r$ is provided up to a global scaling constant by $\Sigma_r^{(0)}$, 
e.g., when estimated from local PCA,
which means that one uses $\Sigma_r = \rho \Sigma_r^{(0)}$ for some $\rho>0$ in \eqref{eq:asymKernel},
then this $\rho$ is a parameter which needs to be determined in practice.

Generally speaking, 
one may view the reference set distribution $\mu_R$ and the covariance field $\{ \Sigma_r \}_r$ 
as ``parameters'' of the MMD kernel.
The optimization of these parameters surely have an impact on the power of the MMD statistics,
and a full analysis goes beyond the scope of the current paper. 
At the same time, there are important application scenarios where pre-defined $\mu_R$ and $\{ \Sigma_r \}_r$ 
are available, e.g., the diffusion MRI imaging data (Section \ref{subsec:diffuion-MRI}).
We thus proceed with the simplified setting by by assuming pre-defined $\mu_R$ and $\{ \Sigma_r \}_r$,
and focusing on the effects of anisotropic kernel and re-weighted spectrum.

%

\section{Analysis of Testing Power}\label{sec:theory}

We consider the population MMD statistic $T$ of the following form
\begin{equation}
T(p,q)=\int\int k(x,y)(p(x)-q(x))(p(y)-q(y))dxdy,\label{eq:Tpopulation}
\end{equation}
where $k = k_{\text{spec}}$ as in \eqref{eq:kernel-spec}, and particularly $k_{L^2}$ as in \eqref{eq:kernel-L2}.
 The empirical version is
\begin{equation}\label{eq:Tempirical}
T_{n}(X,Y)=\int\int k(x,y)(\hat{p}_{X}(x)-\hat{q}_{Y}(x))(\hat{p}_{X}(y)-\hat{q}_{Y}(y))dxdy,
\end{equation}
where $\hat{p}_{X}=\frac{1}{n_{1}}\sum_{i}\delta_{x_{i}}$ and $\hat{q}_{Y}=\frac{1}{n_{2}}\sum_{j}\delta_{y_{j}}$,
and $n=n_{1}+n_{2}$. We consider the limit where both $n_{1}$ and
$n_{2}$ go to infinity and proprotional to each other, in other words,
$n\to\infty$ and $\frac{n_{1}}{n}\to\rho_{1}\in(0,1)$, and $\frac{n_{2}}{n}\to\rho_{2}=1-\rho_{1}$.

We will show that, 
under generic assumptions on the kernel and the departure $q\neq p$, 
the test based on $T_{n}$ is asymptotically consistent, 
which means that the test power $\to 1$ as $n \to \infty$ 
(with controlled false-positive rate).
The asymptotic consistency holds 
when $q$ is allowed to depend on $n$ 
as long as  $\| q-p \|$ decays to 0 slower than $n^{-1/2}$.
We also 
provide a lower bound of the power based based upon Chebyshev which applies to the critical regime when $\tau$ 
is proportional to $n^{-1/2}$.
The analysis also provides a quantitative comparison of the testing power of different kernels.\

\subsection{Assumptions on the kernel}

Because $a(r,x)$ is uniformly bounded and Hilbert-Schmidt, 
$k_{L^{2}}(x,y)$ is positive semi-definite and compact, 
and thus can be expanded as in \eqref{eq:kernel-L2} where
$\{ \psi_k \}_k$ are a set of ortho-normal functions under $\mu=\rho_{1}p+\rho_{2}q$.
By that $0\le a(r,x)\le1$,
we also have that $k=k_{L^{2}}$ satisfies that
\begin{equation}\label{eqn:traceclass1}
0\le k(x,x)\le1,
\quad \forall x,
\end{equation}
and $|k(x,y)|\le1$ for any $(x,y)$.
Meanwhile, $k_{L^{2}}$ is continuous (by the continuity and uniformly boundedness of $a(r,x)$), 
which means that the series in \eqref{eq:kernel-L2}
converges uniformly and absolutely, and the eigenfunctions $\psi_{k}$
are continuous (Mercer's Theorem). 
Finally, \eqref{eqn:traceclass1} implies that the
operator is in the trace class, 
and specifically $\sum_{k} \sigma_{k}^2 \le1$.

These properties imply that when replacing $\sigma_k^2$ by $f_k$ 
as in  \eqref{eq:kernel-spec}, 
one can preserve the continuity and boundedness of the kernel.
We analyze kernels of the form as \eqref{eq:kernel-spec},
and assume the following properties.

\begin{assumption}
	\label{assump:A1}
	The $f_{k}$ in \eqref{eq:kernel-spec} satisfy that 
	$f_{k} \ge 0 $, $\sum_{k}f_{k}\le1$,
	and that the kernel $k(x,y)$ is PSD, 
	continuous, and $0 \le k(x,x) \le1 $ for all $x$.
\end{assumption}

As a result, 
Mercer Theorem applies to guarantee that the spectral expansion 
\eqref{eq:kernel-spec} converges uniformly and absolutely, 
and the operator is in the trace class.
Note that Assumption \ref{assump:A1} holds for a large class of kernels,
including all the important cases considered in this paper.
The previous argument shows that $k_{L^{2}}$ is covered.
Another important case is the finite-rank kernel:
that is,
$f_k = 0$ when $k > r_f$ for some positive integer $r_f$,
and $f_k$ can be any positive numbers when $k \le r_f$
such that $\sum_k f_k \le 1$ and $k(x,x)\le 1$ for all $x$.

For the MMD test to distinguish a deviation $q$ from $p$,
it needs to have $T(p,q) > 0$ for such $q$. 
We consider the family of alternatives $q$ which satisfies the following condition.

\begin{assumption}
	\label{assump:A2}
	When $q \neq p$, 
	there exists $k$ s.t. $ \int \psi_k(x) (p(x)-q(x))dx \neq 0$ and $f_k > 0$.
	In particular, if $f_k$ are strictly positive for all $k$,
	then $q$ satisfies that $h_p(r) - h_q(r) = \int a(r,x) (p(x)-q(x))dx$ 
	does not vanish w.r.t $d \mu_R$, i.e. $\int (h_p(r) - h_q(r) )^2 d\mu_R(r) > 0$.
\end{assumption}

The following proposition, proved in the Appendix, shows that $T(p,q) > 0$ for such deviated $q$.

\begin{proposition}\label{prop:equivalentA2}
	Notations as above, for a fixed $q \neq p$, the following are equivalent
	
	(i) $T(p,q)>0$,
	
	(ii) For some $k$, $\int\psi_{k}(x)(p(x)-q(x))dx\ne0$ and $f_{k}>0$.
	
	If $f_{k}>0$ for all $k$, then (i) is also equivalent to
	
	(iii) $\int (h_p(r) - h_q(r) )^2 d\mu_R(r) > 0$. 
\end{proposition}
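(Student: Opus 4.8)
The plan is to diagonalise $T(p,q)$ in the spectral coordinates $\{\psi_k\}$ and read off all three conditions from the resulting diagonal sum. Set $c_k:=\int\psi_k(x)(p(x)-q(x))\,dx$; since $p-q$ is a bounded, compactly supported function (a difference of continuous densities) and $\{\psi_k\}$ is orthonormal in $L^2(\mu)$ with $\mu=\rho_1 p+\rho_2 q$, the $c_k$ are the coefficients of $(p-q)/\mu\in L^2(\mu)$ against an orthonormal system, so in particular $(c_k)\in\ell^2$. Substituting the Mercer expansion $k(x,y)=\sum_k f_k\psi_k(x)\psi_k(y)$ from \eqref{eq:kernel-spec} into \eqref{eq:Tpopulation} and exchanging sum and integral yields
\begin{equation*}
T(p,q)=\sum_k f_k\,c_k^2 .
\end{equation*}
Every summand is nonnegative, so $T(p,q)>0$ precisely when $f_k c_k^2>0$ for some $k$, i.e.\ when there is a $k$ with $f_k>0$ and $c_k\neq0$; this is the equivalence (i)$\Leftrightarrow$(ii).

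For the remaining clause, suppose $f_k>0$ for every $k$. Using the SVD $a(r,x)=\sum_k\sigma_k\phi_k(r)\psi_k(x)$ of \eqref{eq:svd-a}, with $\sigma_k>0$ and $\{\phi_k\}$ orthonormal in $L^2(\mu_R)$, I would integrate the series termwise against $p-q$ to obtain $h_p(r)-h_q(r)=\sum_k\sigma_k c_k\,\phi_k(r)$, and then apply Parseval in $L^2(\mu_R)$:
\begin{equation*}
\int (h_p(r)-h_q(r))^2\,d\mu_R(r)=\sum_k\sigma_k^2 c_k^2 .
\end{equation*}
Since all $\sigma_k>0$, the left-hand side is strictly positive iff $c_k\neq0$ for some $k$, which — because all $f_k>0$ — is exactly condition (ii). Hence (iii)$\Leftrightarrow$(ii)$\Leftrightarrow$(i) in this case.

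The only step requiring care is the termwise integration used in both displays. For the first, absolute convergence of $\sum_k f_k|\psi_k(x)\psi_k(y)|$ follows from Cauchy--Schwarz together with $\sum_k f_k\psi_k(x)^2=k(x,x)\le1$ (Assumption \ref{assump:A1}), while $|(p-q)(x)(p-q)(y)|$ is integrable on the compact support, so Fubini--Tonelli applies; for the second, $\sum_k\sigma_k|\phi_k(r)\psi_k(x)|$ is controlled the same way using $\sum_k\sigma_k^2\psi_k(x)^2=k_{L^2}(x,x)\le1$ and $\sum_k\sigma_k^2\phi_k(r)^2=\int a(r,x)^2 d\mu(x)\le1$, and the Parseval step is legitimate because $\sum_k\sigma_k^2 c_k^2\le(\sup_k\sigma_k^2)\sum_k c_k^2<\infty$. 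I expect this bookkeeping to be the main (if routine) obstacle; the substantive content is simply that $T$, expressed in the $\psi_k$-basis, is the weighted square-sum $\sum_k f_k c_k^2$, after which all three statements are transparent.
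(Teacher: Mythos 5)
Your proof is correct and follows essentially the same route as the paper's: diagonalise $T(p,q)=\sum_k f_k c_k^2$ via the Mercer expansion for (i)$\Leftrightarrow$(ii), and use the SVD of $a(r,x)$ with Parseval to get $\int(h_p-h_q)^2d\mu_R=\sum_k\sigma_k^2c_k^2$ for (iii). The extra bookkeeping you supply on termwise integration and square-summability of the $c_k$ is sound but omitted in the paper as routine.
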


Note that $k_{L^{2}}$ satisfies $f_{k}>0$ for all $k$, thus (iii) applies.
The proposition says that $T_{L^2}$ distinguishes an alternative $q$ 
when $(p-q)$ lies in the subspace spanned by $\{\psi_{k} \}_k$,
and for general $k_{\text{spec}}$,
$(p-q)$ needs to lie in the subspace spanned by
$\{\psi_{k} | f_k > 0 \}$. 
These bases are usually not complete,
e.g., 
when the reference set has $n_{R}$ 
points then $\{\psi_k\}$ is of rank at most $n_{R}$. 
However,
when the measure $d\mu_{R}$ is continuous and smooth, 
the singular value decomposition \eqref{eq:svd-a} 
has a sufficiently decaying spectrum,
and the low-frequency $\psi_k$'s can be
efficiently approximated
with a drastic down sampling of $d\mu_{R}$ \cite{bermanis2013multiscale}. 
This means that, 
under certain conditions of $d\mu_{R}$ 
(sufficiently overlapping with $\mu$ and regularity),
after replacing the continuous $d\mu_{R}$ by a discrete sampling in constructing the kernel,
an alternative $q$ violates Assumption \ref{assump:A2} 
only when the departure $(q-p)$ 
lies entirely in the high-frequency modes w.r.t the original $d\mu_{R}$.
In the applications considered in this paper, 
these very high-frequency departures are rarely of interest to detect,
not to mention that estimating $\int\psi_{k}(x)(p(x)-q(x))dx$ for large $k$
lacks robustness with finite samples. 
Thus Assumption \ref{assump:A2} poses a mild constraint 
 for all practical purposes considered in this paper,
 even with the spectral filtering kernel which utilizes a possibly truncated sequence of $f_k$.

We note that, by viewing $\psi_{k}$ as general Fourier modes,
Assumption \ref{assump:A2} servers as the counterpart 
of the classical condition of the kernel ``having non-vanishing Fourier transforms on any interval'',
which is needed for the MMD distance with translation-invariant kernel 
to be a distance (Sec 2.4  \cite{anderson1994two}).

\subsection{The centered kernel}

We introduce the centered kernel $\tilde{k}(x,y)$ under $p$, defined as 
\begin{align}\label{eq:def-tildek}
\tilde{k}(x,y)=k(x,y)-k_{p}(x)-k_{p}(y)+k_{pp},
\end{align}
where $k_{p}(x)=\int k(x,y)p(y)dy$, 
and $k_{pp}=\int k_{p}(x)p(x)dx$.
The spectral decomposition of $\tilde{k}$ is the key quantity used in later analysis.

The following lemma shows that 
the MMD statistic, both the population and the
empirical version, 
remains the same if $k$ is replaced by $\tilde{k}$. 
The proof is by definition and details are omitted. 

\begin{lemma}\label{lemma:tildk-as-k}
	Notations as above, 
	\begin{align}
	T(p,q) 
	& =\int\tilde{k}(x,y)(p(x)-q(x))(p(y)-q(y))dxdy,
	 \label{eq:def-T-tildek}\\
	T_{n}(X,Y)
	& =\int\int\tilde{k}(x,y)(\hat{p}_{X}(x)-\hat{q}_{Y}(x))(\hat{p}_{X}(y)-\hat{q}_{Y}(y))dxdy.
	\label{eq:def-Tn-tildek}
	\end{align}
	In particular, under Assumption \ref{assump:A2} that $T(p,q) > 0$, then so is \eqref{eq:def-T-tildek}.
\end{lemma}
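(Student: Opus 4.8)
The plan is to exploit the single structural fact that both $p-q$ and $\hat{p}_X - \hat{q}_Y$ are signed measures of total mass zero: $\int (p(x)-q(x))\,dx = 1-1 = 0$, and likewise $\int (\hat{p}_X - \hat{q}_Y)(dx) = 0$, since $\hat{p}_X = \frac{1}{n_1}\sum_i \delta_{x_i}$ and $\hat{q}_Y = \frac{1}{n_2}\sum_j \delta_{y_j}$ are both probability measures. Because $a(r,x)$ is bounded by $1$ and $\mu_R$ is a probability measure, $k$ is bounded, so $k_p(x) = \int k(x,y)p(y)\,dy$ and $k_{pp} = \int k_p(x)p(x)\,dx$ are finite and all the interchanges of integration below are justified by Fubini.

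First I would substitute the definition \eqref{eq:def-tildek} into the right-hand side of \eqref{eq:def-T-tildek} and expand by linearity into four terms. The term carrying $k(x,y)$ reproduces $T(p,q)$ exactly as in \eqref{eq:Tpopulation}. Each of the remaining three terms factors into a product of one-dimensional integrals, and in every case at least one factor is $\int (p(z)-q(z))\,dz = 0$: the $-k_p(x)$ term gives $\big(\int k_p(x)(p(x)-q(x))\,dx\big)\cdot\big(\int (p(y)-q(y))\,dy\big)$, the $-k_p(y)$ term is symmetric to it, and the $k_{pp}$ term gives $k_{pp}\cdot\big(\int (p(x)-q(x))\,dx\big)\big(\int (p(y)-q(y))\,dy\big)$. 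Hence the three correction terms all vanish and \eqref{eq:def-T-tildek} follows.

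The empirical identity \eqref{eq:def-Tn-tildek} is obtained by the identical computation with $p$ replaced by $\hat{p}_X$ and $q$ by $\hat{q}_Y$ throughout; note that $\tilde{k}$ is still the kernel centered with respect to the population $p$, but that plays no role, since the only property used is that the signed measure $\hat{p}_X - \hat{q}_Y$ integrates to zero against the constant function, so the three extra terms again collapse. Finally, the last sentence of the lemma is immediate: under Assumption \ref{assump:A2}, Proposition \ref{prop:equivalentA2} gives $T(p,q) > 0$, and by the identity just established this number equals the right-hand side of \eqref{eq:def-T-tildek}.

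There is no genuine obstacle here; the only points deserving a line of care are the integrability and Fubini justification (covered by the boundedness of $k$) and making explicit that the centering is performed with respect to $p$ while the cancellation works for an arbitrary zero-mass signed measure, which is precisely why the same $\tilde{k}$ serves both the population and the empirical statements.
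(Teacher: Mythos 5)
Your proof is correct and is exactly the argument the paper has in mind when it says ``the proof is by definition and details are omitted'': expanding $\tilde{k}$ and observing that every correction term carries a factor $\int(p-q)\,dx=0$ or $\int(\hat{p}_X-\hat{q}_Y)(dx)=0$. Your added remarks on Fubini and on the centering being with respect to $p$ while the cancellation only needs a zero-mass signed measure are accurate and fill in the omitted details faithfully.
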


The kernel $\tilde{k}$ also inherits 
the following properties from $k$, proved in Appendix \ref{app:A}:

\begin{proposition}\label{prop:tildek}
	The kernel $\tilde{k}$, as an integral operator on the space with the measure $p(x)dx$, is PSD. 
	Under Assumption \ref{assump:A1},
	
	(1) $0\le\tilde{k}(x,x)\le 4$ for any $x$, and $|\tilde{k}(x,y)|\le 4$
	for any $x,y$,
	
	(2) $\tilde{k}$ is continuous,
	
	(3) $\tilde{k}$ is Hilbert-Schmidt as an operator on $L^{2}(\Omega,p(x)dx)$
	and is in the trace class. It has the spectral expansion
	\begin{equation}
	\tilde{k}(x,y)=\sum_{k}\tilde{\lambda}_{k}\tilde{\psi}_{k}(x)\tilde{\psi}_{k}(y),
	\quad 
	\tilde{\lambda}_k >0,
	\label{eq:tildek-spec}
	\end{equation}
	where $\int \tilde{\psi}_k(x) p(x)dx =0$, $\int\tilde{\psi}_{k}(x)\tilde{\psi}_{l}(x)p(x)dx=\delta_{kl}$.
	$\tilde{\psi}_{k}$ are continuous on $\Omega$, 
	$\{\tilde{\lambda}{}_{k}\}_{k}$ are both summable and square summable,
	and the series in \eqref{eq:tildek-spec} converges uniformly
	and absolutely.
	
	(4) The eigenfunctions $\tilde{\psi}_{k}$ are square integrable,
	and thus integrable, w.r.t. $q(y)dy$.
	Furthermore,
	$\sum_{k}\tilde{\lambda}_{k} \int\tilde{\psi}_{k}(y)^{2}q(y)dy \le 4$.
\end{proposition}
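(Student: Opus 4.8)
The plan is to establish the five claims in the order: (0) positive semi-definiteness, then (1)--(3) the boundedness/continuity/Mercer part, and finally the diagonal lower bound in (1) together with (4), using Mercer's theorem for $\tilde{k}$ exactly as the excerpt already does for $k$.

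\emph{PSD.} I would note that centering is the conjugation of the integral operator $K$ by the orthogonal projection $I-P$ on $L^2(\Omega,p(x)dx)$, where $P$ projects onto the constant function. Concretely, for $g\in L^2(p)$ with mean $\bar{g}=\int g(x)p(x)\,dx$, a direct expansion using $\int k(x,y)p(y)\,dy=k_p(x)$, the symmetry $\int k(x,y)p(x)\,dx=k_p(y)$, and $\int\int k\,p\,p=k_{pp}$ gives
\[
\int\int \tilde{k}(x,y)g(x)g(y)p(x)p(y)\,dx\,dy \;=\; \int\int k(x,y)\big(g(x)-\bar{g}\big)\big(g(y)-\bar{g}\big)p(x)p(y)\,dx\,dy \;\ge\; 0,
\]
the inequality being positive semi-definiteness of $k$ (Assumption \ref{assump:A1}). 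This also shows the constant function is in the null space of $\tilde{k}$, since $\int\tilde{k}(x,y)p(y)\,dy=k_p(x)-k_p(x)-k_{pp}+k_{pp}=0$; I will reuse this in (3).

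\emph{Bounds, continuity, Mercer.} From $0\le k(x,x)\le1$ and Cauchy--Schwarz applied to the Mercer series of $k$, $|k(x,y)|\le\sqrt{k(x,x)k(y,y)}\le1$, whence $|k_p(x)|\le1$, $|k_{pp}|\le1$, and the triangle inequality gives $|\tilde{k}(x,y)|\le4$. Continuity of $\tilde{k}$ follows from continuity of $k$, dominated convergence for $k_p(x)=\int k(x,y)p(y)\,dy$ ($|k(x,y)|\le1$, $p$ integrable), and $k_{pp}$ constant. Then $\tilde{k}$ is continuous, symmetric, PSD, bounded on the compact support of $p$, so $\int\int\tilde{k}^2\,p\,p\le16$ makes it Hilbert--Schmidt, hence compact self-adjoint, and Mercer's theorem yields the expansion \eqref{eq:tildek-spec} with $\tilde{\lambda}_k>0$, $\{\tilde{\psi}_k\}$ orthonormal in $L^2(p)$ and continuous on $\Omega$, the series converging uniformly and absolutely, and $\sum_k\tilde{\lambda}_k=\int\tilde{k}(x,x)p(x)\,dx\le4$; square summability is then immediate, e.g. $\sum_k\tilde{\lambda}_k^2\le(\sum_k\tilde{\lambda}_k)^2\le16$. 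Since each $\tilde{\psi}_k$ with $\tilde{\lambda}_k>0$ lies in the range of the self-adjoint operator, it is orthogonal to the null space, in particular to the constants, so $\int\tilde{\psi}_k(x)p(x)\,dx=0$.

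\emph{Diagonal lower bound and (4).} Evaluating the uniformly convergent Mercer series on the diagonal, $\tilde{k}(x,x)=\sum_k\tilde{\lambda}_k\tilde{\psi}_k(x)^2\ge0$, so $0\le\tilde{k}(x,x)\le4$, completing (1). Each $\tilde{\psi}_k$ is continuous on the compact $\Omega$, hence bounded, hence square integrable (and integrable) against the finite measure $q(y)\,dy$. For the final bound, the uniform convergence of $\sum_k\tilde{\lambda}_k\tilde{\psi}_k(y)^2=\tilde{k}(y,y)$ legitimizes interchanging sum and integral:
\[
\sum_k\tilde{\lambda}_k\int\tilde{\psi}_k(y)^2 q(y)\,dy \;=\; \int\Big(\sum_k\tilde{\lambda}_k\,\tilde{\psi}_k(y)^2\Big)q(y)\,dy \;=\; \int\tilde{k}(y,y)q(y)\,dy \;\le\; 4\int q(y)\,dy \;=\; 4.
\]
The only delicate point is the appeal to Mercer's theorem on $L^2(\Omega,p(x)dx)$ when $p$ is merely a continuous compactly supported density --- the same technical point already invoked for $k$ in the excerpt --- which is precisely what supplies the pointwise/uniform convergence used both for the diagonal identity and for the interchange in (4); everything else reduces to the triangle inequality, Cauchy--Schwarz, and dominated/monotone convergence.
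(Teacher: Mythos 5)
Your proposal is correct and follows essentially the same route as the paper's proof: a direct verification of positive semi-definiteness via the centering identity (your conjugation-by-$(I-P)$ formulation is just a cleaner packaging of the paper's quadruple-integral expansion), boundedness of $k_p$ and $k_{pp}$ for the bound $|\tilde{k}|\le 4$, Mercer's theorem on $L^2(\Omega,p(x)dx)$ for the expansion and the trace/Hilbert--Schmidt bounds $\sum_k\tilde{\lambda}_k\le 4$ and $\sum_k\tilde{\lambda}_k^2\le 16$, and the diagonal identity $\int\tilde{k}(y,y)q(y)dy\le 4$ for claim (4). The minor variations (orthogonality to the null space versus integrating the eigenvalue equation to get $\int\tilde{\psi}_k\,p=0$, and bounding $\sum_k\tilde{\lambda}_k^2$ by $(\sum_k\tilde{\lambda}_k)^2$ rather than by $\int\int\tilde{k}^2pp$) are immaterial.
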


To proceed, we define 
\begin{equation}
\tilde{v}_{k}:=\int\tilde{\psi}_{k}(x)(p(x)-q(x))dx,
\label{eq:tildevk}
\end{equation}
then by \eqref{eq:tildek-spec}, \eqref{eq:def-T-tildek}, \eqref{eq:def-Tn-tildek},
shortening the notation $T(p,q)$ as $T$, $T_{n}(X,Y)$ as $T_{n}$, 
we have that
\begin{align}
T
& =\sum_{k}\tilde{\lambda}_{k}\tilde{v}_{k}^{2}, 
\label{eq:T-lambdak-vk} \\
T_{n}
& =\sum_{k}\tilde{\lambda}_{k}\left(\frac{1}{n_{1}}\sum_{i}\tilde{\psi}_{k}(x_{i})-\frac{1}{n_{2}}\sum_{j}\tilde{\psi}_{k}(y_{j})\right)^{2}.
\label{eq:Tn1}
\end{align}

\subsection{Limiting distribution of $T_{n}$ and asymptotic consistency}

Consider the alternative $q (\tau) =p + \tau g$ for some fixed $g$, 
$0 \le \tau \le 1$.
$q(\tau) $ remains a probability density for any $\tau$. 
We define the constants
\begin{equation}\label{eq:ck}
c_{k}:=\int\tilde{\psi}_{k}(y)g(y)dy,
\quad 
\tilde{v}_{k}=-\tau c_{k}
\end{equation}
where $c_k$ are finite 
by the integrability
of $\tilde{\psi}_k$ under $q_1(x)dx$ ((4) of Proposition \ref{prop:tildek}),
and $\sum_k \tilde{\lambda}_k c_k^2 <\infty$.

The theorem below identifies the  limiting distribution of $T_{n}$
under various order of $\|q-p\|$, which may decrease to 0 as $n\to \infty$.
The techniques very much follow Chapter 6 of \cite{serfling1981approximation} 
(see also  Theorem 12, 13 in  \cite{gretton2012kernel}),
where the key step is to replace the two independent sets of summations 
in \eqref{eq:Tn1}
by normal random variables 
via multi-variate CLT (Lemma \ref{lemma:replace}),
using a truncation argument based on the decaying of $\tilde{\lambda}_k$.
The proof is left to Appendix \ref{app:A}.

\begin{theorem}\label{thm:limit1}
	Let $\tau=\tau_{n}$ may depend on $n$, $ 0 \le \tau \le 1$, 
	and notations $\tilde{\lambda}_k$, $\tilde{\psi}_k$, $c_k$  and others like above.
	Under Assumption \ref{assump:A1}, 
	as $n=n_{1}+n_{2}\to\infty$
	with $\frac{n_{1}}{n}\to\rho_{1}\in(0,1)$, $\rho_1 + \rho_2 = 1$, 
	
	(1) If $\frac{\tau}{n^{-1/2}} \to a$, $0 \le a < \infty$ (including the case when $\tau=0$),
	then 
	\[
	nT_{n}\overset{d}{\to}\sum_{k}\tilde{\lambda}_{k}(  -ac_{k}+ \xi_k )^2,
	\quad 
	\xi_k \sim {\cal N} \left(0,\frac{1}{\rho_1} + \frac{1}{\rho_2} \right) \text{ i.i.d}.
	\]
	Due to the summability
	of $\sum_{k}\tilde{\lambda}_{k}$ the random variable on the right
	rand side is well-defined.
	
	(2) If $ \tau = n^{-1/2+\delta}$, where $0<\delta<\frac{1}{2}$,
	then 
	\[
	\frac{nT_{n} - n^{2\delta}  \sum_{k}\tilde{\lambda}_{k}c_{k}^{2}}{n^{\delta}}
	\overset{d}{\to} 
	 {\cal N}(0,\sigma_{(2)}^{2})
	\]
	where 
	$\sigma_{(2)}^{2}
	=
	4 \sum_{k}\tilde{\lambda}_{k}^{2} c_{k}^{2}(\frac{1}{\rho_{1}}+\frac{1}{\rho_{2}}) < \infty$.
	
	(3) If $\tau = 1$,  then 
	\[
	\sqrt{n}(T_{n} - \sum_{k}\tilde{\lambda}_{k}c_{k}^{2})
	\overset{d}{\to} 
	{\cal N}(0,\sigma_{(3)}^{2})
	\]
	where 
	$\sigma_{(3)}^{2} = 4 \left(\frac{1}{\rho_{1}}\sum_{k}\tilde{\lambda}_{k}^{2}c_{k}^{2}
	                               +\frac{1}{\rho_{2}} \sum_{k,l} \tilde{\lambda}_k  \tilde{\lambda}_l c_k c_l S_{kl}\right) < \infty$,
	with $S_{kl}=\int\tilde{\psi}_{k}(y)\tilde{\psi}_{l}(y)q(y)dy- c_{k}c_{l}$.
\end{theorem}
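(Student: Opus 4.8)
The plan is to start from the representation $nT_n=\sum_k\tilde\lambda_k\,(\sqrt{n}\,W_k^{(n)})^2$ coming from \eqref{eq:Tn1}, where $W_k^{(n)}:=\frac1{n_1}\sum_i\tilde\psi_k(x_i)-\frac1{n_2}\sum_j\tilde\psi_k(y_j)$, and to analyze the joint asymptotics of the scalar averages $\sqrt{n}\,W_k^{(n)}$. Under $q=q(\tau)=p+\tau g$ the moments of $W_k^{(n)}$ are elementary: using $\int\tilde\psi_k\,p\,dx=0$ and $\int\tilde\psi_k\tilde\psi_l\,p\,dx=\delta_{kl}$ one gets $\E[W_k^{(n)}]=-\tau c_k=\tilde v_k$, $n\,\mathrm{Var}(W_k^{(n)})=\tfrac n{n_1}+\tfrac n{n_2}\bigl(\int\tilde\psi_k^2\,q\,dy-\tau^2c_k^2\bigr)$, and $n\,\mathrm{Cov}(W_k^{(n)},W_l^{(n)})=\tfrac n{n_1}\delta_{kl}+\tfrac n{n_2}\bigl(\int\tilde\psi_k\tilde\psi_l\,q\,dy-\tau^2c_kc_l\bigr)$. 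Since $X$ and $Y$ are independent and each $W_k^{(n)}$ is an average of i.i.d.\ bounded summands (a triangular array on the $Y$-side when $\tau=\tau_n$, but the summands $\tilde\psi_k(y_j)$ are continuous on the compact support and hence bounded, so a Lindeberg condition is automatic), the multivariate CLT --- this is Lemma \ref{lemma:replace}, in the spirit of Chapter 6 of \cite{serfling1981approximation} and Theorems 12--13 of \cite{gretton2012kernel} --- shows that for every fixed $K$ the vector $\bigl(\sqrt{n}(W_k^{(n)}-\tilde v_k)\bigr)_{k\le K}$ converges to a centered Gaussian whose covariance is the $n\to\infty$ limit of the display above: $(\tfrac1{\rho_1}+\tfrac1{\rho_2})I$ when $\tau\to0$, and $\tfrac1{\rho_1}I+\tfrac1{\rho_2}(S_{kl})_{k,l\le K}$ when $\tau=1$.

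The three regimes are distinguished only by the size of the deterministic mean $\sqrt{n}\,\tilde v_k=-\sqrt{n}\,\tau c_k$. In case (1), $\sqrt{n}\,\tau\to a$, so $\sqrt{n}\,W_k^{(n)}\overset{d}{\to}-ac_k+\xi_k$ with the $\xi_k$ i.i.d.\ $\mathcal N(0,\tfrac1{\rho_1}+\tfrac1{\rho_2})$ (the $\tau^2$ contributions to variance and covariance vanish in the limit), and squaring, weighting by $\tilde\lambda_k$ and summing yields $\sum_k\tilde\lambda_k(-ac_k+\xi_k)^2$. In cases (2) and (3) the mean dominates; writing $\zeta_k^{(n)}:=\sqrt{n}(W_k^{(n)}-\tilde v_k)$ and expanding the square,
\[
nT_n=n\tau^2\sum_k\tilde\lambda_k c_k^2\;-\;2\sqrt{n}\,\tau\sum_k\tilde\lambda_k c_k\,\zeta_k^{(n)}\;+\;\sum_k\tilde\lambda_k(\zeta_k^{(n)})^2 .
\]
The first term is the deterministic centering $n^{2\delta}\sum_k\tilde\lambda_k c_k^2$ in case (2) and $n\sum_k\tilde\lambda_k c_k^2$ in case (3); the last term is $O_P(1)$ by the tail bound below, hence negligible relative to the normalizations $n^\delta$ and $\sqrt{n}$. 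The middle term carries the Gaussian limit: $\sum_k\tilde\lambda_k c_k\,\zeta_k^{(n)}=\sqrt{n}\bigl(\tfrac1{n_1}\sum_iF(x_i)-\tfrac1{n_2}\sum_jF(y_j)-(\E_pF-\E_qF)\bigr)$ for the fixed function $F:=\sum_k\tilde\lambda_k c_k\tilde\psi_k$ (the exchange of sum and integral being legitimate by the square-integrability below), and the ordinary two-sample CLT gives convergence to $\mathcal N\bigl(0,\tfrac1{\rho_1}\mathrm{Var}_pF+\tfrac1{\rho_2}\mathrm{Var}_qF\bigr)$; a direct computation gives $\mathrm{Var}_pF=\sum_k\tilde\lambda_k^2c_k^2$ and $\mathrm{Var}_qF=\sum_{k,l}\tilde\lambda_k\tilde\lambda_l c_k c_l S_{kl}$, with $\mathrm{Var}_qF\to\mathrm{Var}_pF$ in cases (1)--(2) since $\tau\to0$. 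Collecting the coefficient $-2$ (which contributes a factor $4$) and dividing by the normalization in the theorem then yields $\mathcal N(0,\sigma_{(2)}^2)$ and $\mathcal N(0,\sigma_{(3)}^2)$.

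The one genuinely infinite-dimensional step, and the main obstacle, is passing from the finite-$K$ statements to the full series in $k$. This is handled by a uniform tail bound: $\sum_{k>K}\tilde\lambda_k\,\E[(\sqrt{n}\,W_k^{(n)})^2]\to0$ as $K\to\infty$ uniformly in $n$. Indeed $\E[(\sqrt{n}\,W_k^{(n)})^2]=n\tilde v_k^2+n\,\mathrm{Var}(W_k^{(n)})\le n\tau^2c_k^2+\tfrac n{n_1}+\tfrac n{n_2}\int\tilde\psi_k^2\,q\,dy$, and weighting by $\tilde\lambda_k$ the three pieces are bounded by $n\tau^2\sum_k\tilde\lambda_k c_k^2$ (finite because $\sum_k\tilde\lambda_k c_k^2<\infty$ and $n\tau^2$ is bounded in all three regimes), by $\tfrac n{n_1}\sum_k\tilde\lambda_k$ (finite, trace class), and by $\tfrac n{n_2}\sum_k\tilde\lambda_k\int\tilde\psi_k^2\,q\,dy\le\tfrac{4n}{n_2}$ by (4) of Proposition \ref{prop:tildek}; each tail is then uniformly small. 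That same estimate, via Cauchy--Schwarz $\sum_k\tilde\lambda_k|c_k|\,\|\tilde\psi_k\|_{L^2(q)}\le\bigl(\sum_k\tilde\lambda_k c_k^2\bigr)^{1/2}\bigl(\sum_k\tilde\lambda_k\|\tilde\psi_k\|_{L^2(q)}^2\bigr)^{1/2}\le2\bigl(\sum_k\tilde\lambda_k c_k^2\bigr)^{1/2}$, gives $F\in L^2(q)$, hence the finiteness of $\sigma_{(2)}^2,\sigma_{(3)}^2$ and the validity of the CLT for $F$. Feeding the uniform tail bound together with the finite-dimensional Gaussian limits into the standard approximation-by-finite-sums criterion (as in the references above) closes all three cases; the remaining $O_P(1)$ and $o_P(1)$ accounting for the $\sum_k\tilde\lambda_k(\zeta_k^{(n)})^2$ term and for replacing $\E[T_n]$ by its leading term is a routine second-moment calculation.
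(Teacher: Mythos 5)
Your argument is correct, and for case (1) it is essentially the paper's proof: the truncation scheme you describe (finite-$K$ multivariate CLT for the centered sums, plus a tail bound $\sum_{k>K}\tilde\lambda_k\E[(\sqrt n W_k^{(n)})^2]\to0$ uniformly in $n$, resting on $\sum_k\tilde\lambda_k<\infty$ and $\sum_k\tilde\lambda_k\int\tilde\psi_k^2q\,dy\le4$) is exactly what the paper packages into the ``replacement lemma'' (Lemma \ref{lemma:replace}) and then establishes via characteristic functions. Where you genuinely depart is in cases (2) and (3). The paper feeds these through the same lemma, with $\gamma_{kn}=n^{-\delta}\to0$ absorbing the quadratic term; you instead dispose of $\sum_k\tilde\lambda_k(\zeta_k^{(n)})^2$ directly as $O_P(1)$ by a first-moment bound, and --- more substantively --- you collapse the dominant linear term into a two-sample average of the single function $F=\sum_k\tilde\lambda_kc_k\tilde\psi_k$, so that only the ordinary scalar Lindeberg CLT is needed. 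That is a cleaner and more elementary route for the Gaussian regimes; the one step you justify too casually is the pointwise identity $F(x_i)=\sum_k\tilde\lambda_kc_k\tilde\psi_k(x_i)$, for which square-integrability alone is not quite the right reason --- the clean justification is that $F$ equals the bounded kernel integral $k_1(x)=\int\tilde k(x,z)(p+g)(z)dz$ (the same function the paper introduces in the proof of Theorem \ref{thm:consist2}), obtained by integrating the uniformly convergent Mercer series term by term, so $|F|\le 4$ and the interchange with the empirical sums is immediate. Your variance computations ($\text{Var}_pF=\sum_k\tilde\lambda_k^2c_k^2$, $\text{Var}_qF=\sum_{k,l}\tilde\lambda_k\tilde\lambda_lc_kc_lS_{kl}$, and the $\tau\to0$ degeneration in case (2)) reproduce the paper's $\sigma_{(2)}^2$ and $\sigma_{(3)}^2$ exactly.
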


\begin{remark}
As a direct result of Theorem \ref{thm:limit1} (1),
	under $\mathcal{H}_{0}$, 
	\[
	nT_{n} \overset{d}{\to}
	\sum_{k}\tilde{\lambda}_{k} \xi_k^2,
	\quad 
	\xi_k \sim {\cal N} \left(0,\frac{1}{\rho_1} + \frac{1}{\rho_2} \right) \text{ i.i.d}.
	\]
	When $n_{1}=n_{2}$, 
	the limiting density is 
	$4\sum_{k}\tilde{\lambda}_{k}\chi_{1,k}^{2}$ where $\chi_{1,k}^{2}$ are i.i.d. $\chi_{1}^{2}$ random variables.
Theorem \ref{thm:limit1} (3) shows the asymptotic normality of  $T_n$ under ${\cal H}_{1}$.
One can verify that when $n_{1}=n_{2}$, 
$\sigma_{(3)}^{2}=8 \text{Var}_{x,y}(\tilde{k}_{1}(x,y))$,
where $\tilde{k}_{1}(x,y)=-\int(\tilde{k}(x,y')-\tilde{k}(y,y'))q(y')dy'=\sum_k \tilde{\lambda}_k \tilde{v}_k(\tilde{\psi}_k(x)-\tilde{\psi}_k(y))$.
These
limiting densities 
under
$n_{1}=n_{2}$ 
recover the classical result of V-statistics 
(Theorem 6.4.1.B and Theorem 6.4.3 \cite{serfling1981approximation}). 
\end{remark}

The numerical experiments in Section \ref{subsec:compare} show that the theoretical limits identified in  Theorem \ref{thm:limit1} (1) 
approximate the empirical distributions of $T_n$ quite well when $n$ equals a few hundreds (Fig. \ref{fig:compare1}).
In below,
we address the asymptotic consistency of the MMD test, 
and  the finite-sample bound of the test power will be discussed in the next subsection. 

A test based on the MMD statistic $T_n$  rejects $\mathcal{H}_{0}$ 
whenever $T_{n}$ exceeds certain threshold $t$.
For a target ``level'' $0 < \alpha < 1$ (typically $\alpha = 0.05$),
a test (with $n$ samples) achieves level $\alpha$ if 
$\Pr [T_{n}> t |\mathcal{H}_{0}] \le \alpha$,
and the ``power'' against an alternative $q$ is defined as 
$\Pr[T_{n} > t |\mathcal{H}_{1}\text{ with \ensuremath{q}}]$.
We define the test power at level $\alpha$ as
\begin{equation}\label{eq:pi-n-alpha}
\pi_{n, \alpha}(q)= \sup_t 
\{
\Pr[T_{n} > t |\mathcal{H}_{1}\text{ with \ensuremath{q}}],
\, s.t. ~ 
\Pr [T_{n}> t |\mathcal{H}_{0}] \le \alpha
\}.
\end{equation}
The threshold $t$ needs to be sufficiently large to guarantee that the false-positive rate is at most $\alpha$, 
and in practice it is a parameter to determine (c.f. Section \ref{subsec:permutation_test}).
In below, we omit the dependence on $\alpha$ in the notation, and write $\pi_{n, \alpha}(q)$ as $\pi_{n}(q)$. 
The test is called asymptotically  consistent if $\pi_{n}(q)\to1$ as $n\to\infty$.
For the one-parameter family of $q(\tau) = p + \tau g$,
the following theorem shows that, if $q$ satisfies Assumption \ref{assump:A2},
the MMD test has a nontrivial power when  $\frac{\tau}{n^{-1/2}}$ converges to a positive constant,
and is asymptotically consistent if $\frac{\tau}{n^{-1/2}} \to \infty$.

\begin{theorem}\label{thm:consist1}
	Under Assumption \ref{assump:A1}, 
	and suppose that the departure $g$ makes $q = p + \tau g$ satisfy Assumption \ref{assump:A2} for $ \tau > 0$,
	where $ 0 \le \tau \le 1$ may depend on $n$.
	As $n \to \infty$ with $\frac{n_1}{n }\to \rho_1 \in (0,1)$,
	
	(1) If $\frac{\tau}{n^{-1/2}} \to a$, $0 < a < +\infty$, 
	then  $\pi_{n}( q ) \to f(a) > \alpha$, 
	where $f$ is a monotonic function of $a$. 
	
	(2) If $\frac{\tau}{n^{-1/2}} \to +\infty$, 
	then $\pi_{n}( q ) \to 1$.
\end{theorem}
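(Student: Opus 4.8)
The plan is to derive both statements directly from the limiting distributions established in Theorem \ref{thm:limit1}, combined with the fact (from Proposition \ref{prop:equivalentA2} and Lemma \ref{lemma:tildk-as-k}) that Assumption \ref{assump:A2} guarantees $T(p,q) > 0$, i.e. $\sum_k \tilde\lambda_k c_k^2 > 0$ when $\tau > 0$ is held fixed; note that under the family $q(\tau) = p + \tau g$ the constants $c_k = \int \tilde\psi_k(y) g(y)\, dy$ do not depend on $\tau$, so the quantity $C := \sum_k \tilde\lambda_k c_k^2 \in (0,\infty)$ is a fixed positive constant determined by $g$. The key observation is that the threshold $t = t_{n,\alpha}$ in the definition \eqref{eq:pi-n-alpha} is pinned down, up to $o(1)$, by the null limiting law: by Theorem \ref{thm:limit1}(1) with $a = 0$, under $\mathcal{H}_0$ we have $n T_n \overset{d}{\to} Z_0 := \sum_k \tilde\lambda_k \xi_k^2$ with $\xi_k \sim \mathcal{N}(0, \rho_1^{-1} + \rho_2^{-1})$ i.i.d., a fixed nonnegative random variable with a continuous distribution function; hence the optimal level-$\alpha$ rejection threshold satisfies $n\, t_{n,\alpha} \to q_{1-\alpha}$, the $(1-\alpha)$-quantile of $Z_0$, and $q_{1-\alpha} < \infty$. (Strictly, since we maximize the power over all valid $t$, it suffices to exhibit one sequence $t_n$ with $n t_n \to q_{1-\alpha}$ that is asymptotically level $\alpha$ and compute the power along it; this lower-bounds $\pi_n(q)$, and matching the null limit gives the claimed value in case (1).)

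For part (1), with $\tau/n^{-1/2} \to a \in (0,\infty)$, Theorem \ref{thm:limit1}(1) gives $n T_n \overset{d}{\to} Z_a := \sum_k \tilde\lambda_k(-a c_k + \xi_k)^2$ under $\mathcal{H}_1$. Then, choosing $t_n$ with $n t_n \to q_{1-\alpha}$ and using that $Z_a$ has a continuous CDF (its law is that of a weighted sum of noncentral $\chi^2_1$'s, which has no atoms since at least one $\tilde\lambda_k > 0$), the convergence in distribution yields $\pi_n(q) \to \Pr[Z_a > q_{1-\alpha}] =: f(a)$. That $f(a) > \alpha$ follows because $Z_a$ stochastically dominates $Z_0$: each summand $\tilde\lambda_k(-ac_k + \xi_k)^2$ has, conditionally, a noncentral $\chi^2_1$-type law whose CDF is pointwise below that of the central one (standard monotonicity of noncentral chi-square tails in the noncentrality parameter), and the domination is strict on the modes with $c_k \neq 0$, of which there is at least one by Assumption \ref{assump:A2}. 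Monotonicity of $f$ in $a$ follows from the same stochastic-ordering argument applied to $a < a'$: $Z_{a'}$ stochastically dominates $Z_a$, hence $\Pr[Z_{a'} > q_{1-\alpha}] \ge \Pr[Z_a > q_{1-\alpha}]$.

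For part (2), with $\tau/n^{-1/2} \to \infty$, I would split by the rate of $\tau$. If $\tau$ stays bounded away from the regime $n^{-1/2}$ but with $\tau \to 0$, we are (after possibly passing to subsequences) in the scaling $\tau = n^{-1/2 + \delta}$-type regime, and Theorem \ref{thm:limit1}(2) gives $n T_n = n^{2\delta} C + n^{\delta}\, \mathcal{N}(0, \sigma_{(2)}^2) + o_P(n^\delta)$, so $n T_n \to \infty$ in probability since $C > 0$; if $\tau \equiv 1$ (or bounded below), Theorem \ref{thm:limit1}(3) gives $T_n \to C > 0$ in probability while the threshold $t_n = O(1/n) \to 0$. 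In either case, for any fixed $M$, $\Pr[n T_n > M] \to 1$, and since $n t_n \to q_{1-\alpha} < \infty$ we get $\Pr[T_n > t_n] \to 1$, hence $\pi_n(q) \to 1$. A clean unified way to phrase this: it suffices to show $n T_n \overset{P}{\to} \infty$ under $\mathcal{H}_1$, which follows from $\E[n T_n] \ge C \cdot (\tau \sqrt{n})^2 (1 + o(1)) \to \infty$ together with a variance bound of smaller order, both extractable from the moment computations underlying Theorem \ref{thm:limit1}.

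The main obstacle I anticipate is the bookkeeping around case (2) when $\tau_n$ is a general sequence going to zero slower than $n^{-1/2}$ — Theorem \ref{thm:limit1} as stated only covers the specific rates $\tau = n^{-1/2+\delta}$ and $\tau = 1$, so to handle an arbitrary $\tau_n$ one must either re-run the truncation-plus-CLT argument of Lemma \ref{lemma:replace} at the general rate, or, more economically, bypass the limit laws entirely and argue directly that $\E_{\mathcal{H}_1}[n T_n] = (\tau\sqrt n)^2 C + O(1)$ and $\mathrm{Var}_{\mathcal{H}_1}[n T_n] = O((\tau \sqrt n)^2)$ (using $|\tilde k| \le 4$ and the trace-class bounds from Proposition \ref{prop:tildek}), so that a Chebyshev bound forces $n T_n \overset{P}{\to}\infty$. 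The other mild technical point is justifying that one may take a threshold sequence $t_n$ with $n t_n \to q_{1-\alpha}$ and that such $t_n$ is asymptotically valid at level $\alpha$; this is immediate from the continuity of the CDF of $Z_0$ and Theorem \ref{thm:limit1}(1), but it should be stated carefully because $\pi_{n}$ is defined as a supremum over all thresholds meeting the level constraint, so one needs the upper bound $\pi_n(q) \le \Pr[Z_a > q_{1-\alpha}] + o(1)$ as well, which again uses that the null limit has no atom at its quantile.
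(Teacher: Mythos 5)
Your plan is correct in substance, but it diverges from the paper's proof in how part (2) is closed, and the divergence is exactly at the point you flag as your "main obstacle." The paper does not case-split on the rate of $\tau_n$ at all: it normalizes the statistic as $U_n = nT_n/(\tau\sqrt{n})^2$ and applies the replacement lemma (Lemma \ref{lemma:replace}) once more with $\alpha_{kn}=c_k^2$, $\beta_{kn}=-2c_k/(\tau\sqrt n)$, $\gamma_{kn}=(\tau\sqrt n)^{-2}$, which is valid for an arbitrary sequence with $\tau\sqrt n\to\infty$ and yields convergence in distribution to the point mass at $t_{(1)}=\sum_k\tilde{\lambda}_kc_k^2>0$; combined with a fixed threshold $t_{(0)}/n$ extracted from the null limit of Theorem \ref{thm:limit1}(1), this gives $1-\pi_n(q)\le\Pr[T_n\le t_{(0)}/n\,|\,\mathcal{H}_1]\to 0$. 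Your primary route for (2) — invoking Theorem \ref{thm:limit1}(2) and (3) — genuinely does not cover a general $\tau_n\to 0$ with $\tau_n\sqrt n\to\infty$ (e.g.\ $\tau_n=n^{-1/2}\log n$), so as literally stated it has a gap; but both repairs you propose are sound, and the second (first/second moment bounds plus Chebyshev) is precisely what the paper's Theorem \ref{thm:consist2} carries out, which the paper itself notes "can be used as another way to prove Theorem \ref{thm:consist1} (2)." For part (1) you actually do more than the paper, which dismisses it as "directly following" from Theorem \ref{thm:limit1}(1): your identification of the asymptotic threshold $nt_{n,\alpha}\to q_{1-\alpha}$ via continuity of the null limit, and the stochastic-domination argument (monotonicity of noncentral $\chi^2_1$ tails in the noncentrality, strict on at least one mode with $\tilde{\lambda}_kc_k^2>0$ guaranteed by Assumption \ref{assump:A2}) giving both $f(a)>\alpha$ and monotonicity of $f$, is a legitimate and more complete justification of claims the paper asserts without proof. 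The two-sided control of the supremum in \eqref{eq:pi-n-alpha} that you insist on (both a valid sequence achieving the limit and an upper bound from no-atom at the quantile) is also correctly identified and is needed for the exact limit $\pi_n(q)\to f(a)$ rather than just a lower bound.
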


This is qualitatively the same result as for RKHS MMD 
(Theorem 13 in \cite{gretton2012kernel}).
The claim (1) directly follows from Theorem \ref{thm:limit1} (1),
and the proof for (2) uses similar techniques.  The proof is left to Appendix \ref{app:A}.

\subsection{Non-asymptotic bound of the testing power}\label{sec:nonasymptoticpower}

In this section, we derive a non-asymptotic (lower) bound of the testing power $\pi_n(q)$ for finite $n$,
which shows that the speed of the convergence $\pi_n(q) \to 1$ is at least as fast as $O(n^{-1})$ as $n$ increases
whenever $n$ is greater than certain threshold value.

\begin{theorem}
\label{thm:consist2}
Notations $\tau$, $\tilde{\lambda}_k$, $c_k$ as above.
Define $\rho_{1,n} := \frac{n_1}{n}$, $\rho_{2,n} := \frac{n_2}{n}$, $ 0 < \rho_{1,n} <1$ and $\rho_{1,n}  + \rho_{2,n} =1$,
and let $ 0< \alpha < 1$ be the target level. 
Under Assumption \ref{assump:A1} and \ref{assump:A2},
define $T_1 := \sum_k \tilde{\lambda}_k c_k^2 > 0$.
If $ n > \frac{16}{0.1}( \frac{1}{ \rho_{1,n}^3 } + \frac{4}{ \rho_{2,n}^3 }   ) $
and
\begin{equation}\label{eq:cond2-consist2}
(\tau^2 n)  T_1 
 > 
C_4 + \sqrt{ \frac{C_3 + 0.1}{\alpha} }, 
\end{equation}
then 
\begin{equation}
\label{eq:bound-consist2}
1 - \pi_n(q) 
\le 
\frac{ (\tau^2 n) C_1 + \tau C_2 + C_3 + 0.1    }
{ \left(  (\tau^2 n)  T_1 -  (C_4 + \sqrt{ \frac{C_3 + 0.1}{\alpha} }  )        \right)^2 }
\end{equation}
where
\begin{align*}
C_1 
	&:= 4 \left(   \frac{1}{\rho_{1,n}} \sum_k \tilde{\lambda}_k^2 c_k^2 + \frac{16}{\rho_{2,n}}    \right), 
\quad	
C_2 
	:= 128 \left(  \frac{1}{\rho_{1,n}^2} +  \frac{1}{\rho_{2,n}^2} \right), \\
C_3 
	&:=  \frac{32}{(\rho_{1,n} \rho_{2,n})^2},  
\quad
C_4 
	:=  \frac{1}{\rho_{1,n} \rho_{2,n}} \sum_k \tilde{\lambda}_k. 
\end{align*}
In particular,
assuming that $\rho_{1,n}$ is uniformly bounded to be between $(r, 1-r)$ for some $0 <r <\frac{1}{2}$,
then the constants $C_1$, $C_2$, $C_3$, $C_4$ are all uniformly bounded with respect to $n$
by constants which only depend on $\tilde{\lambda}_k$ and $c_k$.
\end{theorem}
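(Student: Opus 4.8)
The plan is to apply Chebyshev's inequality twice — once under $\mathcal{H}_0$ to pin down an admissible rejection threshold, and once under $\mathcal{H}_1$ to lower-bound the power — after computing the first two moments of $T_n$ in both regimes from the representation \eqref{eq:Tn1}. Write $A_k := \frac{1}{n_1}\sum_i \tilde{\psi}_k(x_i) - \frac{1}{n_2}\sum_j \tilde{\psi}_k(y_j)$, so that $T_n = \sum_k \tilde{\lambda}_k A_k^2$ and the two empirical averages inside $A_k$ are independent. Since $\int \tilde{\psi}_k\, p = 0$ and $\int \tilde{\psi}_k^2\, p = 1$ (Proposition \ref{prop:tildek}(3)), under $\mathcal{H}_0$ one gets $\E_{\mathcal{H}_0}[A_k] = 0$ and $\E_{\mathcal{H}_0}[A_k^2] = \tfrac{1}{n_1}+\tfrac{1}{n_2}$, hence $n\,\E_{\mathcal{H}_0}[T_n] = \tfrac{1}{\rho_{1,n}\rho_{2,n}}\sum_k \tilde{\lambda}_k = C_4$ exactly. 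Under $\mathcal{H}_1$ with $q = p+\tau g$ we have $\E_{\mathcal{H}_1}[A_k] = \tilde v_k = -\tau c_k$ and $\E_{\mathcal{H}_1}[A_k^2] = \tau^2 c_k^2 + \tfrac{\mathrm{Var}_p(\tilde\psi_k)}{n_1} + \tfrac{\mathrm{Var}_q(\tilde\psi_k)}{n_2} \ge \tau^2 c_k^2$, so $n\,\E_{\mathcal{H}_1}[T_n] \ge (\tau^2 n)\sum_k \tilde\lambda_k c_k^2 = (\tau^2 n) T_1$. All moments are finite by Proposition \ref{prop:tildek}.

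The core of the argument is the variance bound
\[
n^2\,\mathrm{Var}_{\mathcal{H}_1}(T_n) \le (\tau^2 n) C_1 + \tau C_2 + C_3 + 0.1 \quad \text{whenever } n > \tfrac{16}{0.1}\bigl(\tfrac{1}{\rho_{1,n}^3} + \tfrac{4}{\rho_{2,n}^3}\bigr),
\]
and, specializing to $\tau = 0$, $n^2\,\mathrm{Var}_{\mathcal{H}_0}(T_n) \le C_3 + 0.1$. To obtain it I would expand $\mathrm{Var}(T_n) = \sum_{k,l}\tilde\lambda_k\tilde\lambda_l\,\mathrm{Cov}(A_k^2, A_l^2)$ and evaluate each covariance through the fourth moments of empirical means of i.i.d. variables. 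Writing $A_k = -\tau c_k + \epsilon_k$ with $\E\epsilon_k = 0$ and $\mathrm{Var}(\epsilon_k) = \tfrac{1}{n_1} + \tfrac{\mathrm{Var}_q(\tilde\psi_k)}{n_2} = O(1/n)$, one finds $\mathrm{Cov}(A_k^2,A_l^2) = 4\tau^2 c_k c_l\,\mathrm{Cov}(\epsilon_k,\epsilon_l) - 2\tau\bigl(c_k\,\E[\epsilon_l\epsilon_k^2] + c_l\,\E[\epsilon_k\epsilon_l^2]\bigr) + \mathrm{Cov}(\epsilon_k^2,\epsilon_l^2)$, with the three groups of order $O(\tau^2/n)$, $O(\tau/n^2)$ and $O(1/n^2)$ respectively, plus an $O(1/n^3)$ remainder. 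After multiplying by $n^2$ and summing against $\tilde\lambda_k\tilde\lambda_l$: the first group yields the $(\tau^2 n)C_1$ term, using $\sum_k\tilde\lambda_k^2 c_k^2 \le 4T_1 < \infty$ together with $\sum_{k,l}\tilde\lambda_k\tilde\lambda_l\, c_kc_l\,\mathrm{Cov}_q(\tilde\psi_k,\tilde\psi_l) = \mathrm{Var}_q\!\bigl(\int \tilde k(x,\cdot)g(x)dx\bigr) \le 16$ (from $|\tilde k|\le 4$ and $\int|g|\le 2$, the latter because $p + g \ge 0$ and $\int g = 0$); the second group yields $\tau C_2$, using the pointwise bound $|\tilde\psi_k| \le 2/\sqrt{\tilde\lambda_k}$ (from $\tilde k(x,x)\le 4$), the consequent $|c_k| \le 4/\sqrt{\tilde\lambda_k}$, bounds on third central moments, and $\sum_k\tilde\lambda_k \le 4$; the third group yields $C_3$, via $\sum_k\tilde\lambda_k^2 \le 16$ and the Cauchy--Schwarz estimate $\sum_{k,l}\tilde\lambda_k\tilde\lambda_l\bigl(\mathrm{Cov}_q(\tilde\psi_k,\tilde\psi_l)\bigr)^2 \le \bigl(\sum_k\tilde\lambda_k\int\tilde\psi_k^2 q\bigr)^2 \le 16$ from Proposition \ref{prop:tildek}(4); and the $O(1/n^3)$ remainder, once multiplied by $n^2$, is at most $\tfrac{16}{n}\bigl(\tfrac{1}{\rho_{1,n}^3}+\tfrac{4}{\rho_{2,n}^3}\bigr) < 0.1$ precisely under the stated lower bound on $n$.

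Given these two ingredients the conclusion is mechanical. Set $t^\star := \tfrac{1}{n}\bigl(C_4 + \sqrt{(C_3+0.1)/\alpha}\bigr)$. By Chebyshev and $\E_{\mathcal{H}_0}[nT_n] = C_4$,
\[
\Pr[T_n > t^\star \mid \mathcal{H}_0] \le \frac{n^2\,\mathrm{Var}_{\mathcal{H}_0}(T_n)}{(nt^\star - C_4)^2} \le \frac{C_3 + 0.1}{(C_3+0.1)/\alpha} = \alpha,
\]
so $t^\star$ is feasible in \eqref{eq:pi-n-alpha} and $\pi_n(q) \ge \Pr[T_n > t^\star \mid \mathcal{H}_1]$. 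Condition \eqref{eq:cond2-consist2} states exactly $(\tau^2 n)T_1 > nt^\star$, so using $\E_{\mathcal{H}_1}[nT_n] \ge (\tau^2 n)T_1$ the gap is positive and $\E_{\mathcal{H}_1}[nT_n] - nt^\star \ge (\tau^2 n)T_1 - \bigl(C_4 + \sqrt{(C_3+0.1)/\alpha}\bigr)$; a second Chebyshev then gives
\[
1 - \pi_n(q) \le \Pr[T_n \le t^\star \mid \mathcal{H}_1] \le \frac{n^2\,\mathrm{Var}_{\mathcal{H}_1}(T_n)}{\bigl(\E_{\mathcal{H}_1}[nT_n] - nt^\star\bigr)^2} \le \frac{(\tau^2 n)C_1 + \tau C_2 + C_3 + 0.1}{\bigl((\tau^2 n)T_1 - (C_4 + \sqrt{(C_3+0.1)/\alpha})\bigr)^2},
\]
which is \eqref{eq:bound-consist2}. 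Finally, if $\rho_{1,n} \in (r, 1-r)$ then $\rho_{1,n}^{-1}, \rho_{2,n}^{-1} \le 1/r$, and since $\sum_k\tilde\lambda_k$, $\sum_k\tilde\lambda_k^2 c_k^2$ are finite (Proposition \ref{prop:tildek} together with $\sum_k\tilde\lambda_k c_k^2 < \infty$), each of $C_1,C_2,C_3,C_4$ is bounded by a constant depending only on $r$ and on $\{\tilde\lambda_k\},\{c_k\}$.

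\textbf{Main obstacle.} I expect the variance estimate of the second paragraph to be the real work: expanding $\mathrm{Cov}(A_k^2, A_l^2)$ into its fourth-moment / V-statistic pieces, tracking the exact powers of $n_1,n_2$ and of $\tau$ in each, and verifying that every double sum over $(k,l)$ converges with the stated constants — in particular controlling the mixed terms $\E[\epsilon_k\epsilon_l^2]$ and the fourth-cumulant contributions through the pointwise bound $|\tilde\psi_k|\le 2/\sqrt{\tilde\lambda_k}$ and the trace-class bounds of Proposition \ref{prop:tildek}, and checking that the leftover $O(n^{-3})$ pieces are dominated by $0.1/n^2$ under the hypothesis on $n$. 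The two Chebyshev applications and the moment identities are routine by comparison.
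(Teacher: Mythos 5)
Your skeleton is exactly the paper's: Chebyshev under $\mathcal{H}_0$ with the threshold $t^\star=\frac{1}{n}\bigl(C_4+\sqrt{(C_3+0.1)/\alpha}\bigr)$, Chebyshev under $\mathcal{H}_1$ against the mean lower bound $\E[nT_n\mid\mathcal{H}_1]\ge(\tau^2n)T_1$, and a decomposition of $n^2\,\mathrm{Var}(T_n)$ into groups of order $\tau^2 n$, $\tau$, $O(1)$ and an $O(1/n)$ remainder absorbed into the $0.1$. The moment identities, the feasibility of $t^\star$, and the final assembly are all correct and match the paper's proof. The gap is in how you propose to close the double sums in the variance bound, and it is not cosmetic. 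First, for the $(\tau^2n)C_1$ group you bound $\mathrm{Var}_q\bigl(\int\tilde k(x,\cdot)g(x)dx\bigr)$ using $|\tilde k|\le4$ and $\int|g|\le2$; that only gives $(4\cdot2)^2=64$, not the $16$ that $C_1$ requires. The paper instead observes that $\int\tilde k(x,\cdot)g(x)dx=\int\tilde k(x,\cdot)(p+g)(x)dx$ (since $\int\tilde k(\cdot,z)p(z)dz=0$) and $p+g=q(1)$ is a probability density, so the function is pointwise bounded by $4$ and its second $q$-moment by $16$.

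Second, and more seriously, your plan for the $\tau C_2$ group — controlling $\E[\epsilon_k\epsilon_l^2]$ termwise via $|\tilde\psi_k|\le2/\sqrt{\tilde\lambda_k}$ and $|c_k|\le4/\sqrt{\tilde\lambda_k}$ — does not converge. The relevant third moment reduces to $\frac{1}{n_1^2}\E_p[\tilde\psi_k\tilde\psi_l^2]$ (plus the analogous $q$-term), and a termwise bound forces you to sum $\sum_k\tilde\lambda_k|c_k|\,\|\tilde\psi_k\|_\infty\le\sum_k 8$, or at best $\sum_k\sqrt{\tilde\lambda_k}\,|c_k|$, neither of which is finite under the available hypotheses ($\sum_k\tilde\lambda_kc_k^2<\infty$ does not give $\sum_k\sqrt{\tilde\lambda_k}|c_k|<\infty$). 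The paper's proof avoids this by summing over $k$ and $l$ \emph{inside} the expectation first, collapsing $\sum_k\tilde\lambda_kc_k\tilde\psi_k(x)$ into the function $k_1(x)$ and $\sum_l\tilde\lambda_l\tilde\psi_l(x)^2$ into $\tilde k(x,x)$ (and the centered analogue $\tilde{\tilde k}(y,y)$ for the $Y$-sample), all of which are uniformly bounded by $4$ (resp. $16$); only then does it take expectations, using the vanishing of cross moments unless indices coincide and one Cauchy--Schwarz in $y$, which is how the constant $128$ in $C_2$ arises. You need this ``sum over the spectral index first, then bound the resulting kernel V-statistic'' device to make the second group (and the sharp constants in the first and third groups) go through; without it the argument as sketched stalls exactly at the step you flagged as the main obstacle.
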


The above bound applies when $\tau$ is proportional to $n^{-1/2}$, which is the critical regime. 
The lower bound of $\pi_n(q) $ in \eqref{eq:bound-consist2} 
 increases with $\tau^2 n$ and approaches 1 when  $\tau^2 n \to \infty$,
showing the same behavior as Theorem \ref{thm:consist1}.
It can be used as another way to prove Theorem \ref{thm:consist1} (2).
In particular, 
as $n$ increases (or $\tau^2 n$ increases, since $\tau$ stays bounded) 
and assuming uniformly bounded $\rho_{1,n}$,
the r.h.s of \eqref{eq:bound-consist2}
is dominated by 
\[
\frac{ C_1}{T_1^2} \frac{1}{\tau^2 n},
\]
which leads to an $O(n^{-1})$ bound of $1-\pi_n (q)$ for fixed $\tau$.
The theorem only uses Chebyshev,
so the bound may be improved,
e.g.,
by investigating the concentration of $n T_n$ which is a quadratic function of independent sums, c.f. \eqref{eq:Tn1}. 

We notice two other possible ways of deriving non-asymptotic bound of the power:
(1) 
Berry-Essen rate of the convergence to asymptotic normality 
has been proved for U-statistics in the case of $p\neq q$, 
and can be extended to the V-statistics (for the case of $n_1 = n_2$),
c.f. Chapter 6 of \cite{serfling1981approximation}.
This can lead to a control of $1-\pi_n (q) < C(k, \tau) n^{-1/2}$, 
where the constant 
depends on the kernel and the departure magnitude $\tau$.
However, when $\tau$ is proportional to $n^{-1/2}$,
the Berry-Essen rate loses sharpness and cannot give a nontrivial bound. 
(2) 
Large deviation bounds of the empirical MMD $T_n$ have been obtained by
McDiarmid inequality and the boundedness of the Rademacher complexity of the unit ball in the RKHS,
where $T_n$ is proved to converge to the population value $T$  
 at the rate $O(n^{-1/2})$ with exponential tail
(Theorem  7 of \cite{gretton2012kernel}).
This gives 
a stronger concentration than Chebyshev when $\tau^2 > c n^{-1/2}$ for some constant $c$ 
and leads to an exponential decay of $1-\pi_n (q)$ with increasing $n$,
but it does not apply to give a control when $\tau$ is proportional to $n^{-1/2}$ i.e. the critical regime. 

\subsection{Comparison of kernels}\label{subsec:compare}

\begin{figure}[tbp]
	\begin{centering}
		\includegraphics[width=0.32\textwidth]{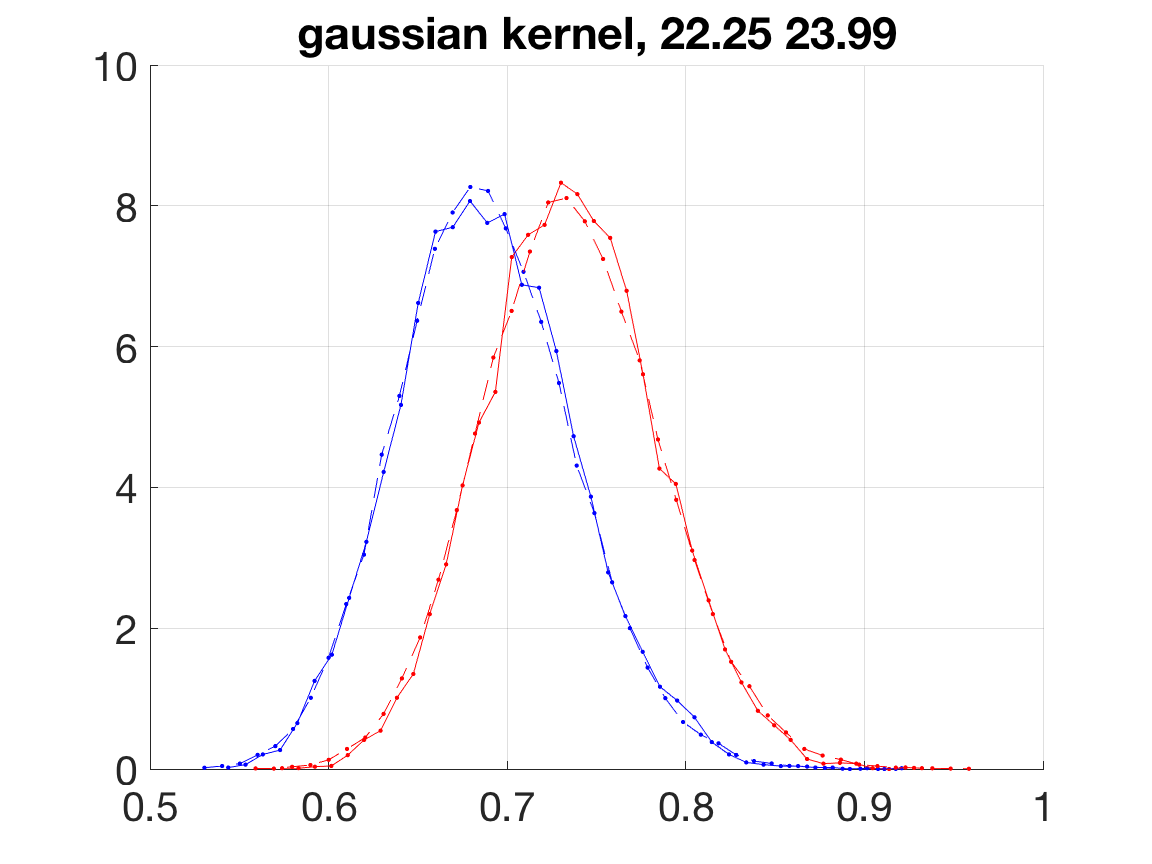}
		\includegraphics[width=0.32\textwidth]{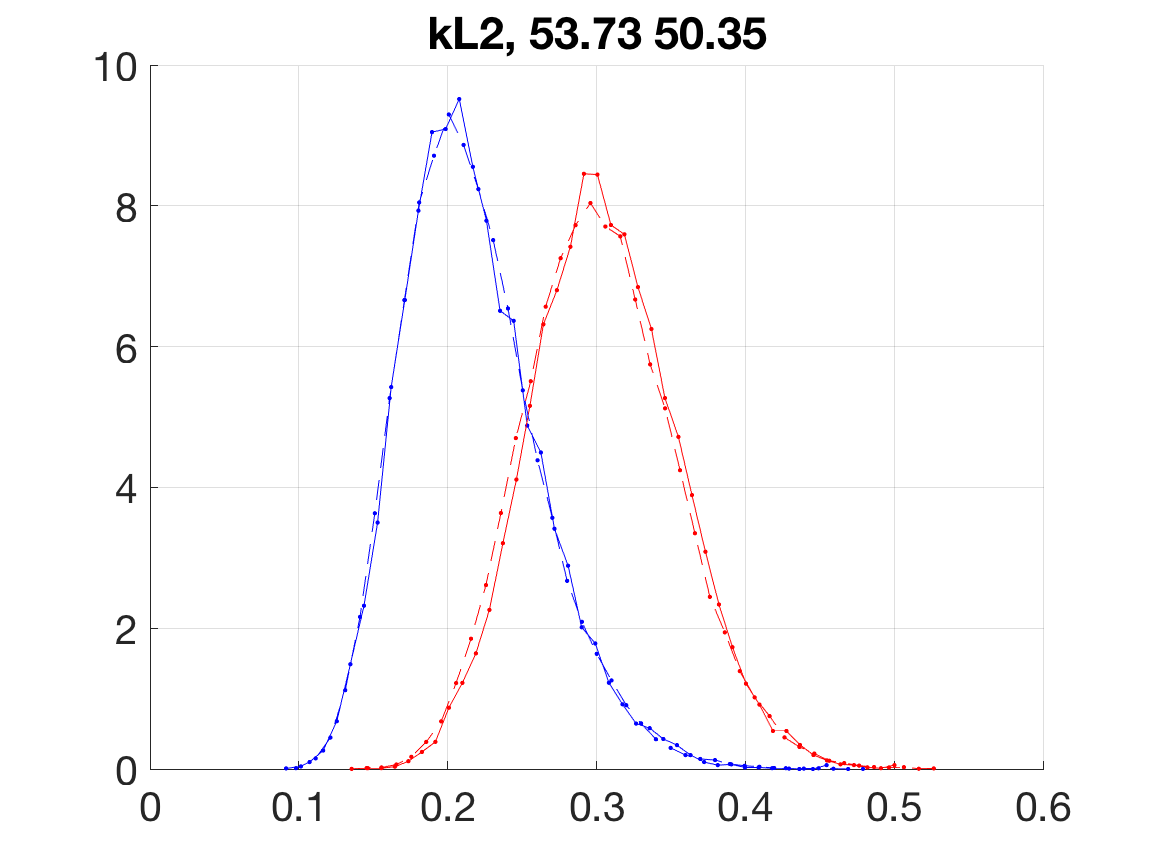}
		\includegraphics[width=0.32\textwidth]{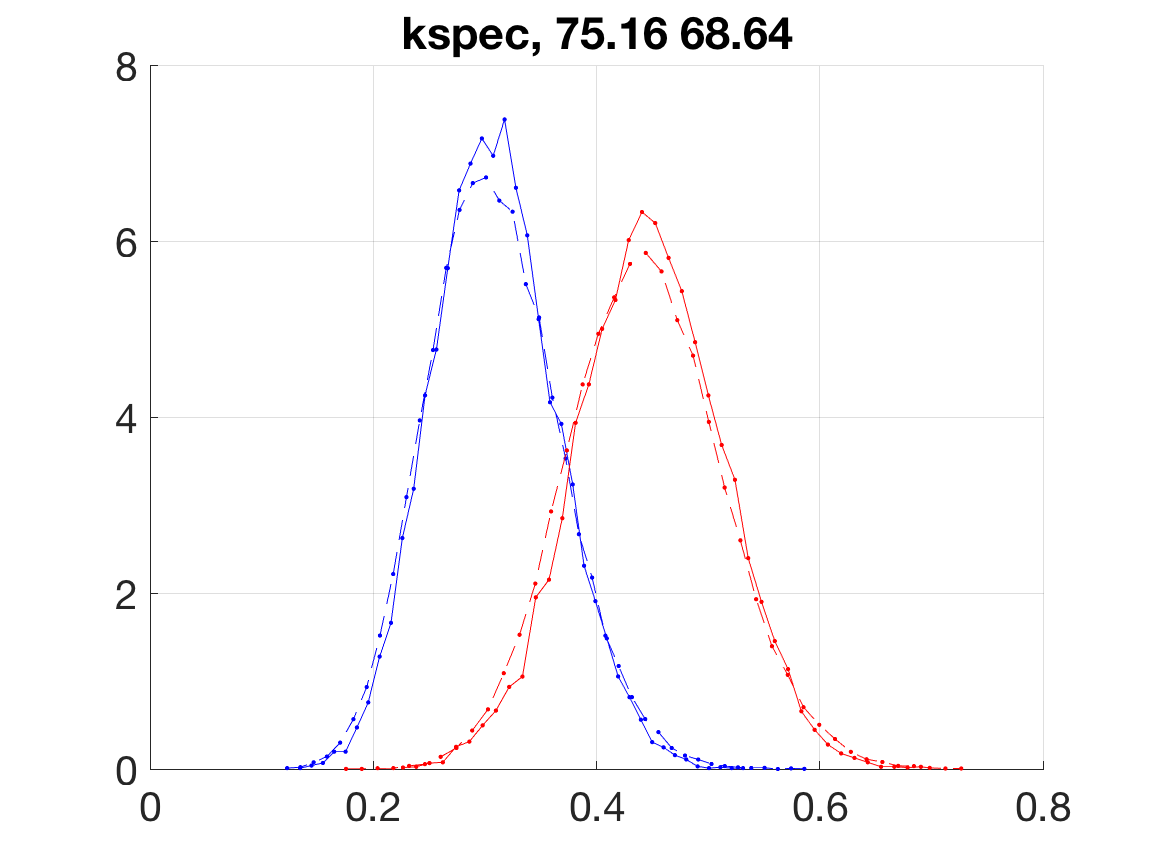} \\
		\caption{\label{fig:compare1}
			Empirical distribution of the statistic $T_n$ under ${\cal H}_0$ (blue) and  ${\cal H}_1$ (red) 
			shown in comparison with the large $n$ limiting distribution (broken lines) as in Theorem \ref{thm:limit1},
			for gaussian kernel (left), $k_{L^2}$ (middle) and $k_{\text{spec}}$ (right).
			The numbers are the fraction of $T_n | {\cal H}_1$ lying to the right of the 0.95-quantile of $T_n | {\cal H}_0$, 
			for empirical and theoretical curves respectively.
			Data sets $X$ and $Y$ drawn from $p$ and $q$ in the example in Section \ref{subsec:compare}, 
			a scatter plot is shown in Figure \ref{fig:curve1}, $n_1 = n_2 = 400$.
		}
	\end{centering}
\end{figure}

As suggested by Theorem \ref{thm:consist2},
the power of the MMD test depends on the mean and variance of the statistic under 
${\cal H}_0$ and ${\cal H}_1$ respectively, which are
\begin{eqnarray*}
	\theta_{0} & = & E[T_{n}|{\cal H}_{0}],\quad\theta_{1}=E[T_{n}|{\cal H}_{1}],\\
	\sigma_{0}^{2} & = & \text{Var}(T_{n}|{\cal H}_{0}),\quad\sigma_{1}^{2}=\text{Var}(T_{n}|{\cal H}_{1}).
\end{eqnarray*}
By Theorem \ref{thm:limit1} (1), at the critical regime where $\tau$ is proportional to $n^{-1/2}$,
they can be approximated by the following (recall that $\tilde{v}_{k}=-\tau c_{k}$) 
\begin{eqnarray*}
	\bar{\theta}_{0} &= & \frac{2}{n}\sum_{k}\tilde{\lambda}_{k},
		\quad\bar{\theta}_{1}=\sum_{k}\tilde{\lambda}_{k}\tilde{v}_{k}^{2}+\frac{2}{n}\sum_{k}\tilde{\lambda}_{k},\\
	\bar{\sigma}_{0}^{2} &= &\text{Var}(\sum_{k}\tilde{\lambda}_{k}\frac{1}{n}(h_{k}-g_{k})^{2}),
	\quad\bar{\sigma}_{1}^{2}=\text{Var}(\sum_{k}\tilde{\lambda}_{k}(\tilde{v}_{k}+\frac{1}{\sqrt{n}}(h_{k}-g_{k}))^{2}),
\end{eqnarray*}
where $h_{k},g_{k}\sim N(0,1)$ i.i.d. 
Notice that the gap $\bar{\theta}_{1}-\bar{\theta}_{0}=\sum_{k}\tilde{\lambda}_{k}\tilde{v}_{k}^{2}$, 
which is the population $T$.

\begin{figure}
	\begin{centering}
		\includegraphics[width=0.4\textwidth]{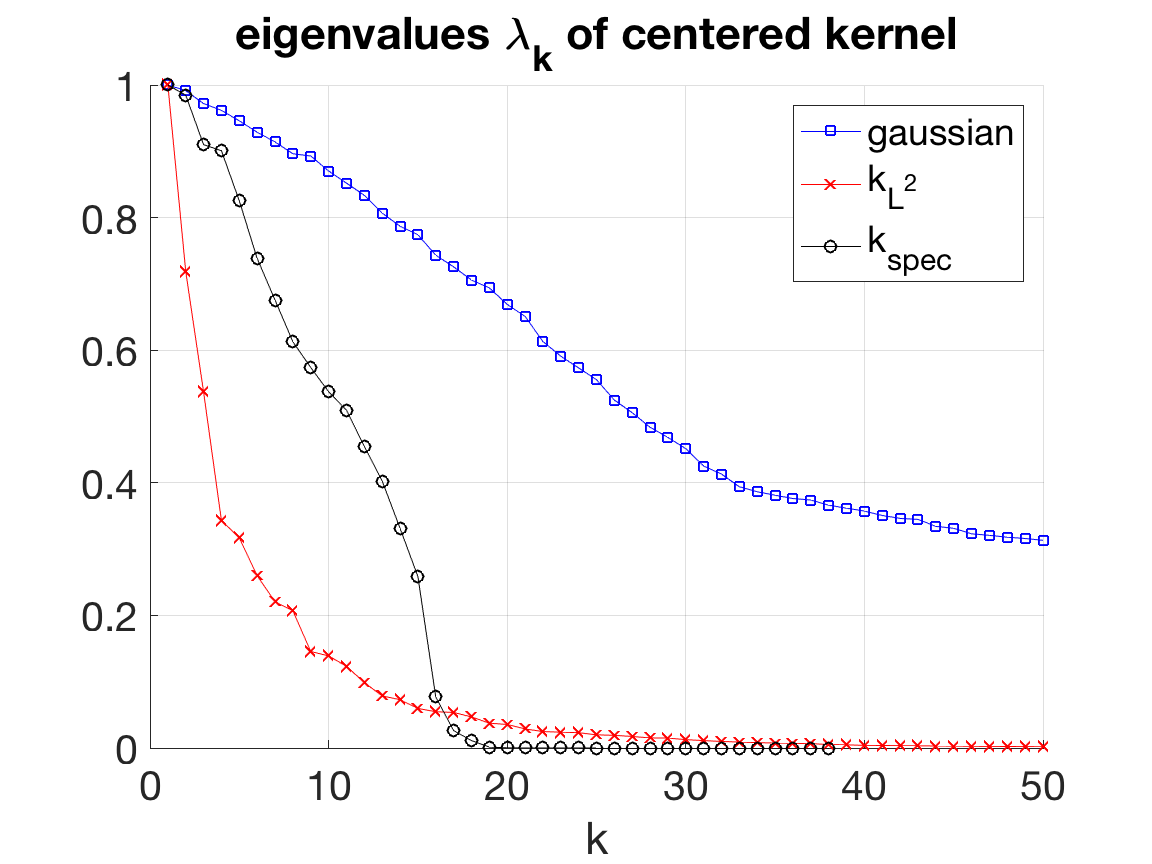} ~~~~~~~~
		\includegraphics[width=0.4\textwidth]{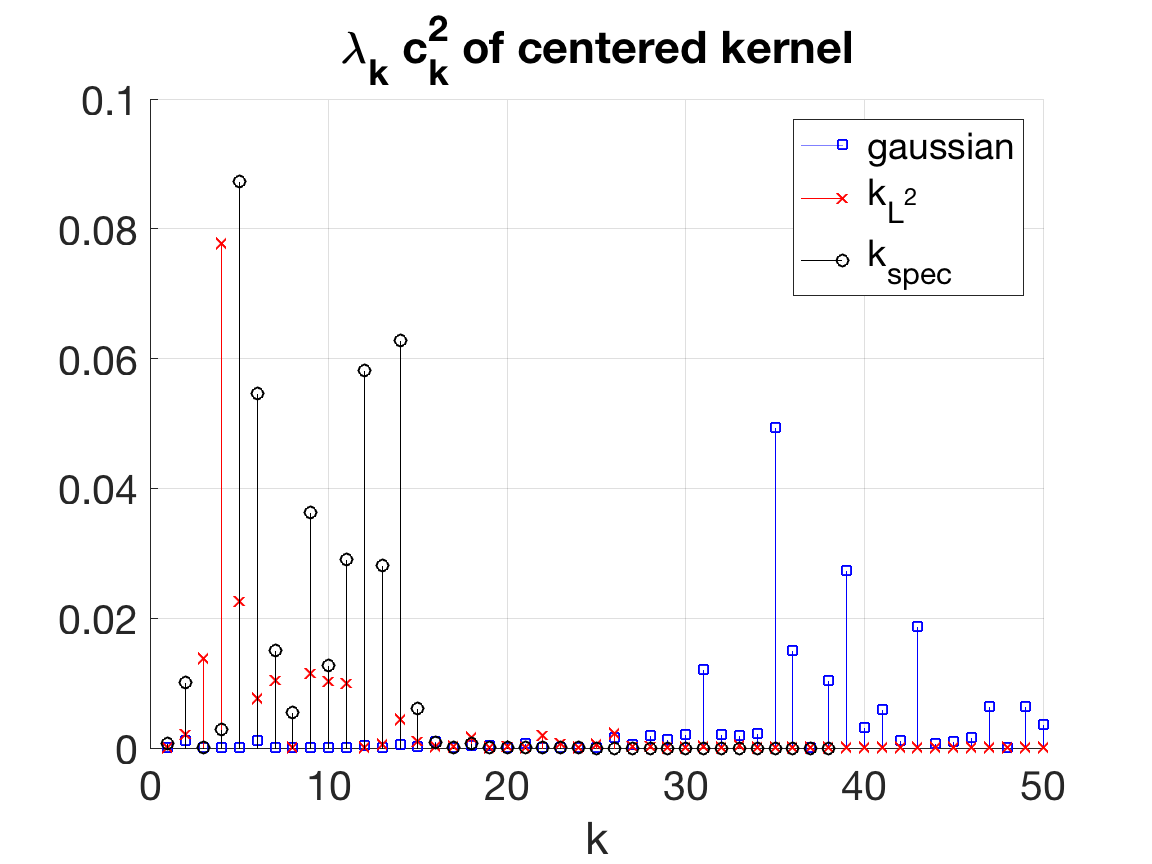}
		\caption{\label{fig:compare3}
			(Left) Eigenvalues $\tilde{\lambda}_k$  of the three kernels:  
			gaussian, $k_{L^2}$ and $k_{\text{spec}}$.
			(Right) Product $\tilde{\lambda}_k c_k^2$ where $c_k$ is the bias for mode $k$ as explained in Section \ref{subsec:compare}.
		}
	\end{centering}
\end{figure}

In this subsection, we compare isotropic and anisotropic kernels by numerically computing the above quantities 
under a specific choice of data distributon.
We will consider three kernels: 
(1) the gaussian kernel $k_{g}(x,y)=e^{-\|x-y\|^{2}/2\epsilon_{x}^{2}},$
(2) the one induced by the anisotropic kernel $k_{L^{2}}(x,y)=\int a(r,x)a(r,y)dr,$
where $dr$ is set to be the Lebesgue measure, and $\Sigma_{r}$ is
constructed so that the tangent direction is the first principle direction
with variance $0.2^{2}$, and the the normal direction is the second
principle direction with variance $\epsilon_{x}^{2}$, and 
(3) the spectral filtered kernel as in Eqn. (\ref{eq:kernel-spec}) where
$f_k$ are designed to be 1 for $1 \le k \le 10$ and smoothly decays to zero at $k = 20$.
All kernels are multiplied by a constant to make the largest eigenvalue
$\tilde{\lambda}_{1}=1$ so as to be comparable.
We also adopt the ratio (similar to the object of kernel optimization considered in \cite{gretton2012optimal})
\[
r=\frac{\theta_{1}-\theta_{0}}{\sigma_{1}+\sigma_{0}}
\]
to illustrate the testing power, where the larger the $r$ the more powerful the test is likely to be.

For the distribution $p$ and $q$,
we use the 2-dimensional example in the first section of the paper. 
Specifically, $p$ is the uniform distribution on the curve $\{(\cos\frac{\pi}{2}t,\sin\frac{\pi}{2}t),\,0\le t\le1\}$ convolved
with $N(0,\epsilon_{x}^{2}I_{2})$, $\epsilon_{x}=0.02$, 
and $q$
is the distribution of the shifted curve 
$\{( (1-\delta) \cos\frac{\pi}{2}t, (1-\delta) \sin\frac{\pi}{2}t),\,0\le t\le1\}$
convolved with $N(0,\epsilon_{x}^{2}I_{2})$, where $\delta=0.02$.
One realization of 200 points in $X$ and $Y$ is shown in Figure \ref{fig:curve1}.
\footnote{
	Strictly speaking $p$ and $q$ are longer compacted supported due
	to convolving with the 2-dimensional gaussian distribution, however,
	the normal density exponentially decays and $\epsilon_{x}$ is small,
	so that there is not any practical difference. The analysis can extend
	by a truncation argument.} 
Let $g=q-p$, 
we consider the one-parameter family of alternative $q=p+\tau g$, 
and will simulate in the critical regime where the test power is less than 1. 
We assume that $n_{1}=n_{2}$ and denote by $n$ in this subsection.

In all experiments, the quantities $\theta_{0}$, $\theta_{1}$, $\sigma_{0}$
and $\sigma_{1}$ are computed by Monte-Carlo simulation over 10000 runs. 
Their asymptotic version (with bar) are computed by 50000 draws of the limiting distribution, 
truncating the summation over $k$ to the first 500 terms; 
the values of $\tilde{\lambda}_{k}$ and $c_{k}=\int\tilde{\psi}_{k}(y)(q(y)-p(y))dy$
are approximated by the empirical eigenvalues and mean of eigenvectors over 10000 sample points drawn from $p$  and $q$
($\tilde{\psi}(y)$ is computed by Nystrom extension). 
The values of
$\tilde{\lambda}_{k}$ and $c_{k}$ are shown in Figure \ref{fig:compare3}, and those
of $\theta_{0}$ etc. in Table \ref{tab:tab2}.

\begin{table}
	\begin{center}
		\begin{tabular}[t]{ c | c c c c | c}
			\hline
			& $\theta_0$ 	& $\theta_1$ 	& $\sigma_0$ 	& $\sigma_1$ 	& $r$ \\
			\hline
			$n=200, \tau=0.5$ & & & & & \\
			
			Gaussian 	         	& 0.4771	& 0.5444	& 0.0677	& 0.0704	& 0.4874	 \\
			& 0.4754	& 0.5439	& 0.0676	& 0.0736	& 0.4848	\\
			$k_{L^2}$  	 	& 0.0489	& 0.0958	& 0.0214	& 0.0306	& 0.9000	 \\
			& 0.0488	& 0.0939	& 0.0214	& 0.0312	& 0.8573	  \\
			$k_{\text{spec}}$ 	& 0.0985	& 0.2046	& 0.0348	& 0.0587	& 1.1351    \\
			& 0.0983	& 0.2013	& 0.0374	& 0.0620	& 1.0354 \\
			\hline
			$n=400, \tau= 0.5/\sqrt{2}$ & & & & & \\
			Gaussian 			&  0.2381	& 0.2720	& 0.0334	& 0.0359	& 0.4885	 \\
			&  0.2379	& 0.2722	& 0.0339	& 0.0368	& 0.4850	 \\
			$k_{L^2}$ 		&  0.0243	& 0.0477	& 0.0107	& 0.0153	& 0.8972 \\ 
			&  0.0244	& 0.0471	& 0.0106	& 0.0157	& 0.8616	\\
			$k_{\text{spec}}$	&  0.0490	& 0.1036	& 0.0177	& 0.0305	& 1.1343	 \\
			& 0.0490	& 0.1003	& 0.0188	& 0.0310	& 1.0290	\\					
			\hline
		\end{tabular}
	\end{center}	
	
\begin{flushleft}
\caption{
	The values of three kernels: gaussian, $k_{L^2}$ and $k_{\text{spec}}$,
	in the example in Section \ref{subsec:compare}.
	In first row are average over Monte-Carlo simulations, 
	and in second row are the approximated values computed according to the limiting distribution identified by Theorem \ref{thm:limit1}.
}\label{tab:tab2}
\end{flushleft}
\end{table}

%

\section{Practical Considerations}\label{sec:practical}

Algorithm \ref{algo1} gives the pseudo code for two-sample test based on the proposed MMD statistics,
where two external subroutines, $\textproc{akMMD}$ and $\textproc{akWitness}$, 
will be given in Algorithm \ref{method:L2} and Algorithm \ref{method:spec} for 
$T_{L^2}$  and $T_{\text{spec}}$ statistics respectively. 
In Algorithm \ref{method:spec},
an extra input parameter $f$,  
which is a positive vector of length $r_f $ ($r_f < \min\{ n_1+n_2, n_R \}$),
is needed.
$f$ is the target spectrum of the kernel $k_{\text{spec}}$ in Eqn. (\ref{eq:kernel-spec}).
Both algorithms compute the threshold $t_\alpha$, 
which is the maximum threshold to guarantee level $\alpha$ (controlled false discovery),
by bootstrapping.
It is also assumed that the reference set $R$ with $\{ \Sigma_r\}_{r \in R}$ are predefined.

In the rest of the section, 
we explain the bootstrapping approach,
the empirical estimator of the witness function,
and the construction of reference set 
in detail. We close the section by commenting on the computation complexity.

\begin{algorithm}[t]
	\caption{Two-sample Test with Anisotropic-kernel MMD (\textproc{akMMD})}\label{algo1}
	
	{\bf Input:} Datasets $X$ and $Y$, function handle $a(r,x)$,
	$n_{\text{boot}}$, 
	data points $x_{\text{witness}}$
	
	{\bf Output:} Acceptance/Rejection of ${\cal H}_0$, 
	the witness function evaluated on $x_{\text{witness}}$
	
	{\bf External:} Subroutines $\textproc{akMMD}$, $\textproc{akWitness}$               
	
	\begin{algorithmic}[1]
		
		\Function{TwoSampleTest}{$X$, $Y$, $a$, $n_{\text{boot}}$, $x_{\text{witness}}$}
		
		\State ${n_1} \leftarrow \text{size}(X)$, ${n_2} \leftarrow \text{size}(Y)$, 
		${n_R} \leftarrow \text{size}(R)$
		\State Compute matrix $A_X \leftarrow \{ a(r,x) \}_{r\in R, x\in X}$ and $A_Y \leftarrow \{ a(r,y) \}_{r\in R, y\in Y}$
		\Comment{$a(r,x)$ as in (\ref{eq:asymKernel})}
		
		\State $T, \, T_{\text{null}} \leftarrow \text{ \textproc{akMMD}($A_X$, $A_Y$, $n_{\text{boot}}$)} $
		\Comment{Subroutine  to compute MMD with permutation}
		
		\State $\alpha \leftarrow 0.05$, $t_\alpha  \leftarrow \text{the $(1-\alpha)$-quantile of $T_{\text{null}}$}$
		\Comment{$\alpha$ is the level of Type I error}
		
		\State $Reject  \leftarrow (T > t_\alpha)$
		\Comment{$Reject$ is a Boolean variable}
		
		\State $w  \leftarrow \text{ \textproc{akWitness}($A_X$, $A_Y$, $a$, $x_{\text{witness}}$)} $
		\Comment{Subroutine to compute witness function}
		
		\State \textbf{return} $Reject$, $w$	
		
		\EndFunction
		
	\end{algorithmic}
	
\end{algorithm}

\begin{algorithm}[t]
	\caption{Methods for $k_{L^2}$-MMD}\label{method:L2}
	\begin{algorithmic}[1]
		\Function{MMD-L2}{ $A_X$, $A_Y$, $n_{\text{boot}}$ }
		\State Concatenate $A  \leftarrow [A_X \, | \, A_Y]$
		\Comment{$A$ is $n_R$-by-$(n_1+n_2)$}	
		
		\State For $i=1,\cdots, n_R$,
		$h_X[i] \leftarrow \frac{1}{n_1} \sum_{j=1}^{n_1} A[i,j]$,
		$h_Y[i] \leftarrow \frac{1}{n_2} \sum_{j=n_1+1}^{n_1+n_2} A[i,j]$,
		\State $T \leftarrow \frac{1}{n_R} \sum_{i=1}^{n_R} (h_X[i]-h_Y[i] )^2 $	
		\Comment{Empirical MMD}
		
		\For{$ k = 1$ to $n_{\text{boot}}$}
		\State $A_{\text{permute}} \leftarrow \text{$A$ with random permuted columns}$
		\State For $i=1,\cdots, n_R$,
		$h_X[i] \leftarrow \frac{1}{n_1} \sum_{j=1}^{n_1} A_{\text{permute}} [i,j]$,
		$h_Y[i] \leftarrow \frac{1}{n_2} \sum_{j=n_1+1}^{n_1+n_2} A_{\text{permute}} [i,j]$,
		\State $T_{\text{null}}[k] \leftarrow \frac{1}{n_R} \sum_{i=1}^{n_R} (h_X[i]-h_Y[i] )^2 $
		\Comment{One sample of MMD under null hypothesis}
		\EndFor
		
		\State \textbf{return} $T$, $T_{\text{null}}$	
		\EndFunction
		
		\Statex
		
		\Function{Witness-L2}{$A_X$, $A_Y$, $a$, $Z$}
		\State For $i=1,\cdots, n_R$,
		$h_X[i] \leftarrow \frac{1}{n_1} \sum_{j=1}^{n_1} A_X[i,j]$,
		$h_Y[i] \leftarrow \frac{1}{n_2} \sum_{j=1}^{n_2} A_Y[i,j]$,
		\State Compute matrix $A_Z \leftarrow \{ a(r,x) \}_{r \in R, x \in Z}$ 
		
		\State ${n_Z} \leftarrow \text{size}(Z)$, for $j=1,\cdots, n_Z$, 
		$w[j] \leftarrow \frac{1}{n_R}  \sum_{i=1}^{n_R}  A_Z[i,j] (h_X[i]-h_Y[i] ) $	
		\State \textbf{return} $w$	
		\EndFunction
		
	\end{algorithmic}
\end{algorithm}

\begin{algorithm}[t]
	\caption{Methods for $k_{\text{spec}}$-MMD}\label{method:spec}
	\begin{algorithmic}[1]
		
		\Function{MMD-Spec}{ $A_X$, $A_Y$, $n_{\text{boot}}$, $f$}
		\Comment{$f \in \R_+^{r_f}$ is the target eigenvalues}	
		\State Concatenate $A  \leftarrow [A_X \, | \, A_Y]$
		\Comment{$A$ is $n_R$-by-$(n_1+n_2)$}	
		\State $U,S,V \leftarrow \textproc{SVD}(A,r_f)$
		\Comment{$U S V^T$ is the best rank-$r_f$ approximation of $A$}
		
		\State $v_X \leftarrow \frac{1}{n_1}(V_{1:n_1,:})^T \textbf{1}_{n_1}$, 
		$v_Y \leftarrow \frac{1}{n_2}(V_{n_1+1:n_1+n_2,:})^T \textbf{1}_{n_2}$
		\Comment{$\textbf{1}_{m}$ is all-ones vector of length $m$}
		\State $T \leftarrow (v_X-v_Y)^T \text{Diag}\{f\} (v_X-v_Y)$
		\Comment{Empirical MMD}
		
		\For{$ k = 1$ to $n_{\text{boot}}$}
		\State $V_{\text{permute}} \leftarrow \text{$V$ with random permuted rows}$
		\State $v_X \leftarrow \frac{1}{n_1}(V_{1:n_1,:})^T \textbf{1}_{n_1}$, 
		$v_Y \leftarrow \frac{1}{n_2}(V_{n_1+1:n_1+n_2,:})^T \textbf{1}_{n_2}$			
		\State $T_{\text{null}}[k] \leftarrow  (v_X-v_Y)^T \text{Diag}\{f\} (v_X-v_Y) $
		\Comment{One sample of MMD under null hypothesis}
		\EndFor	
		
		\State \textbf{return} $T$, $T_{\text{null}}$	
		\EndFunction
		
		\Statex
		
		\Function{Witness-Spec}{$A_X$, $A_Y$, $a$, $Z$, $f$}
		\State Concatenate $A  \leftarrow [A_X \, | \, A_Y]$, $U,S,V \leftarrow \textproc{SVD}(A,r_f)$
		\State $v_X \leftarrow \frac{1}{n_1}(V_{1:n_1,:})^T \textbf{1}_{n_1}$, 
		$v_Y \leftarrow \frac{1}{n_2}(V_{n_1+1:n_1+n_2,:})^T \textbf{1}_{n_2}$
		\State Compute matrix $A_Z \leftarrow \{ a(r,x) \}_{r \in R, x \in Z}$ 
		\State $V_Z  \leftarrow  S^{-1}U^T A_Z$
		\Comment{$V_Z$ is $r_f$-by-$n_Z$, $n_Z$ the size of $Z$}	
		\State $w \leftarrow V_Z^T \text{Diag}\{f\} (v_X-v_Y)$	
		
		\State \textbf{return} $w$	
		\EndFunction	
	\end{algorithmic}
\end{algorithm}

\subsection{Permutation test and choice of the threshold $t_\alpha$}\label{subsec:permutation_test}

The MMD statistics $T$ introduced in Section \ref{sec:theory} is non-parametric, 
which means that the threshold  $t_\alpha$ does not have a closed-form expression for finite $n$.
In practice, one may use a classical method known as the 
\emph{permutation test} \cite{higgins2003introduction},
which is a bootstrapping strategy to empirically estimate $t_\alpha$, and previously used in RKHS MMD \cite{gretton2012kernel}.
The idea is to model the null hypothesis by pooling the two datasets, resampling from that pool, and computing one instance of the MMD under that null hypothesis. Repetitive resampling thus corresponds to permuting the joint dataset multiple times,
and it generates a sequence values of of $T$, the $(1-\alpha)$-quantile of which is used as $t_\alpha$
(see both Algorithm \ref{method:L2} and Algorithm \ref{method:spec}).
Strictly speaking, the power of the test is slightly degraded due to the empirical estimate of the null hypothesis.  
However, as we see in Section \ref{sec:applications}, the test still yields very strong results in a number of examples.

Meanwhile, the limiting distribution of the statistics $T_n$ appears to well approximate the empirical one under ${\cal H}_0$, as shown in Section \ref{subsec:compare}.
This suggests the possibility of determining $t_\alpha$ based on the empirical spectral decomposition of the kernel. In the current work we focus on the bootstrapping approach (permutation test) for simplicity.

\subsection{Empirical witness functions}\label{subsec:witness_empirical}

The population witness functions have been introduced in Section \ref{subsec:witness}, and, by definition,
can be evaluated at any data point $x \in \Omega$.
For the kernel $k_{L^2}$, the empirical version of Eqn. (\ref{eq:wL2population}) is 
\begin{equation}\label{eq:w-L2}
\hat{w}_{L^{2}}(x)=\frac{1}{n_{R}}\sum_{r\in R}a(r,x)(\hat{h}_{X}(r)-\hat{h}_{Y}(r)),
\end{equation}
which can be computed straightforwardly from the empirical histograms $\hat{h}_X$, $\hat{h}_Y$ and the pre-defined kernel $a(r,x)$, 
see Algorithm \ref{method:L2}.

For the kernel  $k_{\text{spec}}$, Eqn. (\ref{eq:wspecpopulation}) is approximated by 
\begin{equation}\label{eq:w-spec}
\hat{w}_{\text{spec}}(x)
=\sum_{k}f_{k}\hat{\psi}_{k}(x)\left(
\frac{1}{n_{1}} \sum_{i=1}^{n_{1}} \hat{\psi}_{k}(x_{i})-\frac{1}{n_{2}}\sum_{j=1}^{n_{2}}\hat{\psi}_{k}(y_{j}) \right),
\end{equation}
where $\hat{\psi}_{k}(x_i)$, $\hat{\psi}_{k}(x_j)$ are computed by SVD of the assymetric kernel matrix, 
and the out-of-sample extension to $\hat{\psi}_{k}(x)$ by Nystrom method,
see Algorithm \ref{method:spec}.

\subsection{Sampling of the reference set}\label{sec:ref_heuristic}

In bio-informatics applications e.g. flow cytometry, 
datasets are usually large in volume so that one can construct the reference set $R$ and covariance field $\{\Sigma_r\}_r$ from a pool of data points, which combines multiple samples, before the differential analysis in which MMD is involved. 
In some application scenario (like diffusion MRI), $R$ and $\{\Sigma_r\}_r$ are provided by external settings.
Thus we treat the procedure of constructing $R$ and $\{\Sigma_r\}_r$ separately from the two-sample analysis, 
which is also assumed in the theory.

In experiments in this work, we sample the reference points randomly in Lebesgue measure using the following heuristic:
given a pool of data points e.g. drawn from $p$ and $q$ or subsampled from larger datasets, 
the procedure loops over batches until $n_R$ points have been generated.
In each loop,
a batch of points are loaded from the pool, 
and candidate points are sampled from the batch according to $p_i^{-1}$ which is the KDE of each point in the batch.
The candidate points are giggled, and points  which have too few neighbors in the pool dataset are excluded. 
The code can be found in the software github repository \url{https://github.com/AClon42/two-sample-anisotropic}.

The covariance fielded is computed by local PCA, when needed. This is related to the issue of ``$\sigma$-selection" in gaussian MMD \cite{gretton2012kernel}, 
where a ``median heuristic" was proposed to determine the $\sigma$ in the gaussian kernel $k_\sigma(x)=e^{-\|x\|^2/2\sigma^2}$.
In our setting, the extension of ``$\sigma$" is the local covariance matrix $\Sigma_{r}$, 
which allows different $\sigma$ at different points, apart from different $\sigma$ along different local directions. 
In estimating $\Sigma_r$, a controlling parameter is  then the size of the local neighborhood, i.e. the $k$-nearest neighbors from which local PCA is computed. 
In the manifold setting, there are strategies to choose $k$ so as to most efficiently estimate local covariance matrix, see e.g. \cite{little2016multiscale}, and it is best done by using different $k$ at different point. 
For simplicity,  in all the experiments we set $k$ to be a fraction of the total number of samples, which may be sub-optimal.
We also introduce a parameter $\tilde{\sigma} > 0$ and set $\Sigma_r = \tilde{\sigma}^2 \Sigma^0_r$, where  $\Sigma^0_r$ is computed via local PCA, so as to make the ``size" of $\Sigma_r$ tunable. 
The parameter $\tilde{\sigma}$ is sampled on a binary grid ($2^k$ for $k=-2,\cdots,2$).

\subsection{Computational complexity}\label{sec:compcomplexity}

We will only discuss the cost of computing the MMD statistics, namely that of $\textproc{MMD-L2}$ (Algorithm \ref{method:L2}) and 
$\textproc{MMD-Spec}$ (Algorithm \ref{method:spec}). 
The extra cost for computing the witness function is negligible, as can be seen from the code (some computation, e.g. that of $h_X$ and $h_Y$ in $\textproc{Witness-L2}$, and the SVD of $A$ in $\textproc{Witness-spec}$ are repetitive for illustrative purpose.)

We firstly discuss $\textproc{MMD-L2}$: 
The cost for computing one empirical MMD statistics $T$ is of $O(n \cdot n_R)$, where $n=n_1+n_2$ is the size of the two samples. The main cost is the one-time construction of the asymmetric kernel matrix $A=[ A_X | A_Y ]$, which also dominates the memory requirements. While the choice of the reference set, and hence the number of reference points $n_R$, is related to the problem being addressed, any amount of structure in the samples leads to a choice of $n_R$ that is $o(n)$. 
This yields major computational and memory benefits as the test complexity is much smaller than the $O(n^2)$, which is the order for computing the U-statistics via a symmetric gaussian kernel without extra techniques. In all applications we've considered and synthetic examples we've worked with, $n_R$ is in the tens or hundreds, and can remain relatively constant as $n$ grows while still yielding a test with large power.  This means the test with an anisotropic kernel becomes linear in the number of points being tested.

In $\textproc{MMD-spec}$, the extra computation is for the rank-$r_f$ SVD of the $n_R$-by-$n$ matrix $A$. 
The computation can be done in $O(n_R n k)$ time via the classical pivoted QR decomposition,
and  may be accelerated by modern randomized algorithms, e.g. the approach in \cite{woolfe2008fast} which takes 
$O(n_R n \log (k) + n k^2)$ time to achieve an accuracy proportional to  the magnitude of the $(k+1$)-th singular value.
The computational cost may be furtherly reduced by making use of the sparse/low-rank structure of the kernel matrix $A$ when possible.
When the kernel $a(r,x)$ almost vanishes outside a local neighborhood of $r$ which has at most $s$ points,
the rows of $A$ are $s$-sparse, 
and then fast nearest neighbor search methods (by Kd-tree or randomized algorithm) can be applied under the local Mahalanobis distance 
and the storage is reduced to $O(n_R s)$. 
When $A$ has a small numerical rank, e.g. only $k$ singular values are significantly nonzero, 
$A$ can be stored in its rank-$k$ factorized form $(U,S,V)$ which can be computed in linear time of $n$.
This will save computation in bootstrapping as both $T_{L^2}$ and $T_{\text{spec}}$ can be computed from $S$ and (row-permuted) $V$ only.

%

\section{Applications}\label{sec:applications}

\subsection{Synthetic examples}\label{subsec:synthetic}

\subsubsection{Example 1: Curve in 2 dimension}

\begin{figure}[t]
\footnotesize
\begin{center}
\begin{tabular}{ccccc}
	\includegraphics[height=.15\textwidth]{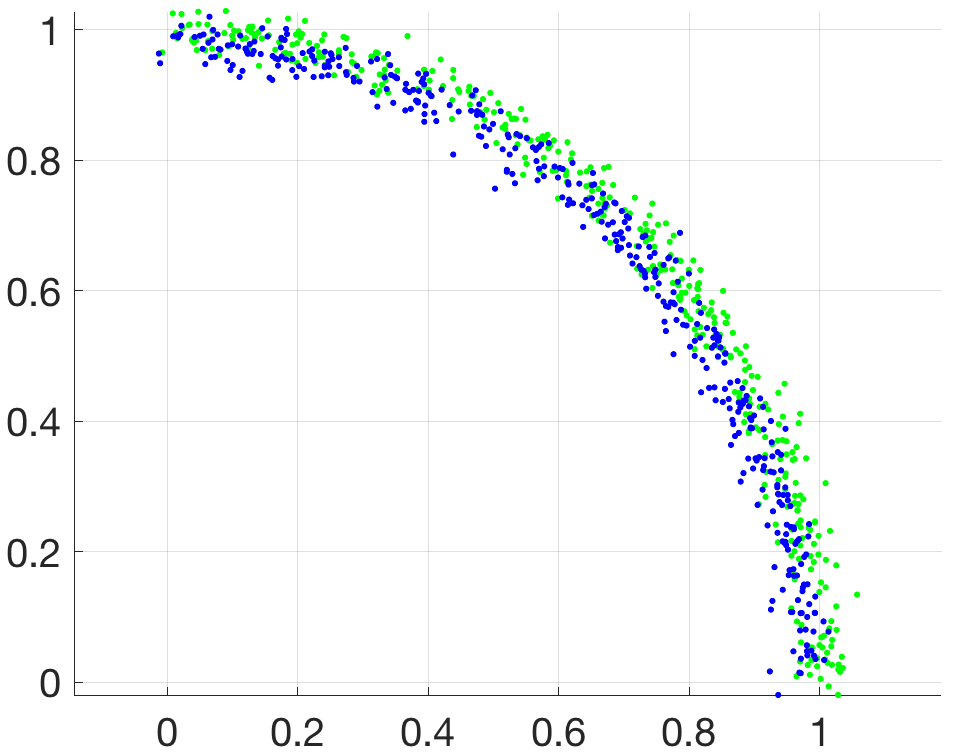}& 
	\includegraphics[height=.15\textwidth,width=.18\textwidth]{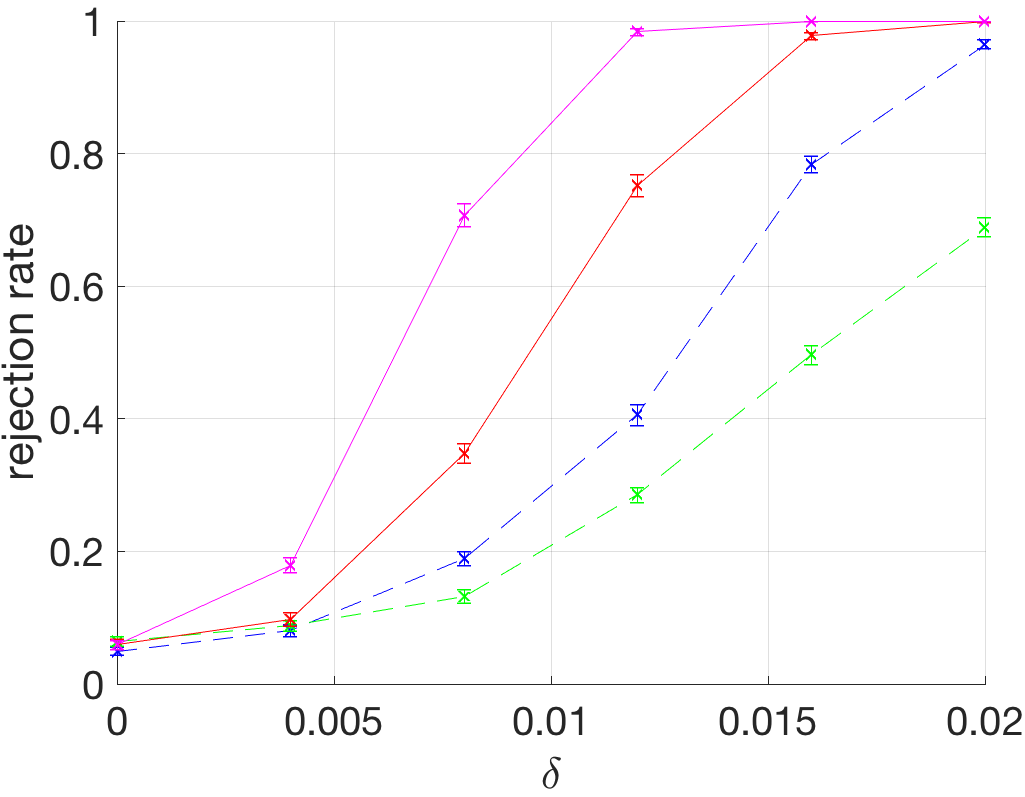}&
	\includegraphics[height=.15\textwidth,width=.18\textwidth]{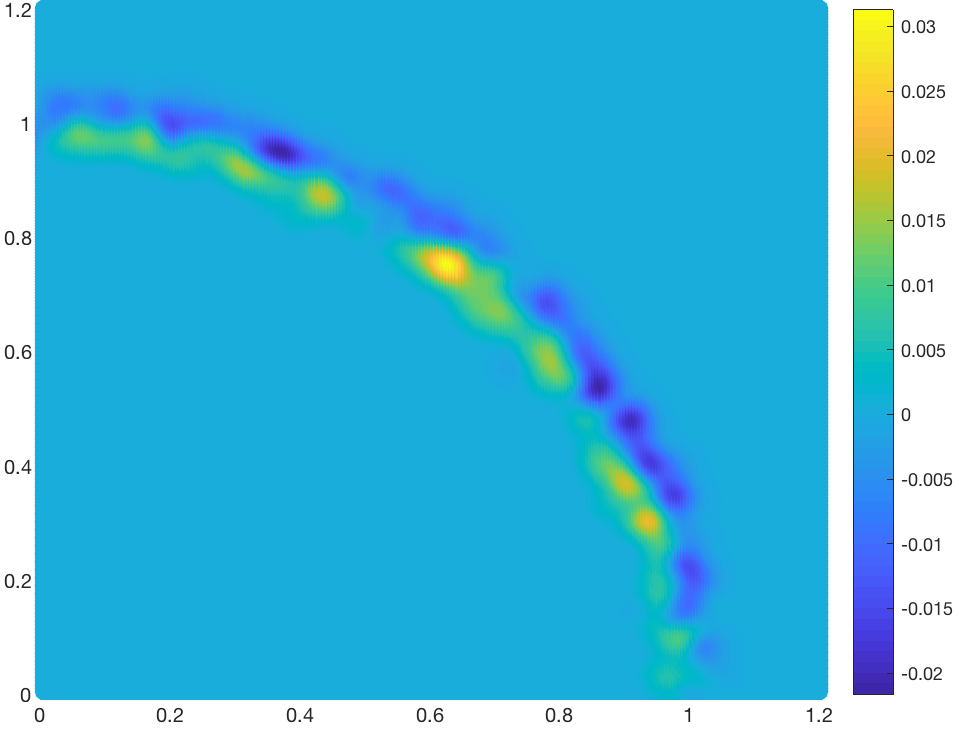}&\hspace{-1cm}
	\includegraphics[height=.15\textwidth,width=.18\textwidth]{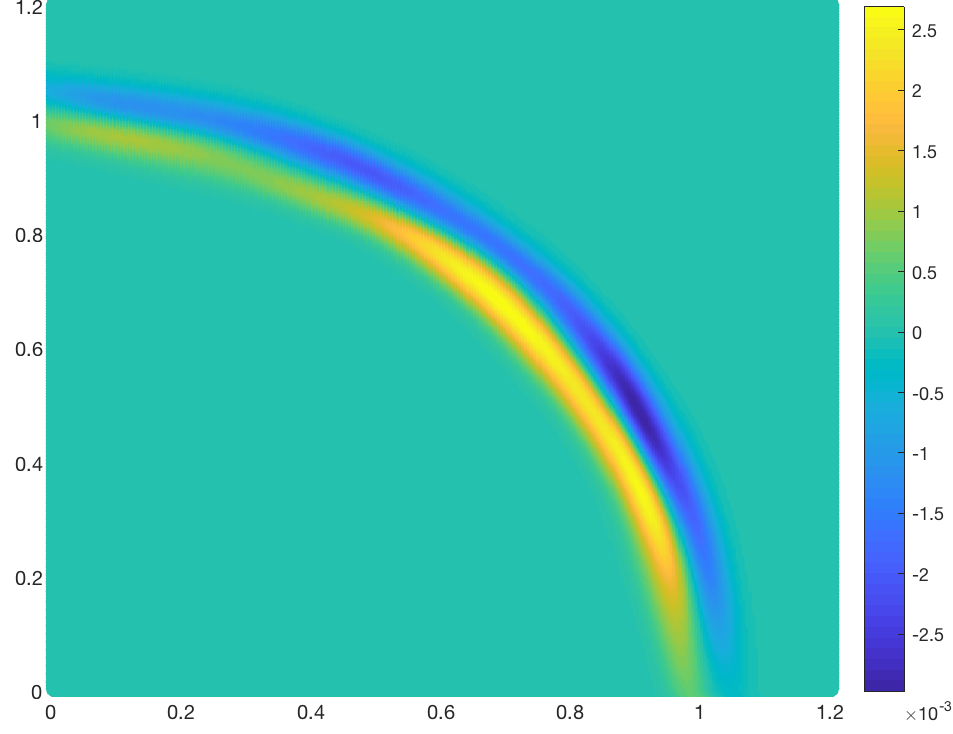}&\hspace{-1cm}
	\includegraphics[height=.15\textwidth,width=.18\textwidth]{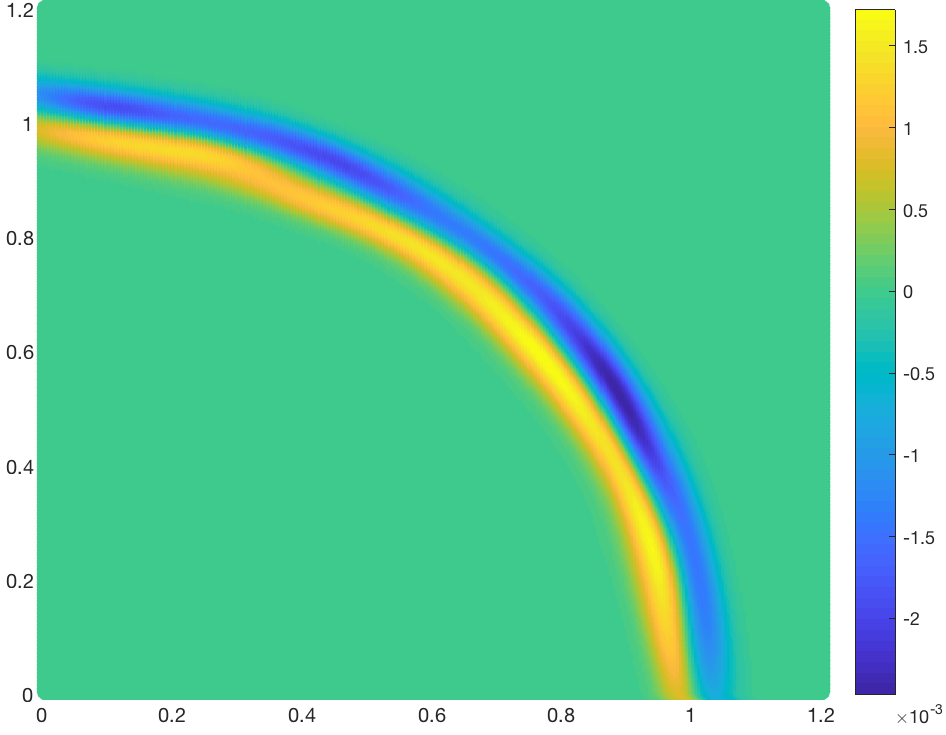}\\
	A1 &\hspace{-1cm}  A2 & A3& \hspace{-1cm}  A4  &\hspace{-1cm} A5
\end{tabular}
\end{center}
	\caption{
		\label{fig:curve1}
		Example 1 in Section \ref{subsec:synthetic},  two curves in  $\R^2$.
		A1: Two samples $X$ and $Y$ in $\R^2$ in green and blue respectively.
		A2: testing power of MMD using 
			(blue) gaussian kernel,
			(red) anisotropic kernel $k_{L^2}$,
			(pink) spectral-filtered anisotropic-kernel $k_{\text{spec}}$,
			and (green) summed KS distance from 20 random projections to 1D.
		Witness functions for two sample test of 
		(A3) gaussian kernel, 
		(A4) $k_{L^2}$ and 
		(A5) $k_{\text{spec}}$ respectively.
	}
\end{figure}

The density $p$ is the uniform distribution on a quarter circle with radius 1 convolved with ${\cal N}(0,\epsilon_x^2)$,
$q$ on a quarter circle with radius $1-\delta$ convolved with ${\cal N}(0,\epsilon_x^2)$,
which is the  same example in Section \ref{subsec:compare} (top left in Figure \ref{fig:compare1}), $\epsilon_x =0.02$.
The departure is parametrized by $\delta > 0$, taking values from 0 to 0.02.
The rejection rate is computed from the average over $1000$ runs according to Algorithm \ref{algo1}, 
as shown in the left of Figure \ref{fig:curve1}, where the errorbars indicate the standard deviation.
For the gaussian kernel, the curve corresponds to the $\sigma$ which has the best performance over a grid, and it is obtained by an intermediate value on the grid.
The witness functions of the three kernels when $\delta=0.2$ are computed  according to Section \ref{subsec:witness_empirical},
and shown in Figure \ref{fig:curve1}.
 
\subsubsection{Example 2: Gaussian mixture in 3 dimension}

\begin{figure}[t]
	\includegraphics[width=.33\textwidth]{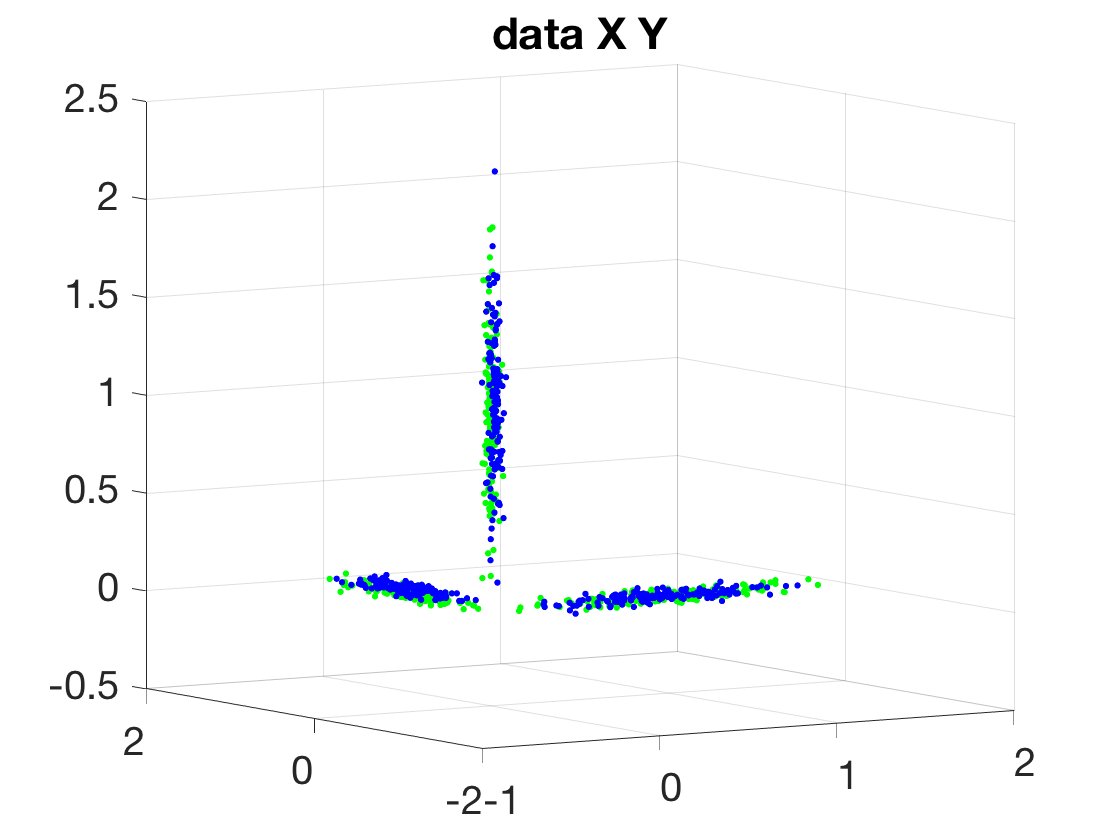}~
	\includegraphics[width=.33\textwidth]{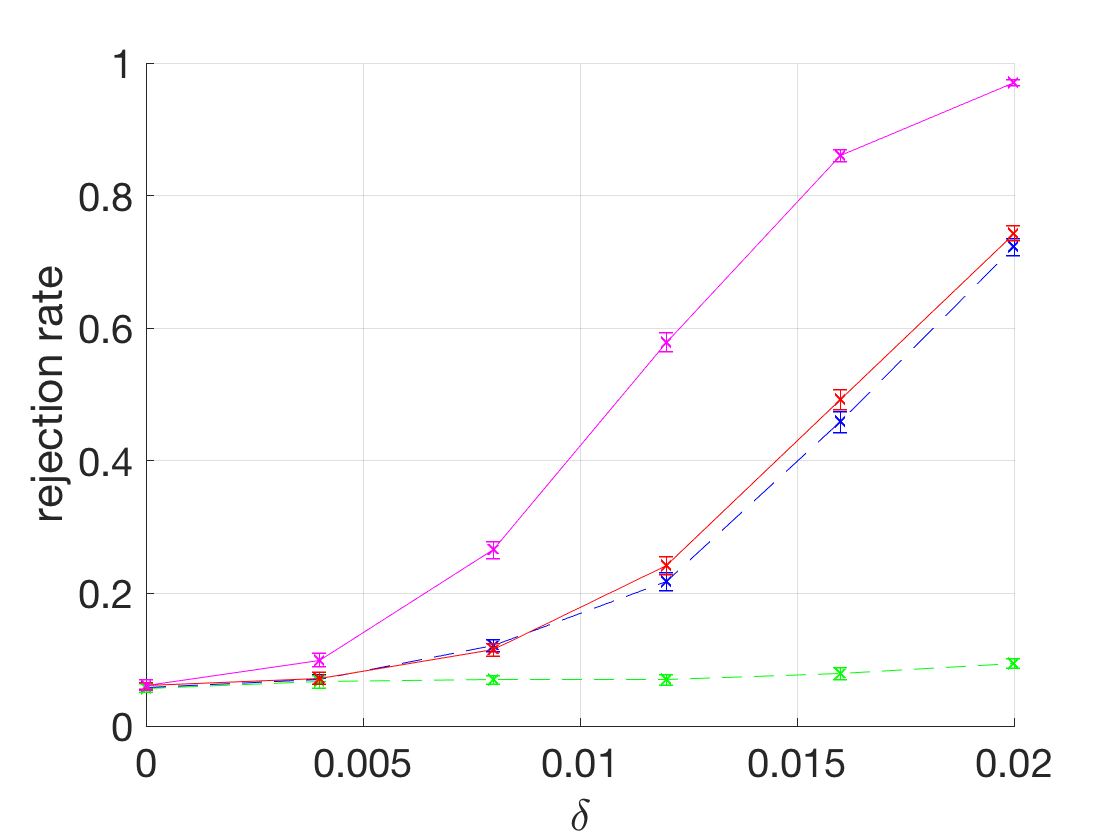}~
	\includegraphics[width=.33\textwidth]{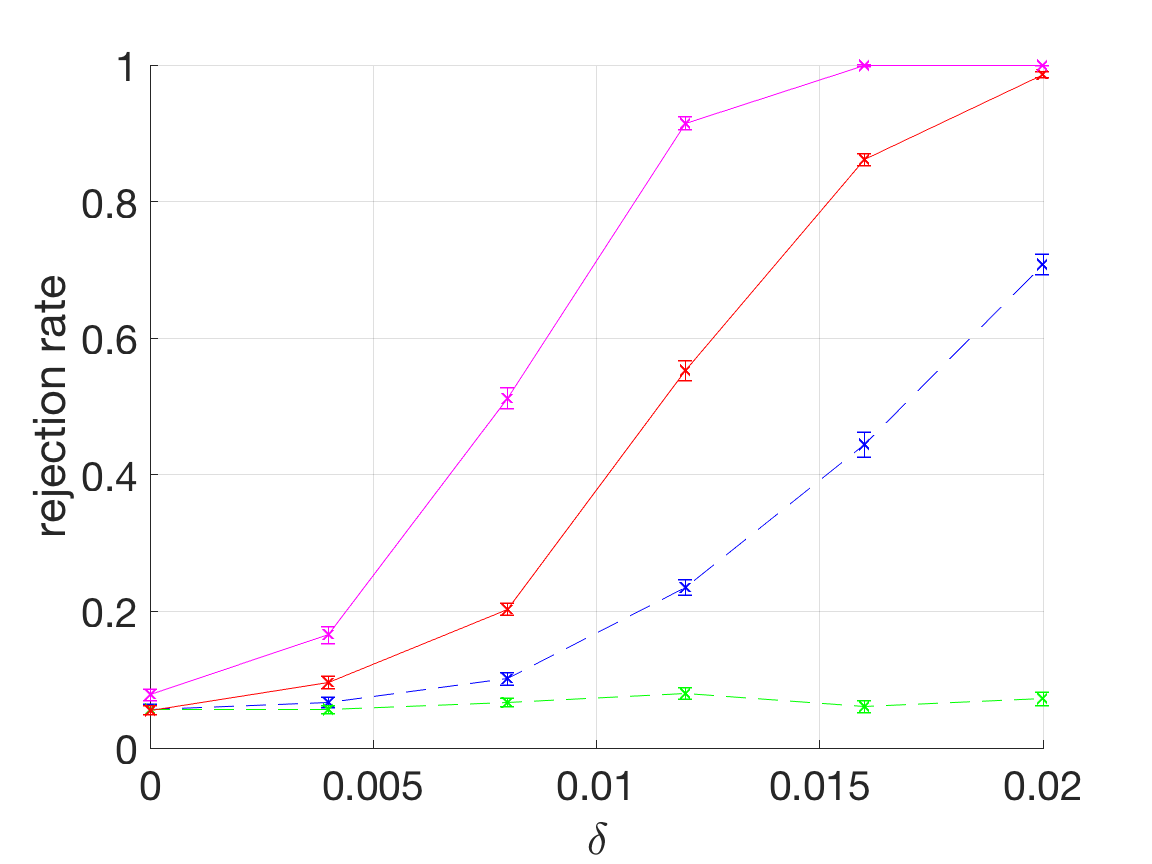}
	\caption{
		\label{fig:mixture1}
		Example 2 in Section \ref{subsec:synthetic}: two gaussian mixtures in  $\R^3$.
		(Left) two samples,
		(Middle) testing power using $\{\Sigma_r\}_r$ determined by local covariance estimation,
		(Right) testing power using $\{\Sigma_r\}_r$ by the true covariance matrix of the belonging cluster.
	}
\end{figure}

Two gaussian mixtures in $\R^3$, $n_1=n_2=200$, $n_R=100$.  The density $p$ consists of 3 equal-size components centering at $(1,0,0)^T$, $(0,1,0)^T$ and $(0,0,1)^T$ respectively, with diagonal covariance matrix $\text{diag}\{ (\frac{1}{3})^2, \epsilon_x^2,  \epsilon_x^2\}$, $\text{diag}\{ \epsilon_x^2, (\frac{1}{3})^2,  \epsilon_x^2\}$ and $\text{diag}\{ \epsilon_x^2,  \epsilon_x^2, (\frac{1}{3})^2\}$, $\epsilon_x=0.02$; and density $q$ differs from $p$ by centering the three means at $(1,\delta,0)^T$,  $(0, 1,\delta)^T$ and $(\delta,0,1)^T$, while letting the covariance matrices remain the same. The constant $\delta$ takes value from 0 to 0.02.
Results shown in Figure \ref{fig:mixture1}, similar to Figure \ref{fig:curve1}.

\subsection{Flow cytometry data analysis}
Flow Cytometry is a laser based technology used for cell counting.  It is routinely used in diagnosing blood diseases by measuring various physical and chemical characteristics of the cells in a sample.  This leads to each sample being represented by a multi-dimensional point cloud of tens of thousands of cells that were measured.  

Two natural questions arise in comparing various flow cytometry samples.  The first is a supervised learning question: can a statistical test detect the systemic differences between a set of healthy people and a set of unhealthy people?  The second question is an unsupervised learning question: can the distance measure be used to determine the pairwise distance between any two samples, and will that distance matrix yield meaningful clusters?

We address both of these questions for two different flow cytometry datasets.  We compare isotropic gaussian MMD to anisotropic MMD and demonstrate, in all cases, our anisotropic MMD test has more power and yields more accurate unsupervised clusters.  We only show the pairwise distance embedding for the reference set asymmetric kernel $a(r,x)$ due to the untenable $O(n^2)$ cost of computing each of the ${k}\choose{2}$ pairs.

\subsubsection{AML dataset}

\begin{figure}[t]
\footnotesize
	\begin{center}
		\begin{tabular}{ccc}
			\includegraphics[height=.2\textwidth]{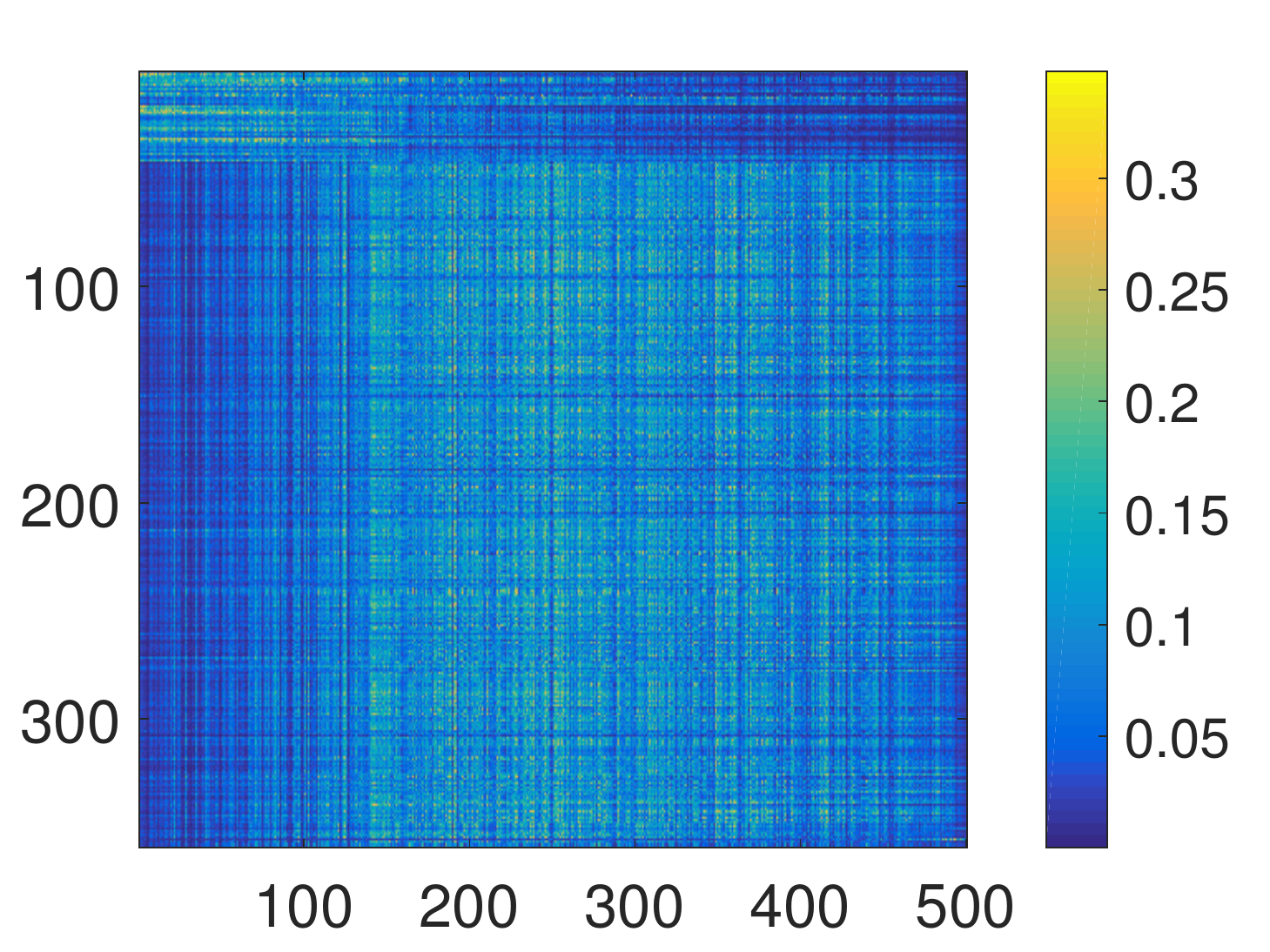} & \hspace{-1cm}
			\includegraphics[height=.2\textwidth]{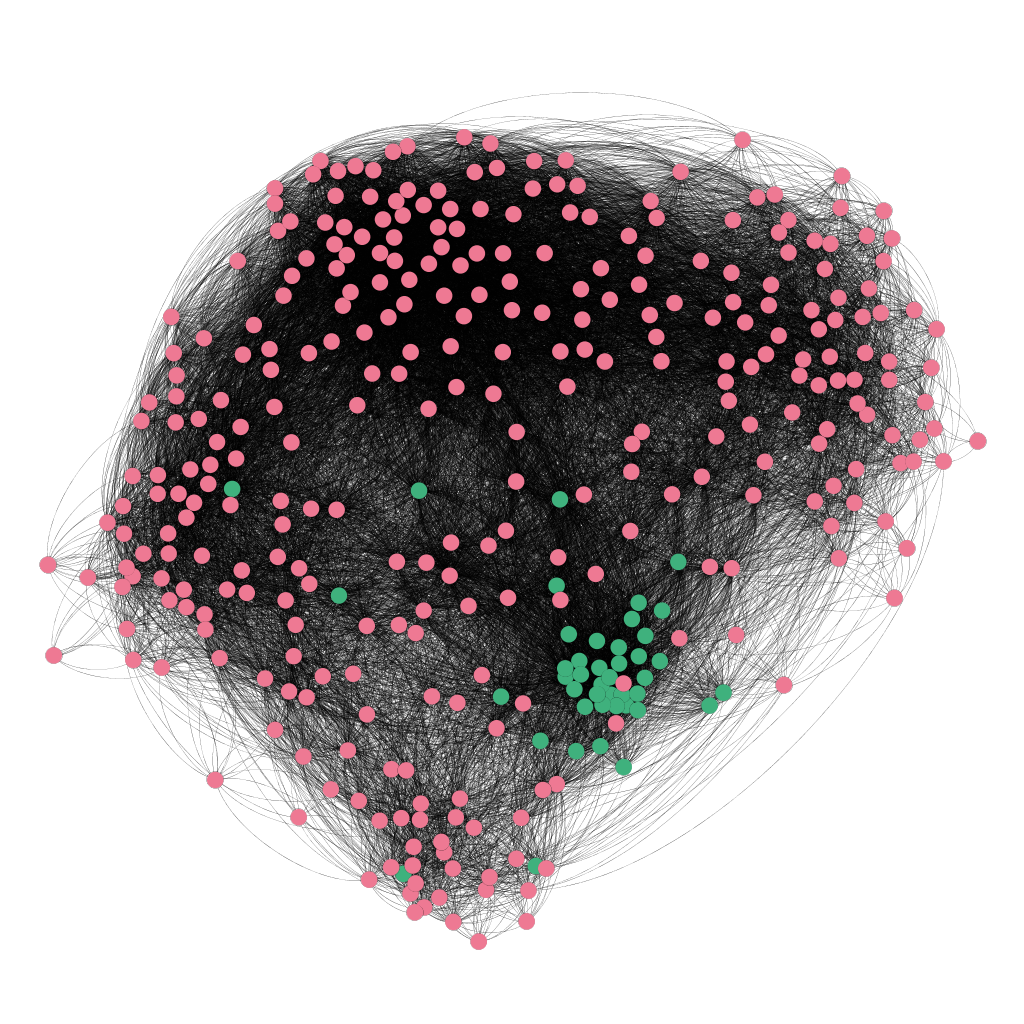} &
						\includegraphics[height=.2\textwidth]{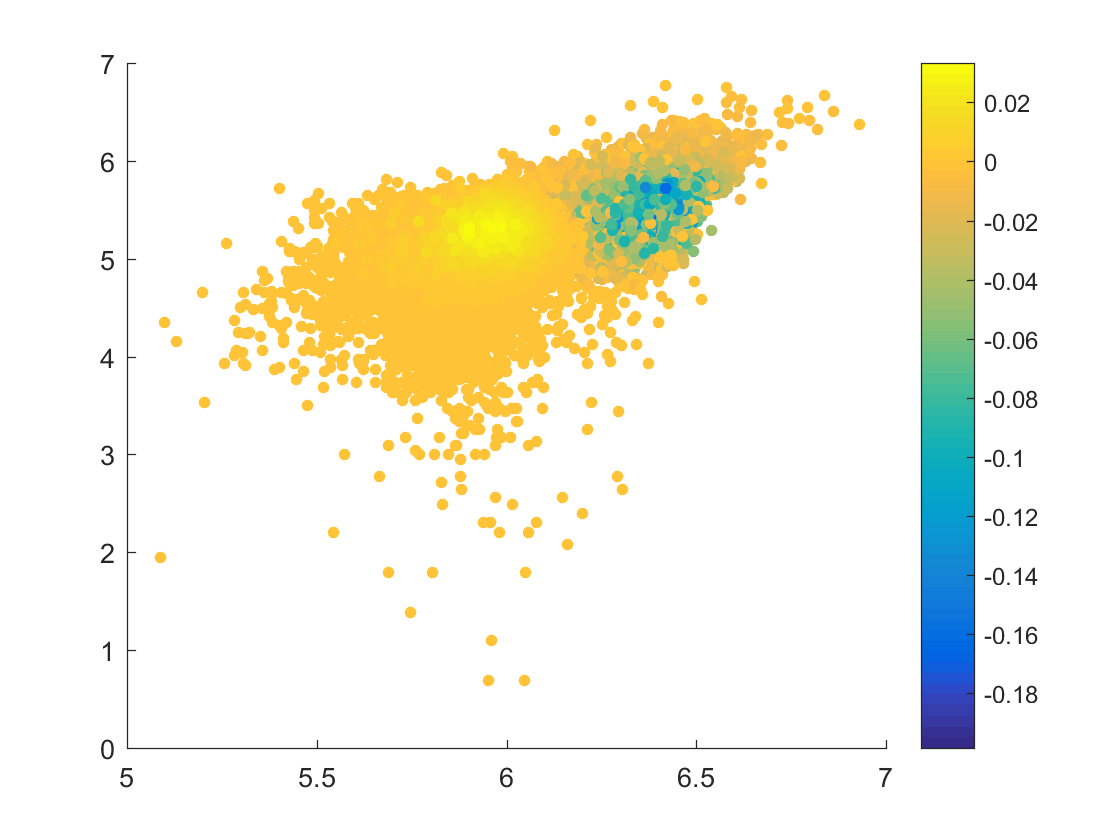} \\
			A1 & \hspace{-1cm} A2 & B1 \\
			\includegraphics[height=.2\textwidth]{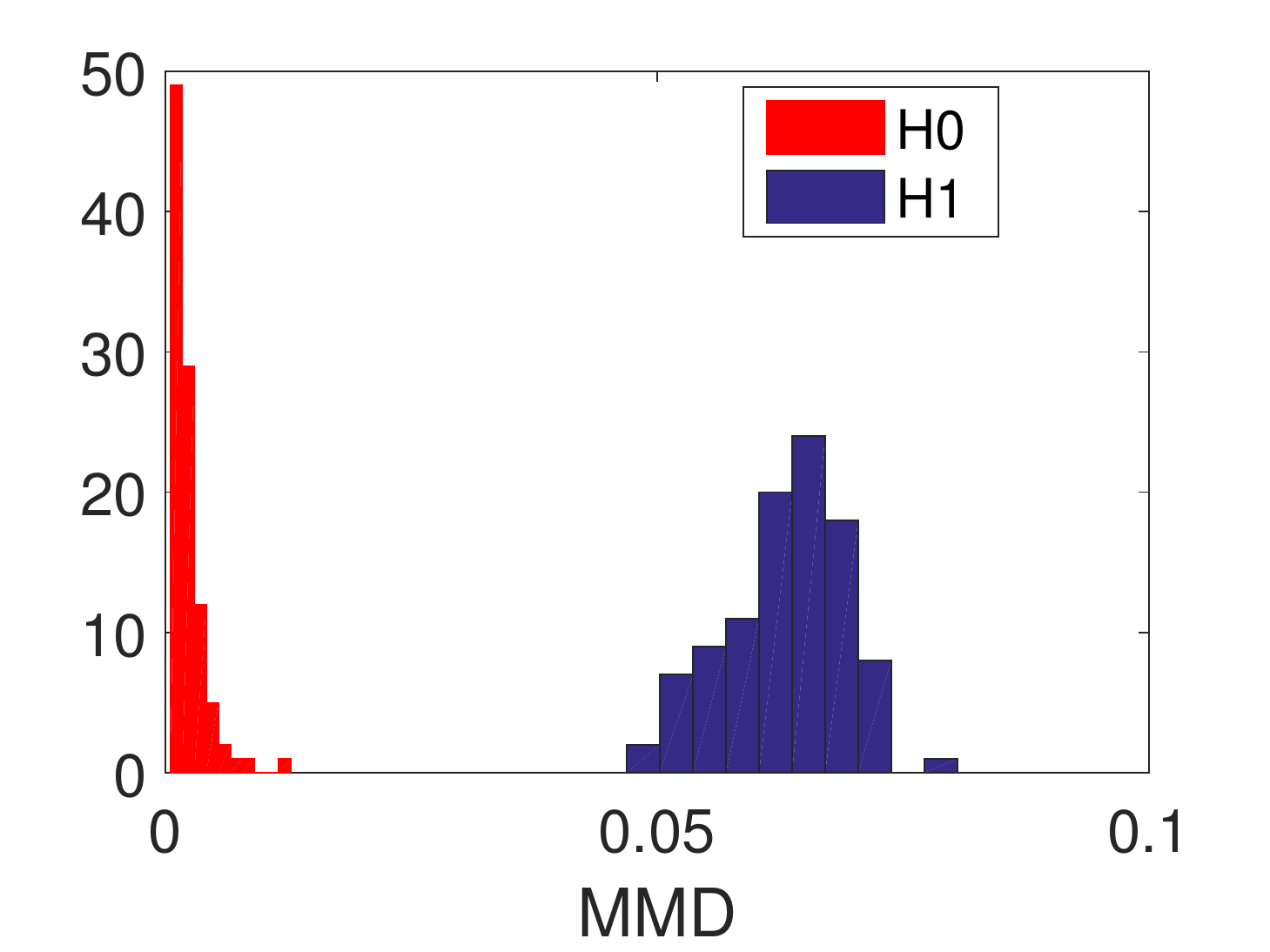} & \hspace{-1cm}
			\includegraphics[height=.2\textwidth]{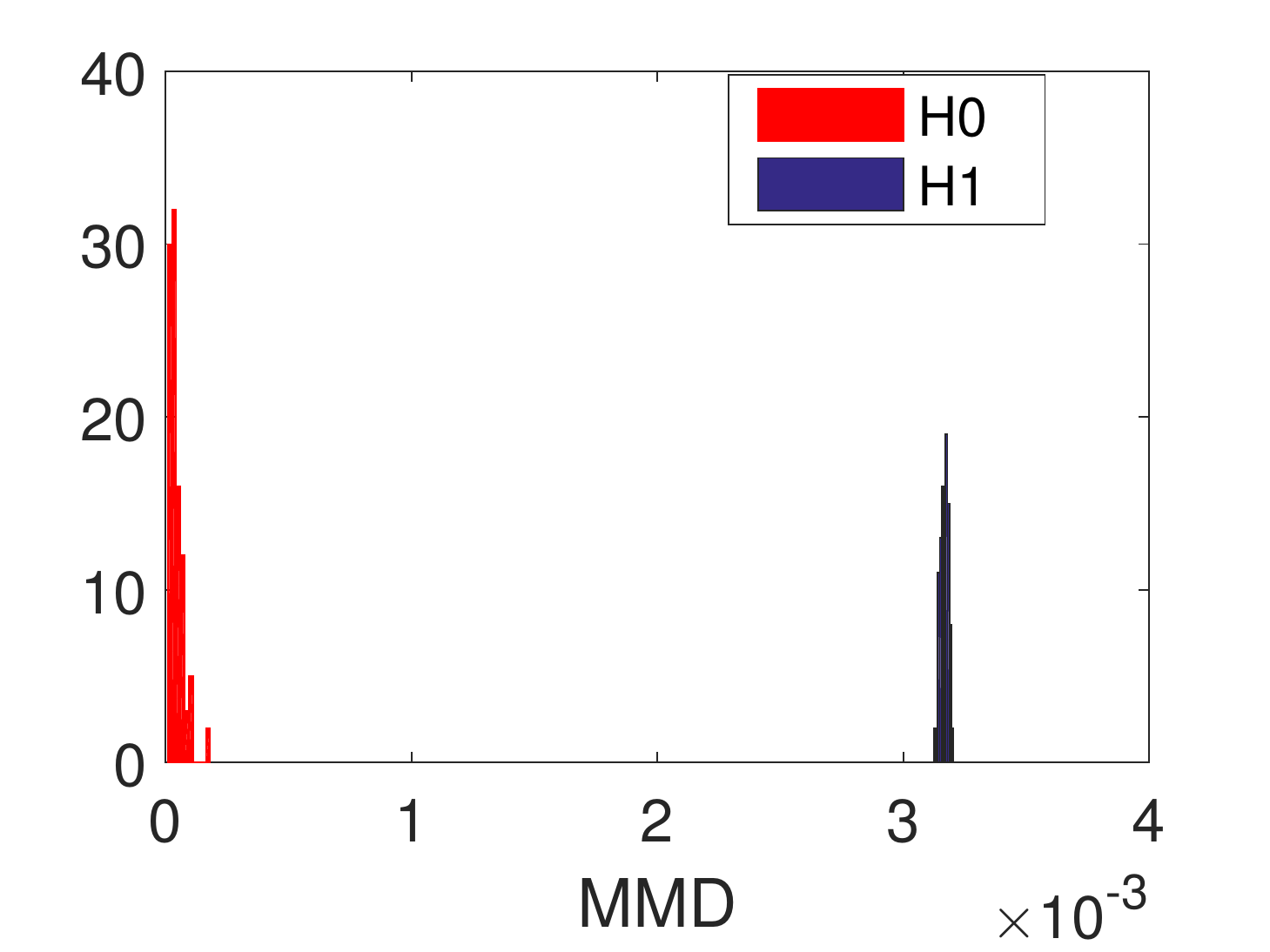} & 
					\includegraphics[height=.2\textwidth]{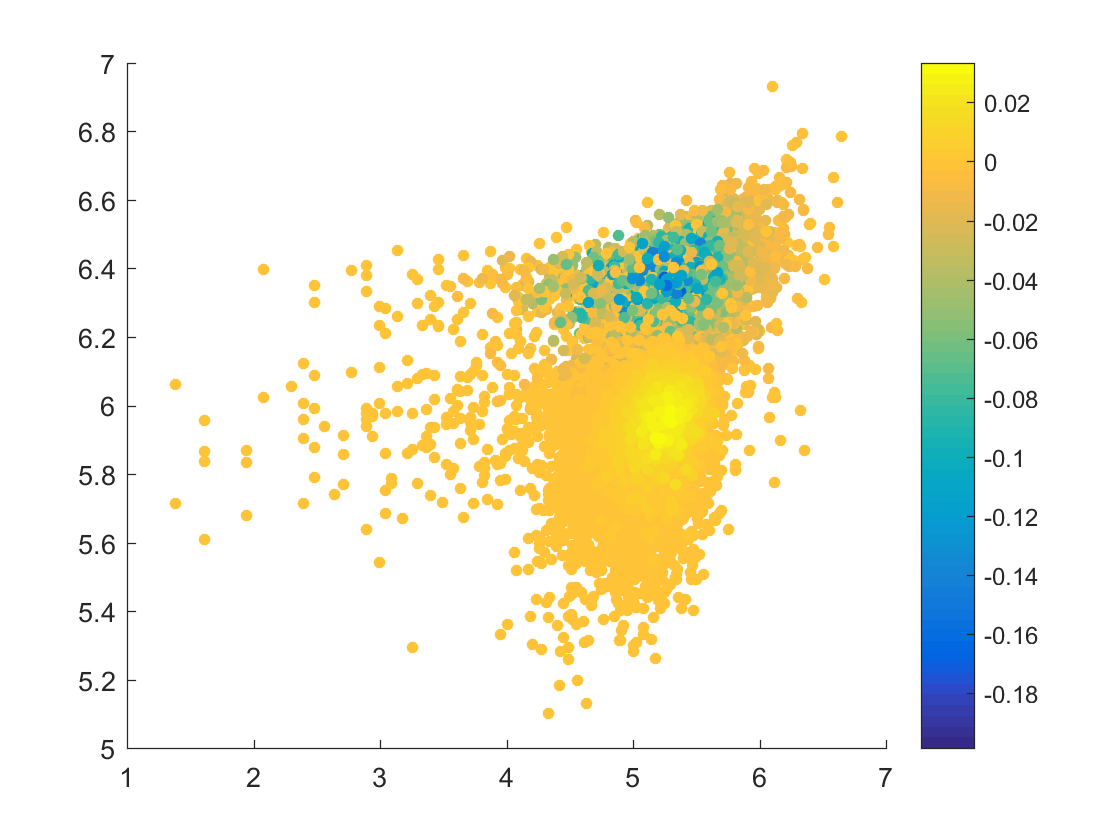} \\
			A3 & \hspace{-1cm} A4 & B2
		\end{tabular}
	\end{center}
	\caption{\label{fig:aml}
		A1: unsupervised histograms 
		for AML patients, A2: network clustering of pairwise distances between patients (green for AML, red for healthy),
		A3: permutation test with isotropic kernel, A4: permutation test with anisotropic kernel.
		B1-B2: different 2D slices of point cloud colored by witness function. Blue are cells more likely to be indicative of AML.
		}
\end{figure}

Acute myeloid leukemia (AML) is a cancer of the blood that is characterized by a rapid growth of abnormal white blood cells.  While being a relatively rare disease, it is incredibly deadly with a five-year survival rate of $27\%$ \cite{deschler2006acute}.  AML is diagnosed through a flow cytometry analysis of the bone marrow cells, and while there are features that distinguish AML in certain cases, other cases can be more difficult to detect.  
State of art supervised methods include SPADE, Citrus, etc. See \cite{bruggner2014automated} and the references therein.

The data is from the public competition Flow Cytometry: Critical Assessment of Population Identification Methods II (FlowCAP-II).  This dataset consists of 316 healthy patients and 43 patients with Acute myeloid leukemia (AML), where each person has approximately $30,000$ cells measured across 7 physical and chemical features.  

We use an anisotropic kernel with only $500$ reference points.  We create an unsupervised clustering by computing the pairwise distances $d^2[i,j] = T_{L^2}$ between person $i$ and person $j$.  In Figure \ref{fig:aml}, we display the network of these people constructed by weighting each edge as the exponential of the negative distance $d^2[i,j]$ properly scaled.  When supervising the process and running a two sample test between the pool of healthy cells and the pool of unhealthy cells, we see that the anisotropic kernel yields significantly better separation and lower variance than the isotropic gaussian kernel.

We also examine the witness function in Figure \ref{fig:aml} 
that is generated by the anisotropic kernel, as introduced in Section \ref{subsec:witness}.  This yields a tool for visualizing the separation between the two samples in the original data space.  This gives a way to communicate the decision boundary to the medical community, which uses visualization of the 2D slices as the diagnostic tool for determining whether the patient as AML.

\subsubsection{MDS dataset}
Myelodysplastic syndromes (MDS) are a group of cancers of the blood.  There is more variability in how MDS presents itself in the cells and flow cytometry measurements.  The data we work with came from anonymized patients that were treated at Yale New Haven Hospital.  After choosing to examine surface markers CD16, CD13, and CD11B, along with several physical characteristics of the cells, we're left with 72 patients that were initially diagnosed with some form of MDS and 87 patients that were not.  These patients are represented by about $25,000$ cells in 8 dimensions.

While MDS is more difficult to detect than AML, the unsupervised pairwise graph, created the same way as in Figure \ref{fig:aml}, still yield a fairly strong unsupervised clustering, as we see in Figure \ref{fig:mds}.  When supervising the process and running a two sample test between the pool of healthy cells and the pool of unhealthy cells, we see that the anisotropic kernel yields strongly significant separation between the two classes, unlike the isotropic gaussian kernel.

\begin{figure}[t]
\footnotesize
	\begin{center}
		\begin{tabular}{cccc}
			\includegraphics[height=.2\textwidth]{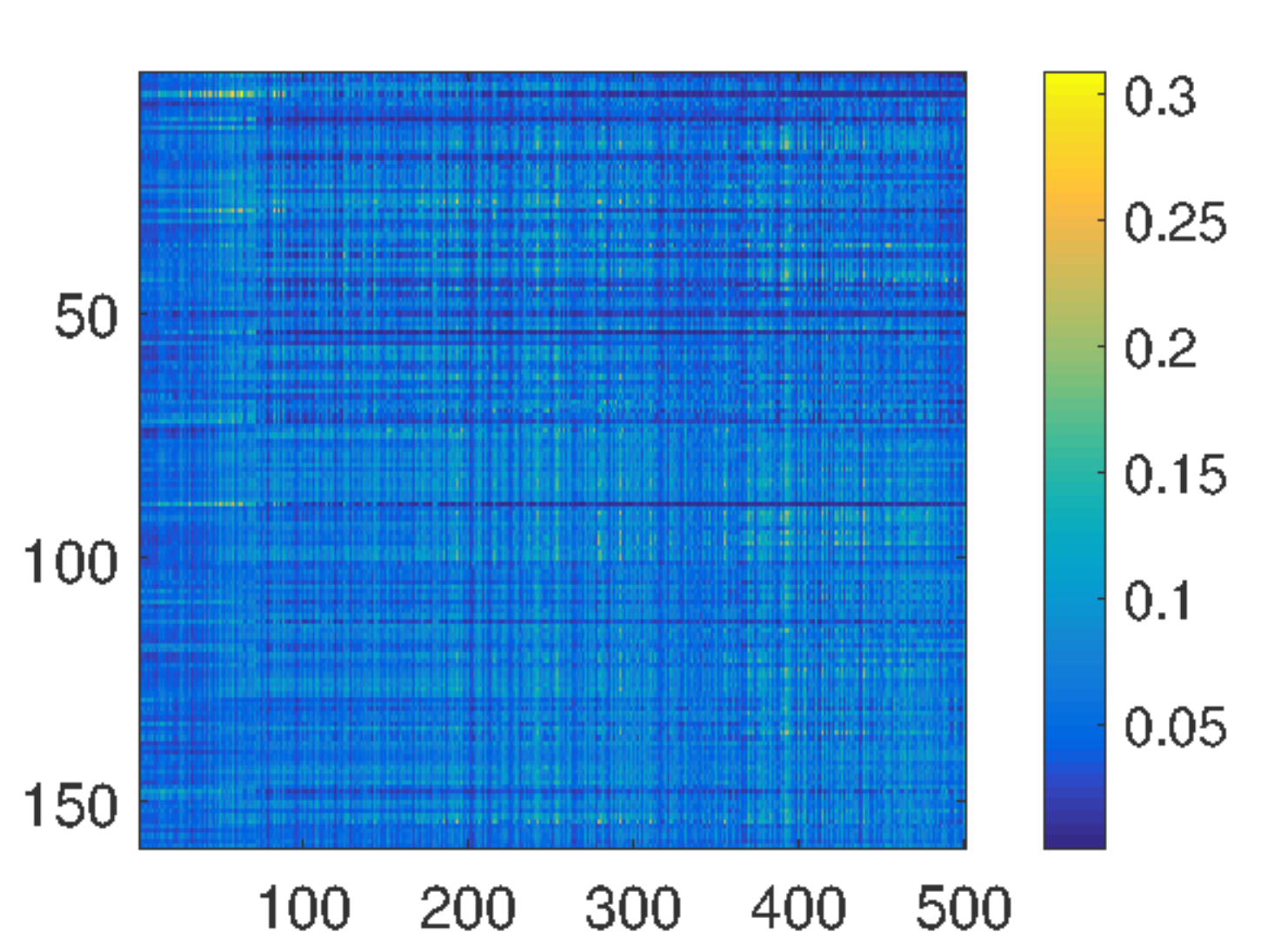} & \hspace{-1cm}
			\includegraphics[height=.2\textwidth]{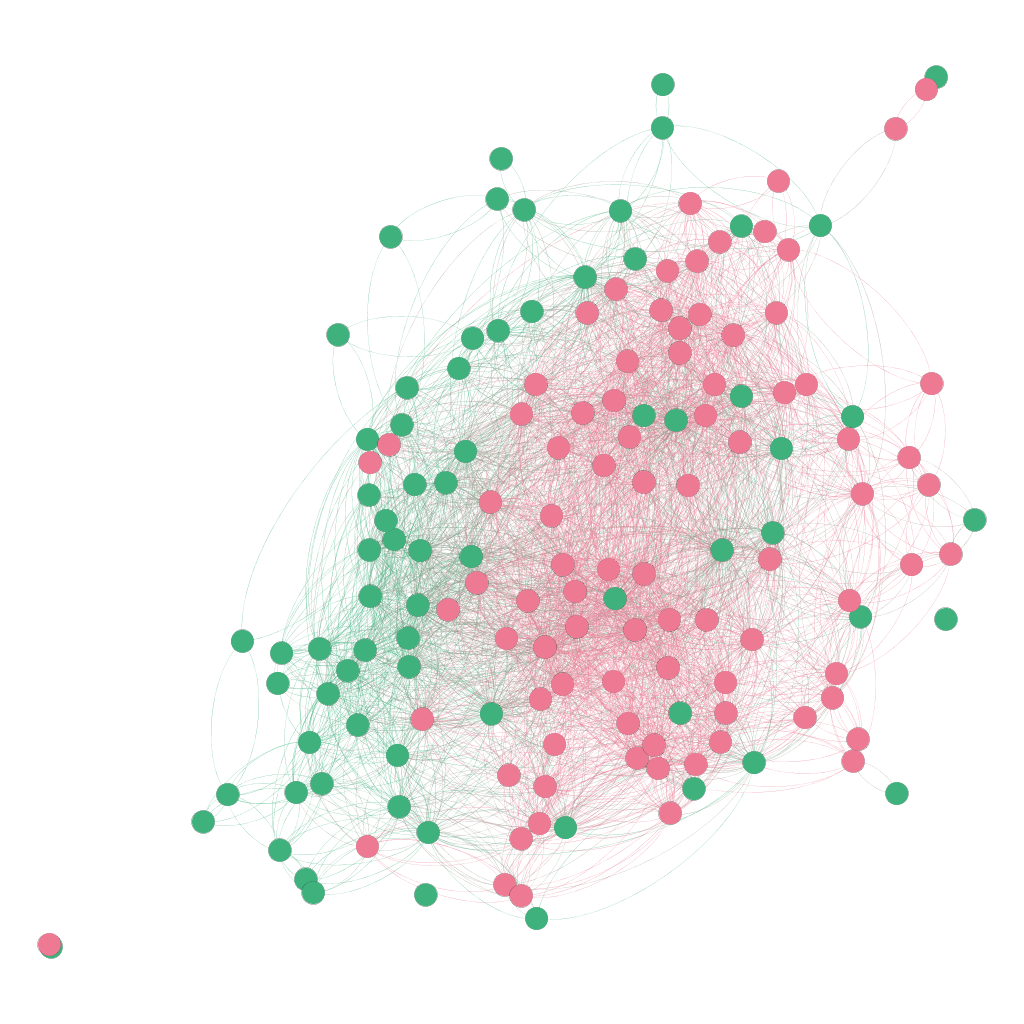} & \hspace{-1cm}
			\includegraphics[height=.2\textwidth]{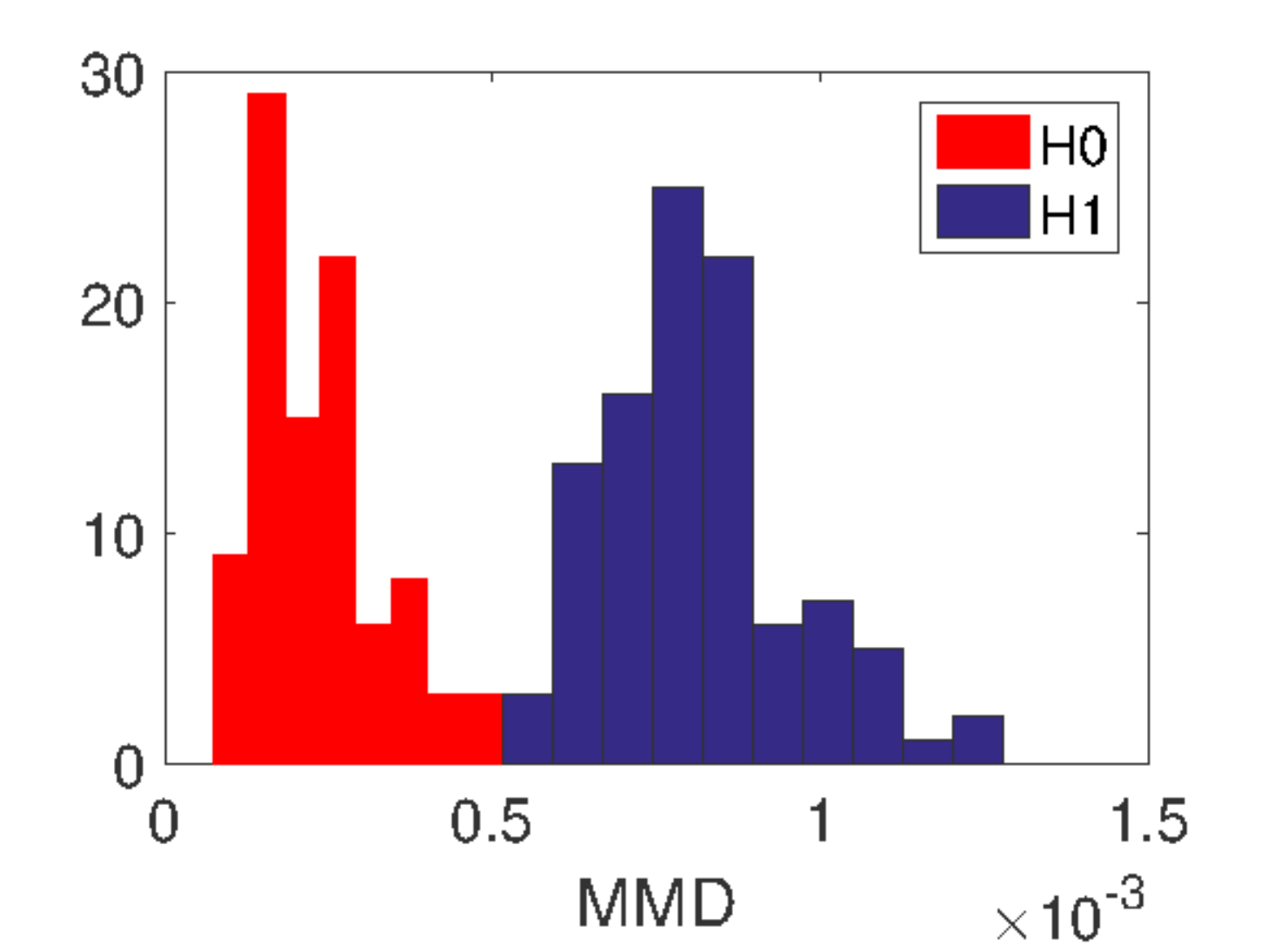} &  \hspace{-1cm}
			\includegraphics[height=.2\textwidth]{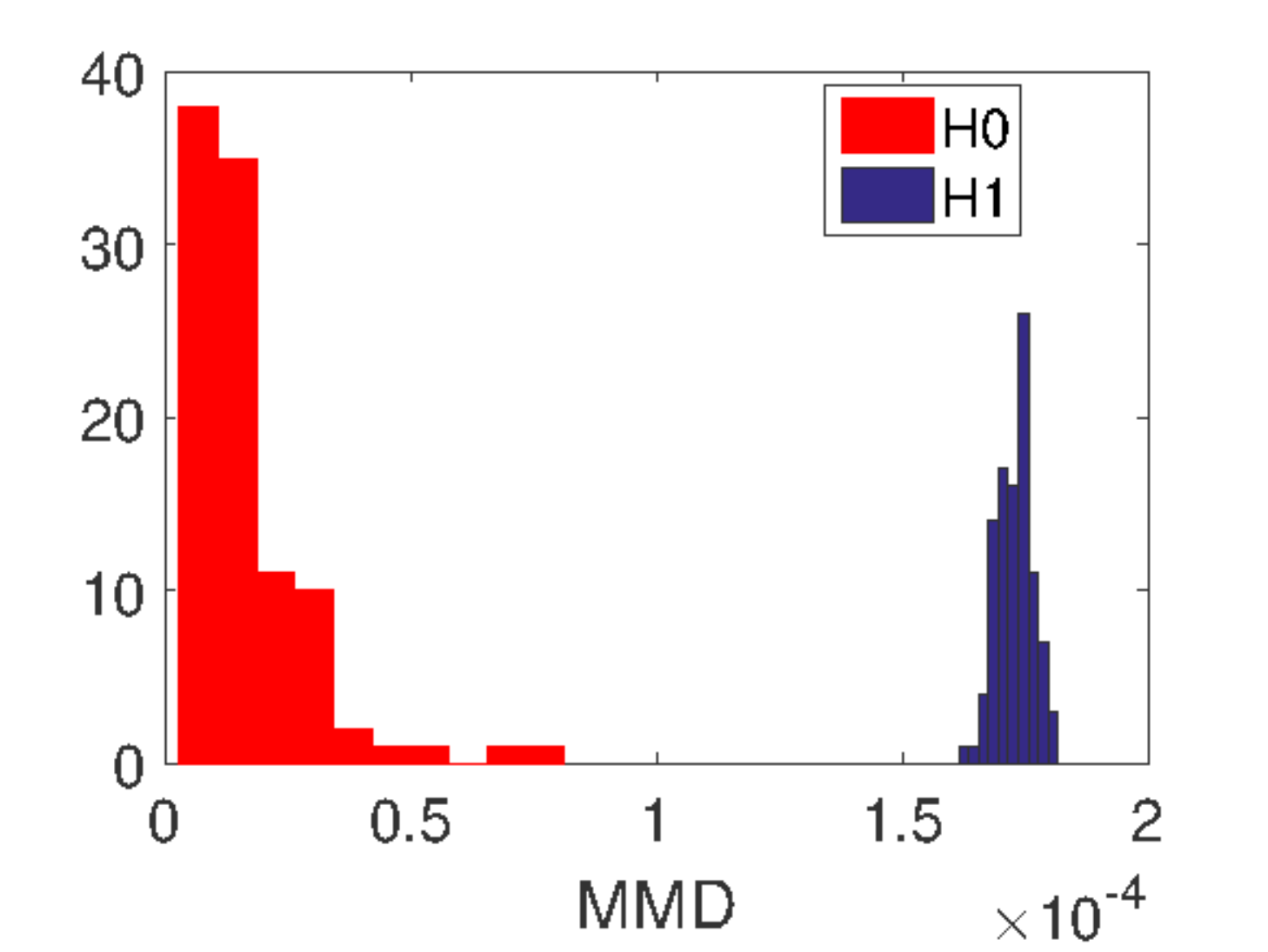} \\
A1 & \hspace{-1cm} A2 &  \hspace{-1cm} A3 &  \hspace{-1cm} A4 
		\end{tabular}
	\end{center}
	\caption{
		A1: unsupervised histograms  
		for MDS patients, A2: network clustering of pairwise distances between patients (green for MDS, red for healthy),
		A3: permutation test with isotropic kernel, A4: permutation test wtih anisotropic kernel.
	}\label{fig:mds}
\end{figure}

\subsection{Diffusion MRI imaging analysis}\label{subsec:diffuion-MRI}

\begin{figure}[t]
	\footnotesize
	\begin{center}
		\begin{tabular}{ccccc}
			\includegraphics[height=.15\textwidth]{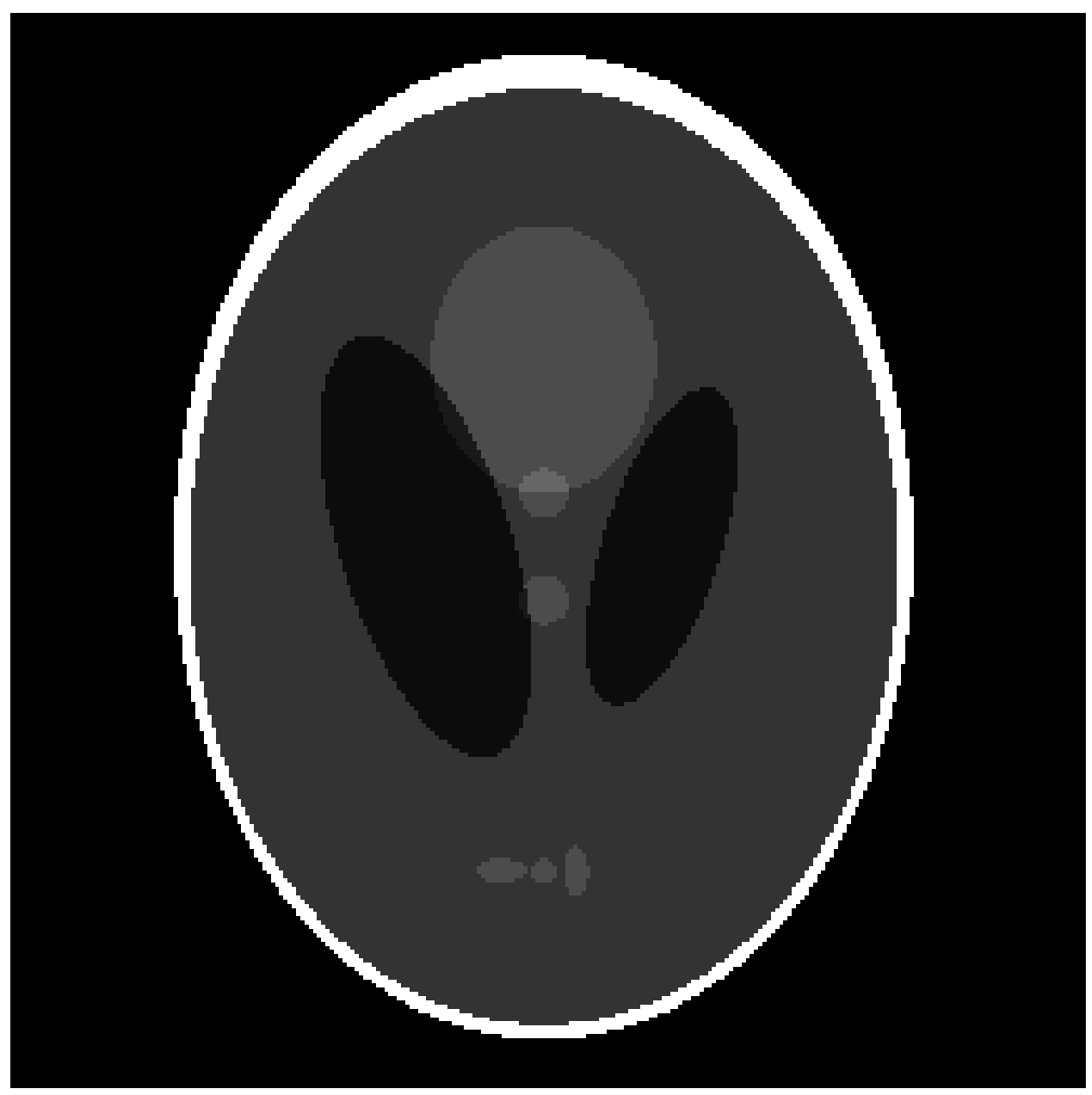} & \hspace{-.5cm}
			\includegraphics[height=.15\textwidth]{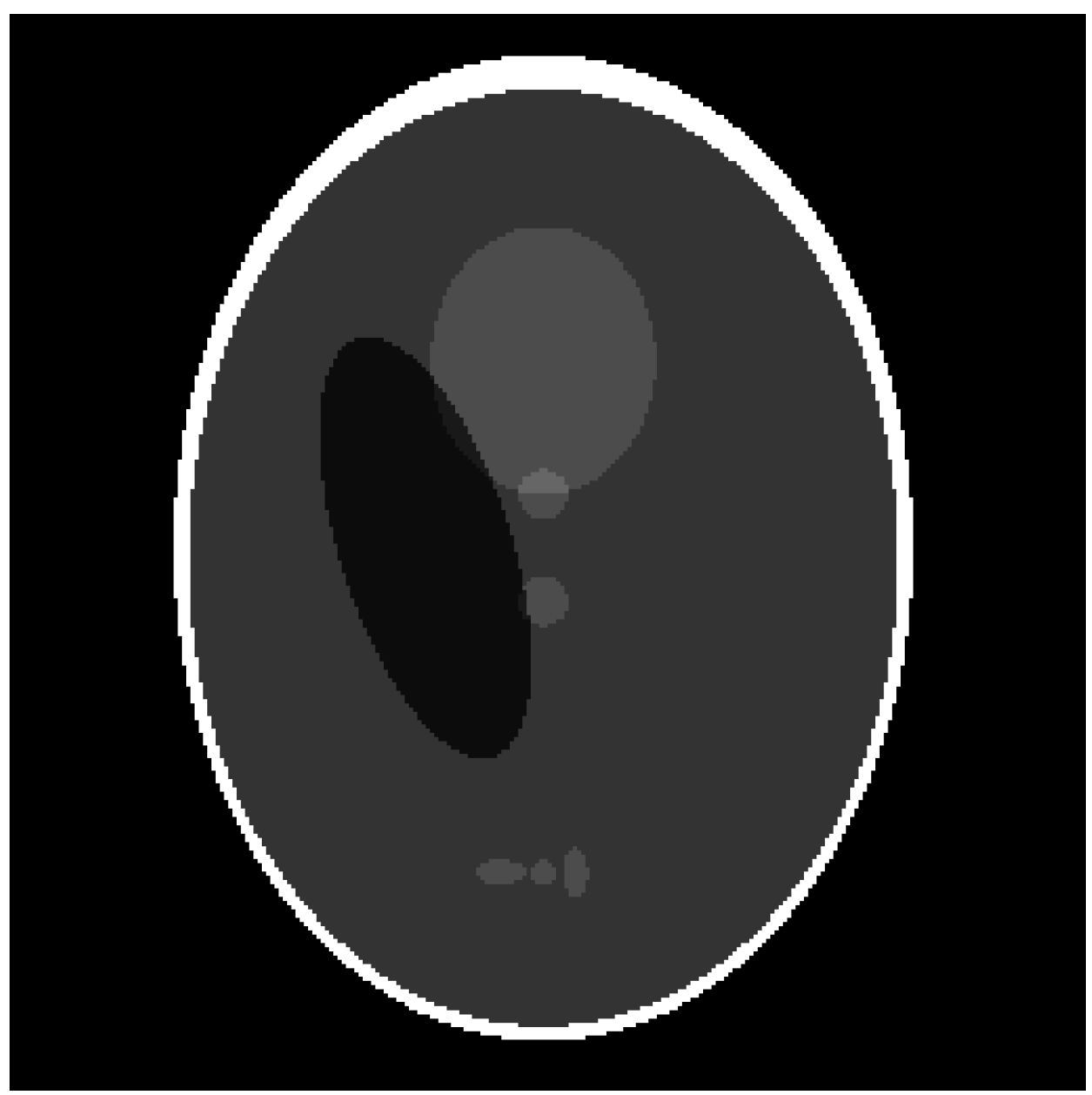} &
						\includegraphics[height=.15\textwidth]{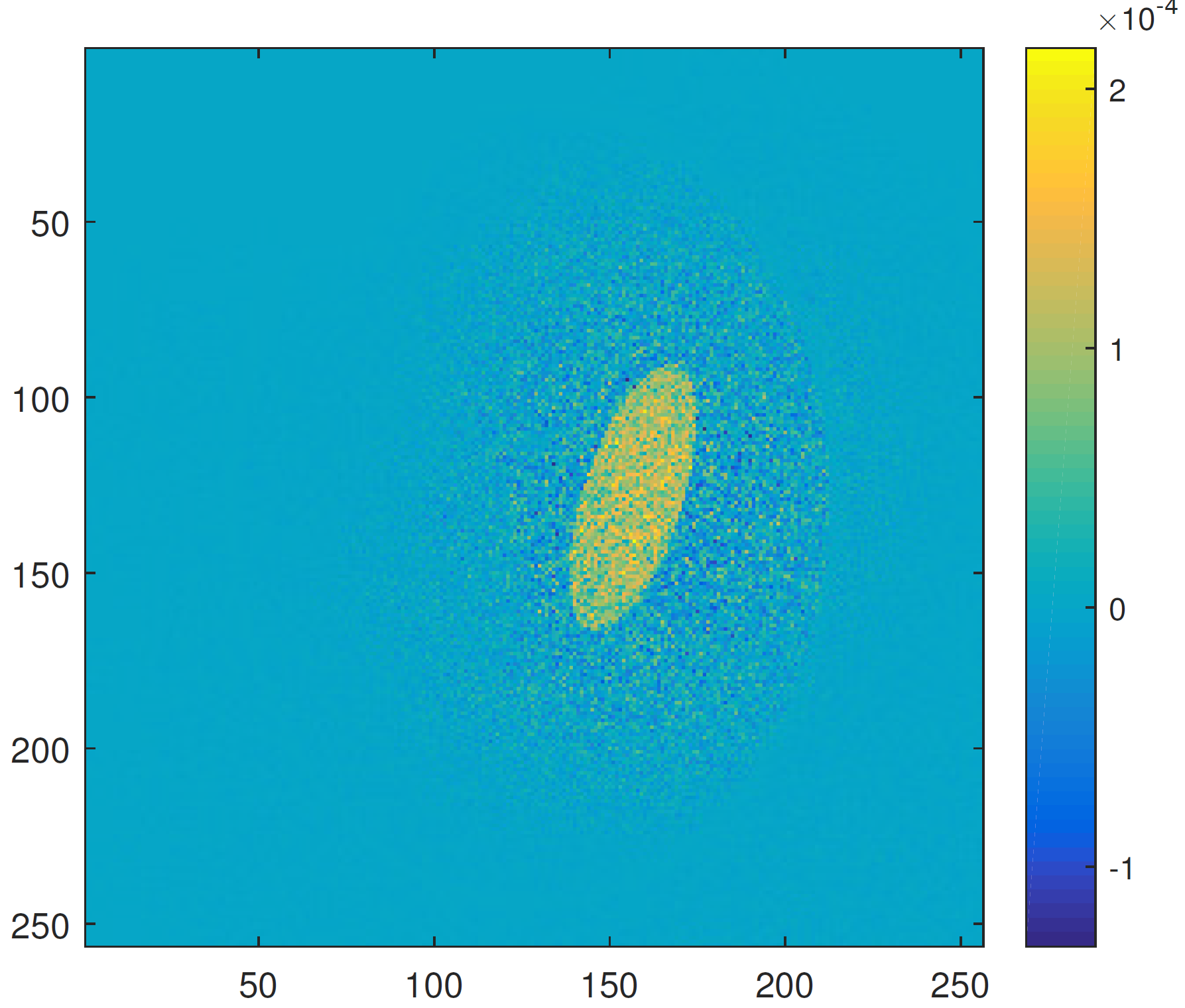} & \hspace{-.5cm}
						\includegraphics[height=.15\textwidth]{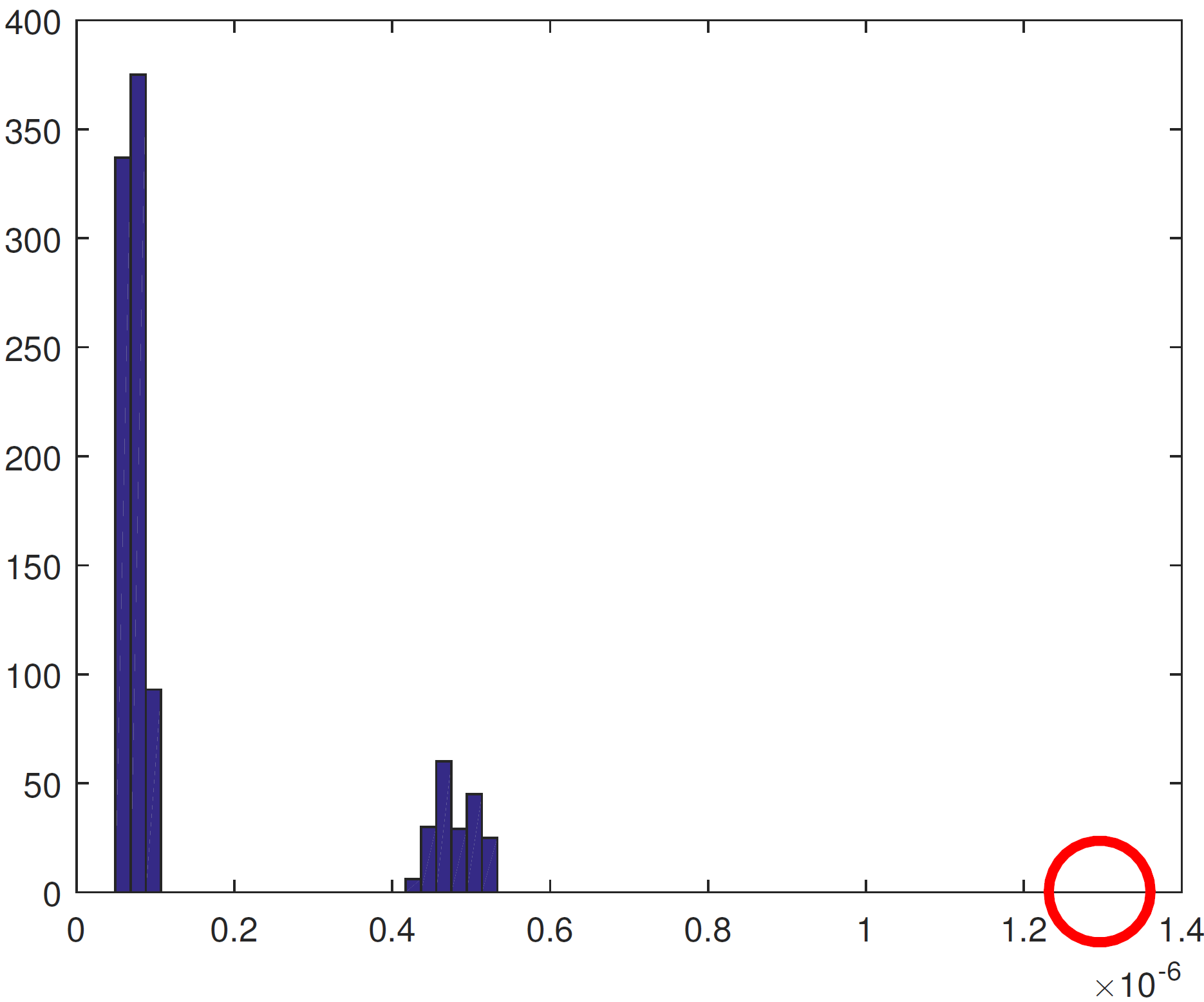} & 
						\includegraphics[height=.13\textwidth]{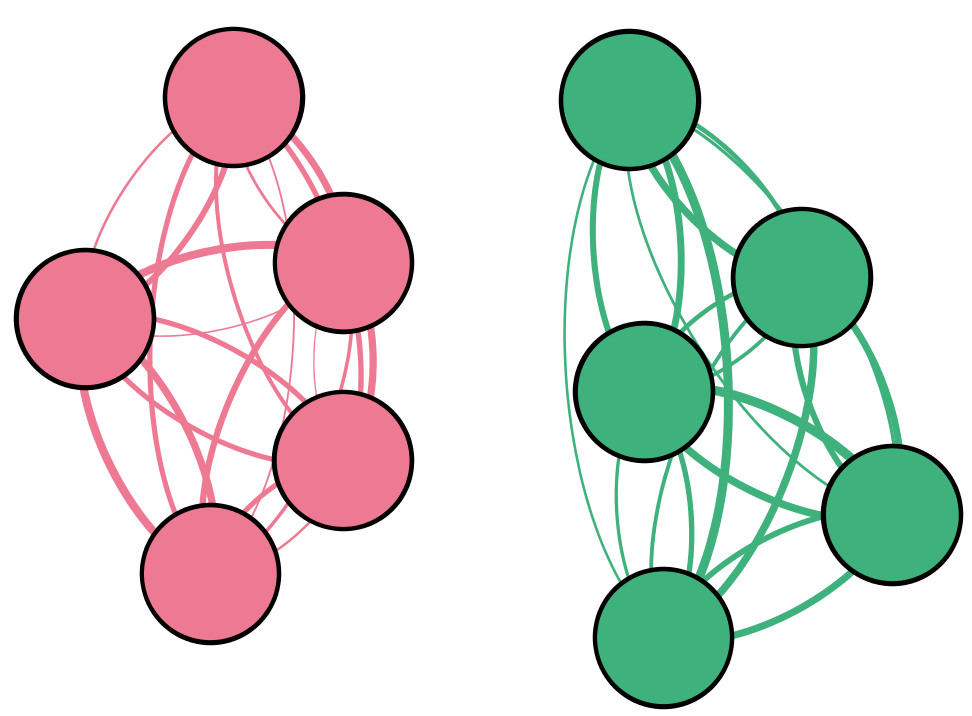} \\
			A1 &\hspace{-1cm} A2 & 
					B1 &\hspace{-1cm} B2 &\hspace{-1cm} B3 \\
			\includegraphics[height=.15\textwidth]{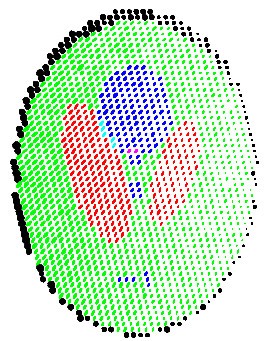} &\hspace{-.5cm}
			\includegraphics[height=.15\textwidth]{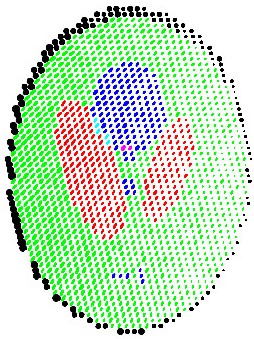} & 
						\includegraphics[height=.15\textwidth]{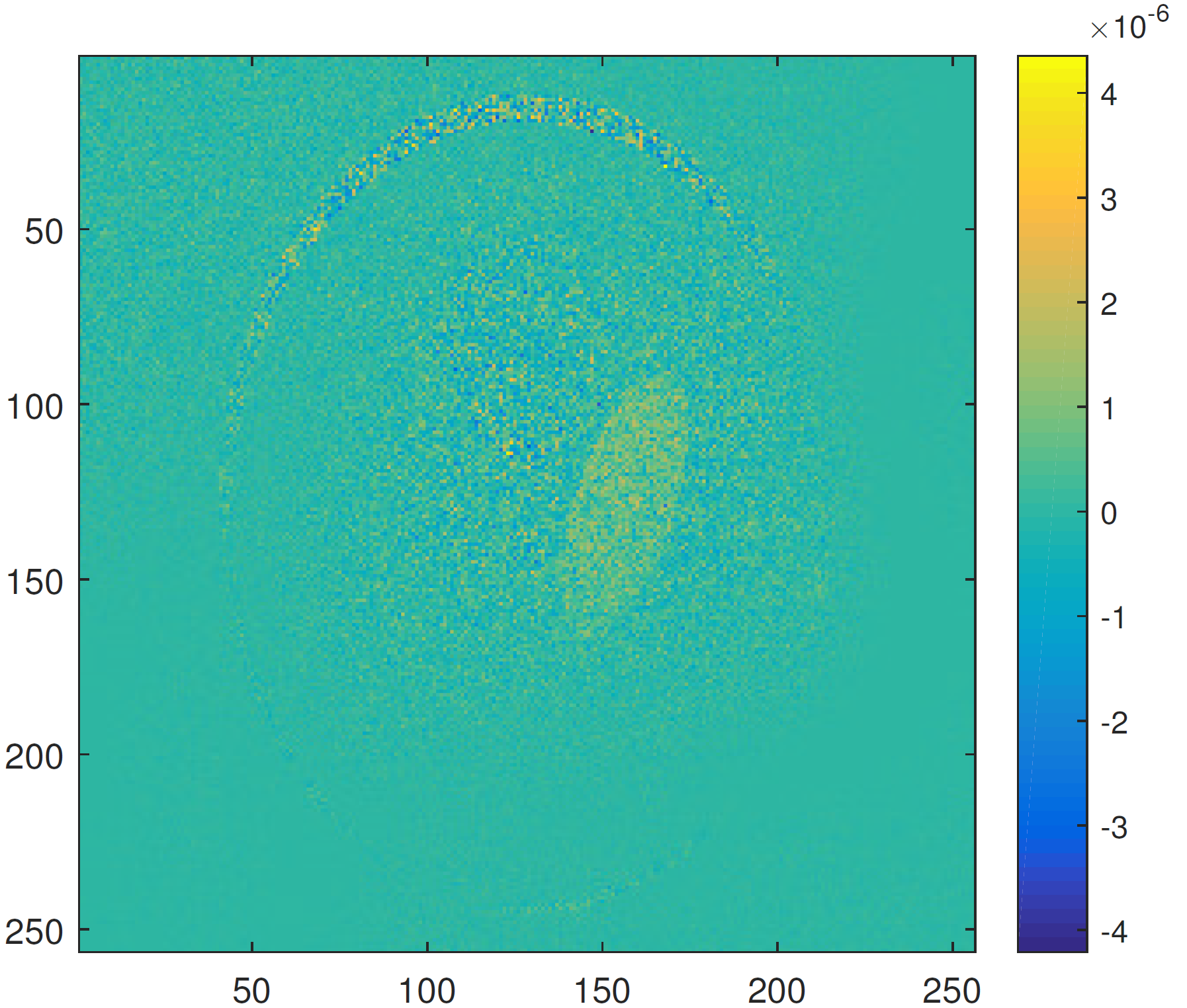} & \hspace{-.5cm}
						\includegraphics[height=.15\textwidth]{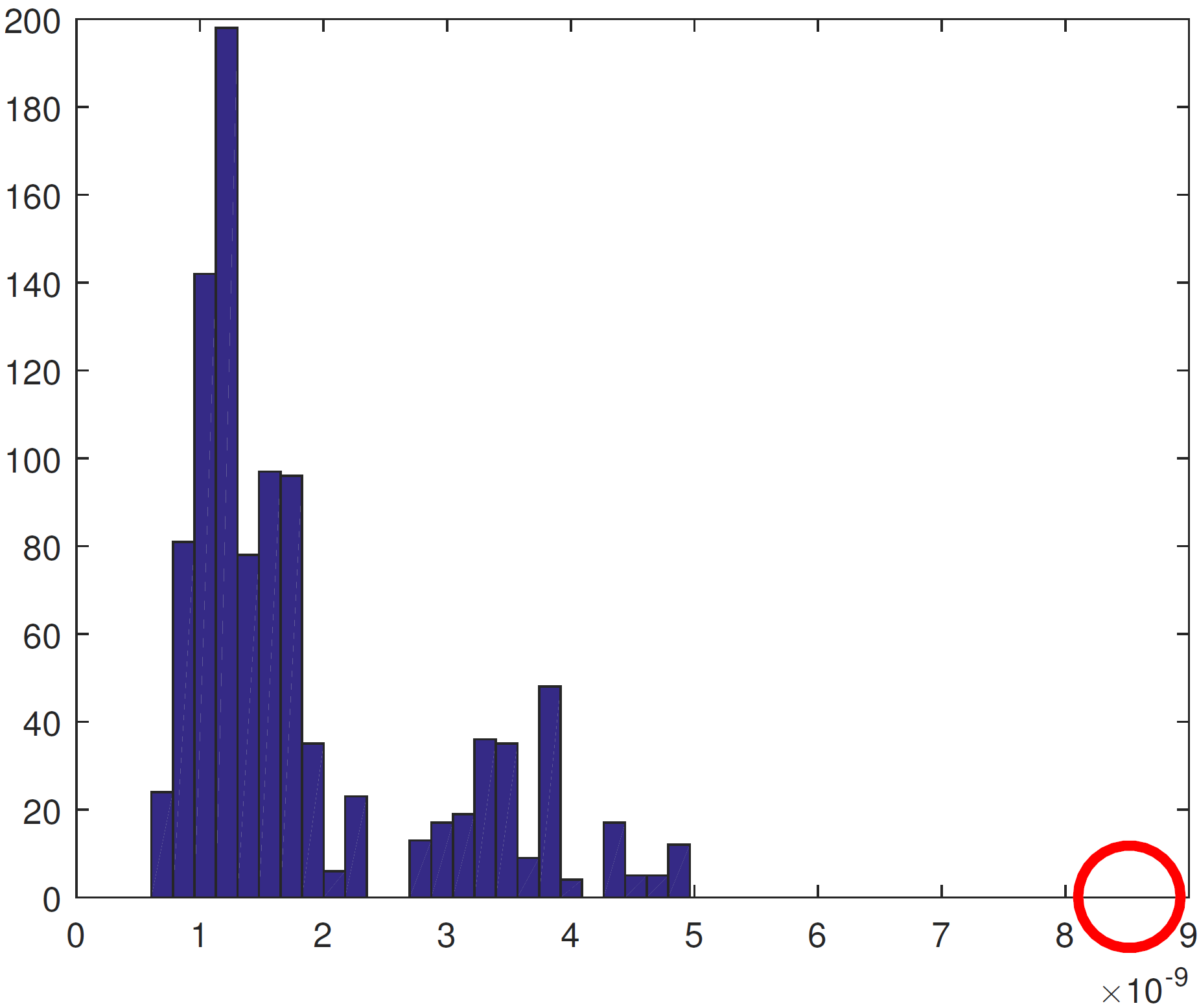} &
						\includegraphics[height=.13\textwidth]{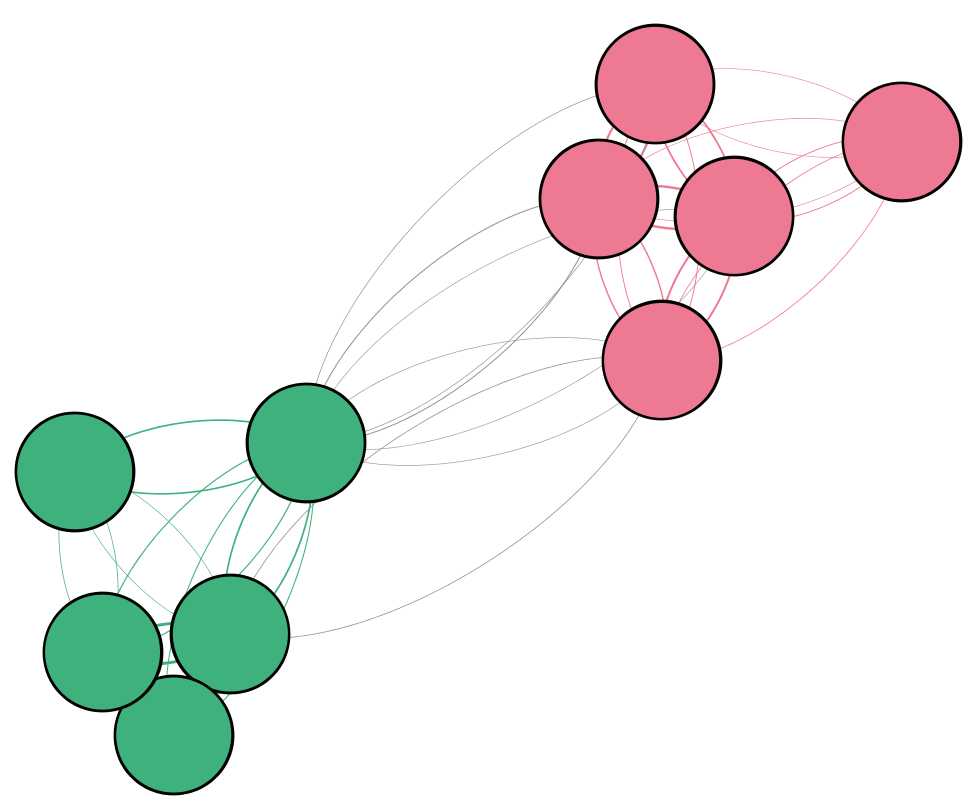} \\
			A3 &\hspace{-1cm} A4 & 
					B4 &\hspace{-1cm} B5 &\hspace{-1cm} B6 \\
			\includegraphics[height=.15\textwidth]{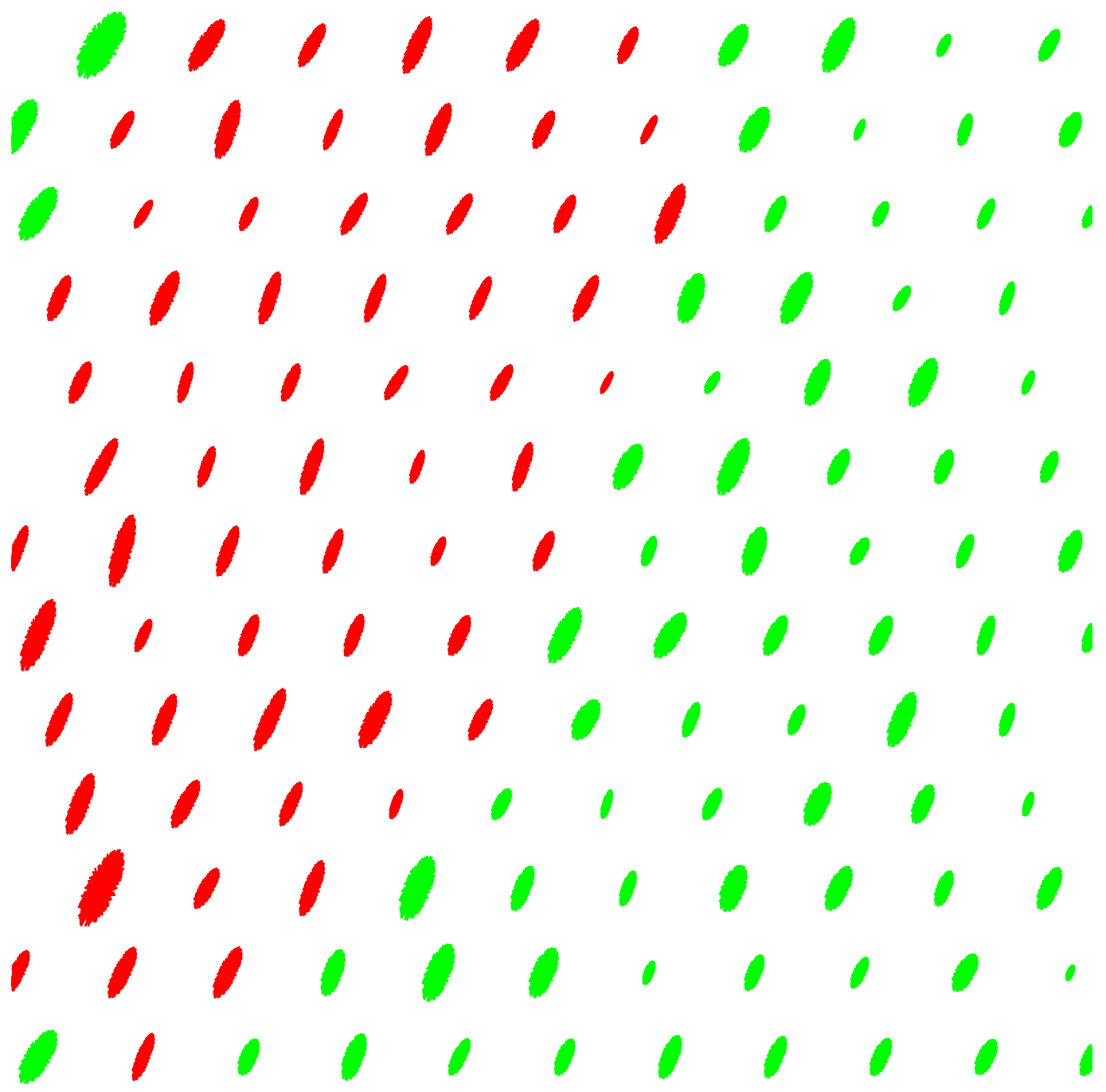} & \hspace{-.5cm}
			\includegraphics[height=.15\textwidth]{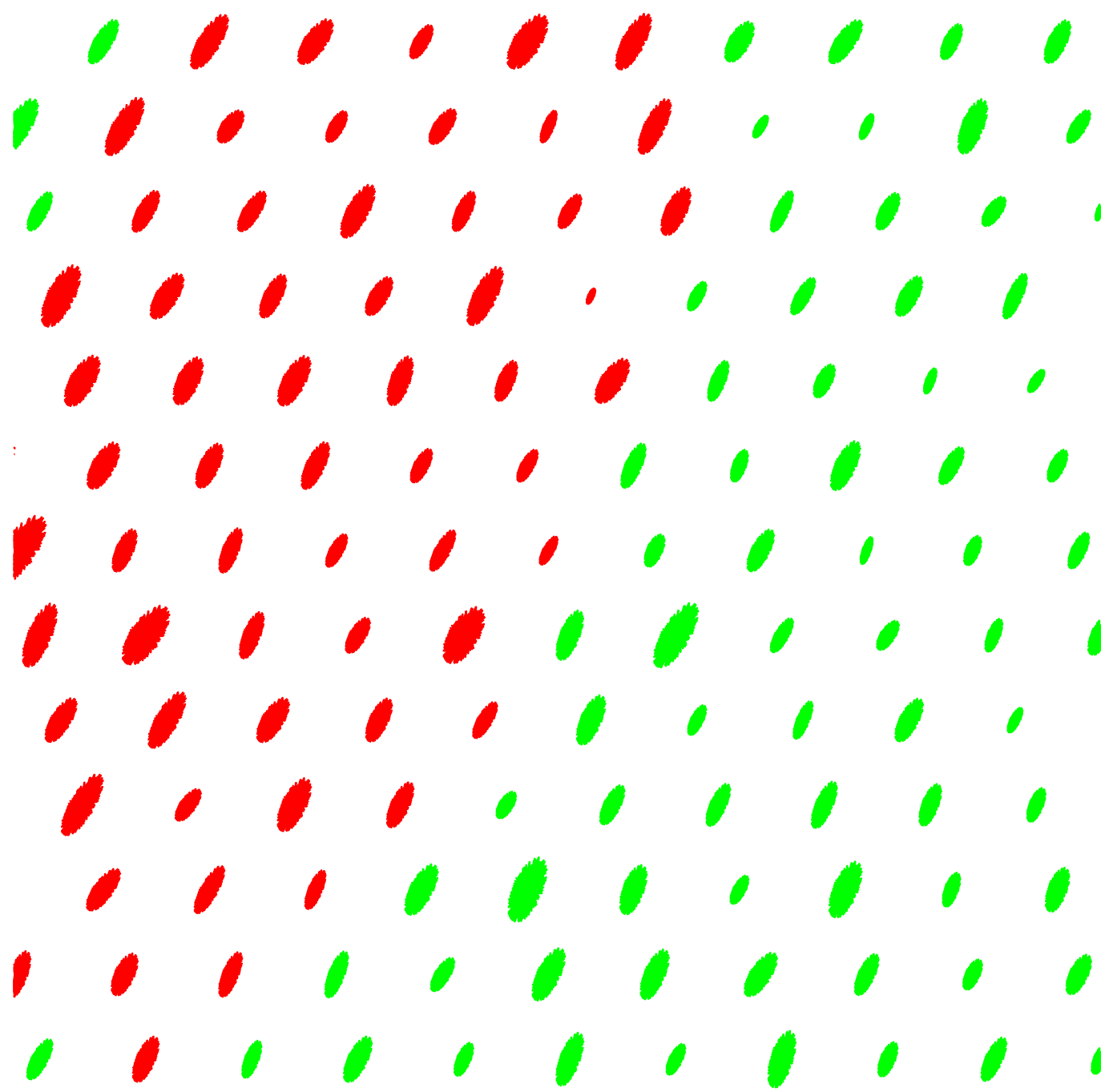} &
						\includegraphics[height=.15\textwidth]{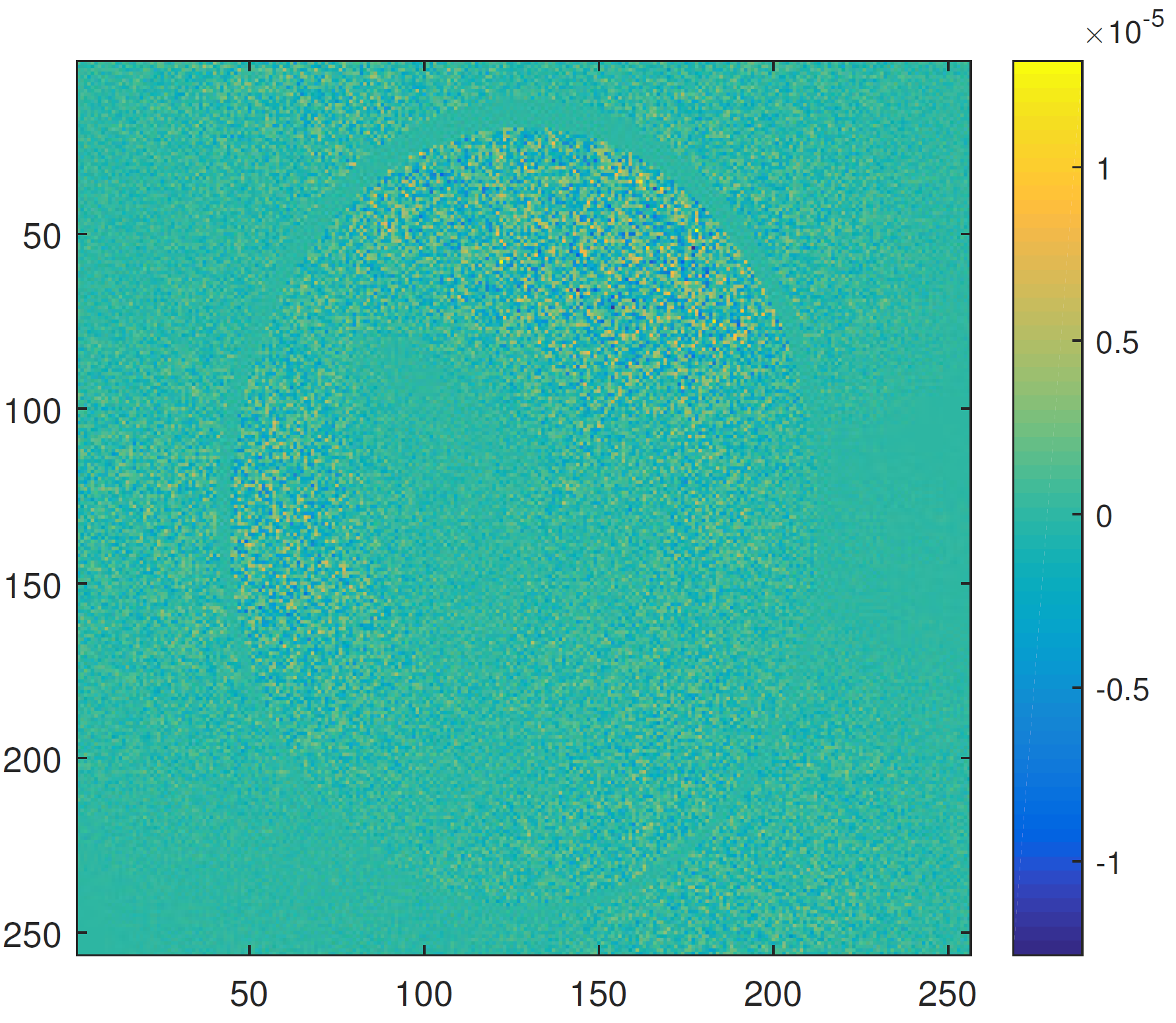} & \hspace{-.5cm}
						\includegraphics[height=.15\textwidth]{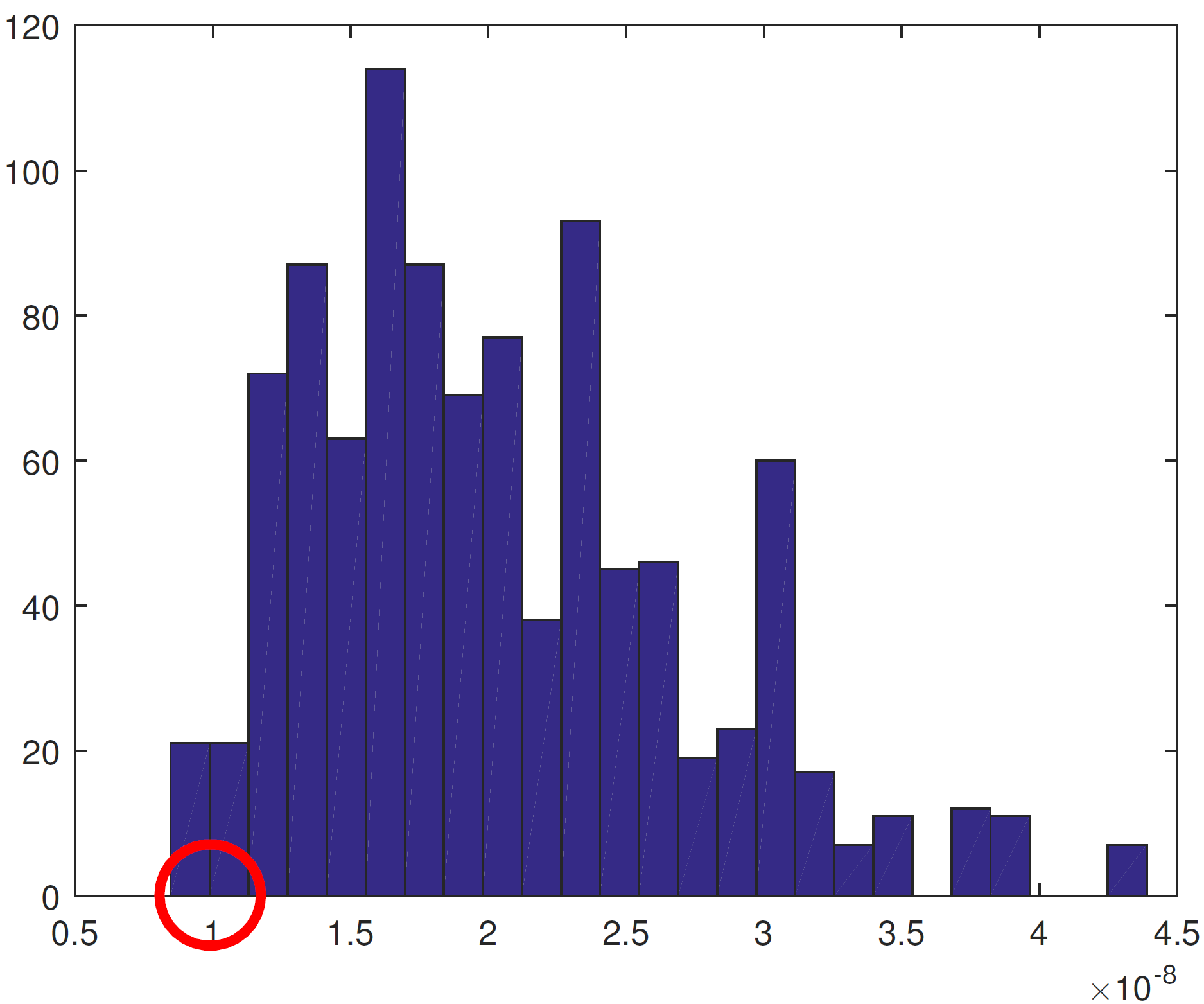} &
						\includegraphics[height=.13\textwidth]{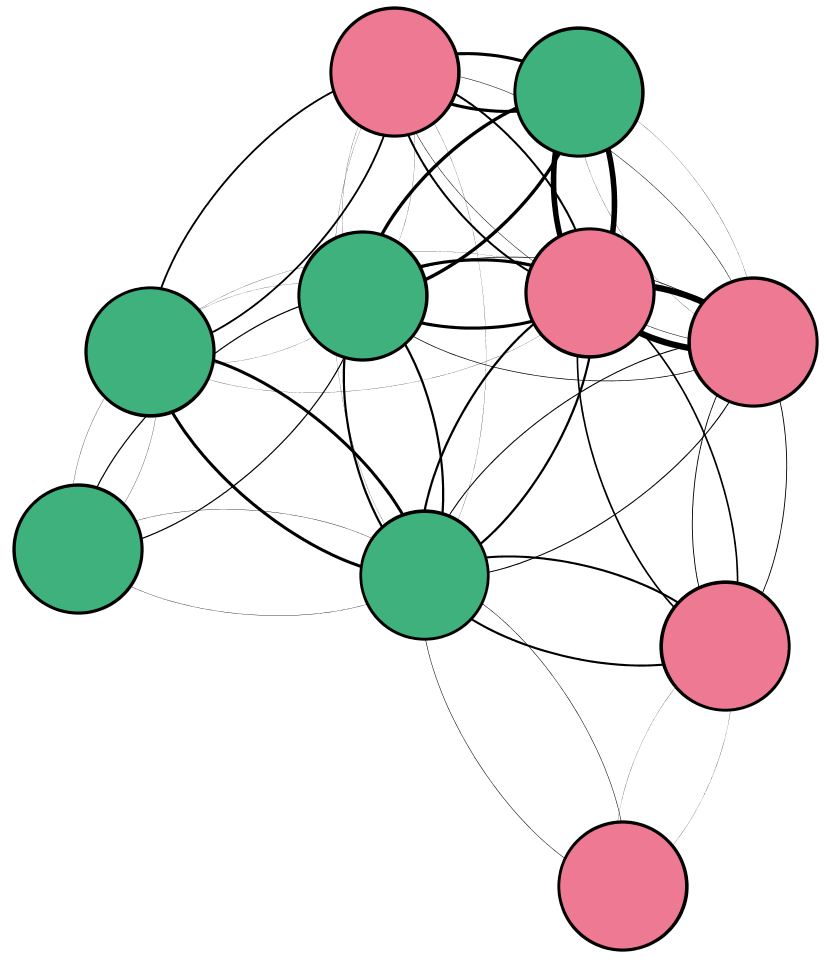} \\
			A5 &\hspace{-1cm} A6 & 
					B7 &\hspace{-1cm} B8 &\hspace{-1cm} B9 \\
		\end{tabular}
	\end{center}
	\caption{
	A1-A6: (top) synthetic brains with grayscale of diffusion tensor eccentricity, (middle) diffusion tensors with artificial coloring, (bottom) zoom in on area of difference. (left) Group 1, (right) Group 2.
	B1-B9: (left) witness function, (middle) permutation test, (right) pairwise graph from MMD distance. (top) anisotropic kernel under alternative hypothesis, (middle) isotropic kernel under alternative hypothesis, (bottom) anisotropic kernel under null hypothesis.
	}\label{fig:brainsOriginal}
\end{figure}

Diffusion weighted MRI is an imaging modality that creates contrast images via the diffusion of water molecules.  Various regions of the brain diffuse in different ways, and the patters can reveal details about the tissue architecture.  At a low level, each pixel in a 3D brain image generates a 3D diffusion tensor (i.e. covariance matrix) that describes the local flow of water molecules.  

An important question in diffusion MRI analysis is to identify regions of the brain that systemically differ between groups of healthy and sick individuals.  We attack this problem by comparing the distributions of the diffusion tensors in various regions of the brain, thus framing it as a multiple sample problem.  Every brain is co-registered so that the pixels overlap.  

Real images are around $200\times200\times200$, so the amount of memory needed to build a square symmetric kernel, even a sparse one, is completely prohibitive.  For this reason, we instead consider a set of reference pixels $R$ that subsamples the image, and construct a anisotropic kernel $a:(X,T)\rightarrow R \rightarrow [0,1]$, where $x_i$ is the location of pixel $i$ and $T_{x_i}$ is the $3\times3$ diffusion tensor at pixel $i$.  This kernel must enforce both locality in the pixel space, and the local behavior of the diffusion tensors.  The latter requires measuring covariance matrices over the space of positive semi-definite matrices.  

Fortunately, the study of covariance matrices on the space of positive semi-definite matrices is a well studied phenomenon \cite{basser2007spectral}.  The main takeaway is that there is an isomorphism from a $3\times 3$ diffusion tensor $T$ and its vectorized representation 
\begin{equation*}
\gamma = \begin{bmatrix} T_{11} & T_{22} & T_{33} & \sqrt{2}T_{12} & \sqrt{2}T_{13} & \sqrt{2}T_{23} \end{bmatrix}.
\end{equation*}
Thus, we can define a $6\times6$ covariance matrix $\Sigma_{T_i}$ on $\gamma$ and define the anisotropic kernel
\begin{equation}\label{eq:anisotropicDiffMRI}
a_{i,r} = exp\{ - (\gamma_i - \gamma_r) \Sigma_r^\dagger (\gamma_i - \gamma_r)^\intercal\} \cdot \mathbbm{1}_{\|x_i - r_i\| < \epsilon}.
\end{equation}

We examine MMD using a synthetic data set generated from the common brain phantom image.  We treat this as a 2D slice of the 3D image, and simply have all tensor variation in the z-direction constant and uncorrelated with the xy-direction of this slice.  We also assume that the brains have been co-registered.  The process of co-registration is an independent preprocessing issue which can be incorporated into our proposed methodology when working with real world data sets, but is outside the scope of this paper.

In Figure \ref{fig:brainsOriginal}, we show both a ``healthy'' brain and an ``unhealthy'' brain in which a small region has been removed.  The intensity of the image in this case will correspond to the eccentricity of the diffusion tensor at that point; the magnitude of the tensor will decrease from left to right in the same way for both images and the angle shift uniformly from left to right.  The eccentricity, magnitude, and angle all have iid Gaussian noise added to them with $\sigma = 0.05$.

We down-sample the brain by a factor of 5 for the reference points, and consider the mean embeddings $h_i(r)$ for $10$ realizations of a healthy brain (null hypothesis $H_0$), and for $5$ realizations of a healthy brain and $5$ realizations of an unhealthy brain (alternative hypothesis $H_1$).  Figure \ref{fig:brainsOriginal} shows the supervised witness function of regions of difference between the two groups, as well as a permutation test in which we permute group labels while maintaining the individual brain structure.  We also show the leading eigenvectors of the pairwise network generated by measuring the MMD between any two brains.

By using reference points, each brain is represented by a kernel that is $65536\times2601$, and the data adaptive MMD computation can be run on 10 brains in about $4.5$ minutes on a standard laptop.

We also compare the anisotropic kernel to one with an isotropic kernel of constant bandwidth.  We cannot compare to kernel MMD with a square symmetric kernel due to computational limits, so instead we compare to the modified asymmetric kernel as in \eqref{eq:anisotropicDiffMRI}, but without the covariance matrix.  Instead, we replace by a constant bandwidth, which is chosen to be $\sigma^2 = \E_{r\in R}[ trace(\Sigma_r)]$.

%

\section{Discussion and Remarks}\label{sec:discussion}

The paper studies kernel-based MMD statistics
with kernels of the form $\int a(r,x)a(r,y)d\mu_R(r)$,
and more generally, $\int \int L(r,r') a(r,x)a(r,y) d\mu_R(r) d\mu_R(r')$
where $L(r,r')$ is a ``filtering'' kernel over $r$. 
The statistics can be computed with a pre-defined reference set in time $O(n_R n)$,
 where $n$ is the number of samples and $n_R$ is the cardinal number of the reference set.
The power of the test against alternative distributions is analyzed using the spectral decomposition of the kernel with respect to the data distribution,
and the consistency is proved under generic assumptions.
The difference in the testing power of the kernels are determined by their spectral properties.
We apply the proposed methodology to flow cytometry and diffusion MRI data analysis,
where the goal of analysis is formulated as comparing the distribution of multiple samples. 

We close the section by a few remarks about the proposed approach as well as possible extensions:

{\bf The spectral coordinates.} 
The kernel-MMD distance being studied can be viewed as certain $L^2$ distance, 
weighted or unweighted, 
in the space of spectral embedding.
This is reflected in the construction of the kernel $k_{\text{spec}}$,
as well as in the power analysis. 
 Theoretically, the space of spectral embedding is infinitely dimensional, 
 however, in practice only finite dimensional coordinates may contribute to the RKHS MMD statistic 
 - in the sense of providing statistically significant departures. 
 Thus, the leading $K$ coordinates of spectral embedding (depends on the kernel) gives a mapping from $\R^d$ to $\R^K$, 
 where $K$ may be proportional or larger than $d$, 
 and one may consider the two-sample test in the new coordinates. 
The general alternative then becomes a mean-shift alternative in the new coordinates. 
This suggests other possible tests for mean-shift alternatives than the weighted $L^2$ distance test being studied. 

{\bf Weighting of bins.}
While spectral filtering introduces weighting in the generalized Fourier domain, another important variation is to introduce weighting in the ``real domain", namely weighting each bin centered at $r$ by a weight $w(r)$. Such weights may be computed from data, e.g. by certain local $p$-value (see below), 
where the dependence among $r$'s needs to be handled. 
Another interesting question is how to introduce multi-resolution systems in the context of the current paper:
Mapping points into spectral coordinates
 has certain advantage as analyzed in the paper, 
 however, spectral basis is global and may not be sensitive enough to local departure. 
On the other hand, histograms on local bins 
may have large variance and one needs to jointly analyze multiple bins. 
The shortcomings of both approaches may be overcome by considering a multi-scale basis on the reference-set graph.

{\bf Beyond $L^2$ distance.}
RKHS MMD considers the $L^2$ distance by construction, while other metrics have been studied in literature, particularly the Wasserstein metric which has the interpretation of optimal flow given the underlying geometry. Such ``geometric" distances are certainly useful in various  application scenarios. 
Using the reference set, a modification of $T_{L^2}$ will be 
\begin{eqnarray*}
	T_{\text{EMD}}(p,q) 
	&=& \min_{\pi} \int \pi(r,r') d_R(r,r') d\mu_R(r) d\mu_R(r'), \\
	\text{ s.t.}
	& &
	\int \pi(r,r')  d\mu_R(r') = h_p(r),\quad
	\int \pi(r,r')  d\mu_R(r) = h_q(r'), 
\end{eqnarray*}
where $h_p$, $h_q$ are the population histograms, and $d_R(r,r')$ is certain metric on reference set.  
It may also be possible to construct a metric which is equivalent to the Wasserstein metric by measuring the difference at reference points across multiple scales of covariance matrices, as is done with with Haar wavelet \cite{shirdhonkar2008approximate} and with diffusion kernels \cite{Leeb2016}. 
Efficient estimation scheme of the Wasserstein metric needs to be developed as well as the consistency analysis with $n$ samples.

{\bf Local $p$-value.} The current approach gives a global test and computes the $p$-value for the hypothesis of the distribution globally. 
In certain applications, especially differential analysis of flow cytometry data and other single-cell data, 
a more-important problem is to find the local region where the two samples differ corresponding to different biological conditions,
or in other words, to derive a ``local" $p$-value of the test. While the witness function introduced in our current approach can provide indication where $q\neq p$, 
a more systematically study of testing the hypothesis locally and controlling false discovery rate across bins is needed.

\section*{Acknowledgment}
We would like to thank Yuval Kluger for introducing the problem of flow cytrometry data analysis, and  Wade Schultz, Richard Torres, and Jon Astle for facilitating access to the Yale New Haven Hospital data.
We would also like to thank Carlo Pierpaoli, Neda Sadeghi, and Okan Irfanoglu for introducing the problem of diffusion MRI data analysis.  
Cloninger was supported by NSF grant DMS-1402254.

\bibliographystyle{plain}
\bibliography{mmd}

\newpage

%

\appendix

\setcounter{equation}{0}

%

\appendix

\setcounter{equation}{0}

\section{Proofs in Section 3}\label{app:A}

\subsection{Proofs of Propositions \ref{prop:equivalentA2}, \ref{prop:tildek}}

\begin{proof}[Proof of Proposition \ref{prop:equivalentA2}]
	By \eqref{eq:Tpopulation},\eqref{eq:kernel-spec},
	\[
	T(p,q) = \sum_k f_k ( \int  \psi_k(x) (p(x)-q(x))dx )^2
	\]
	and thus (i) $\Leftrightarrow$ (ii). 

	Recall that $a(r,x)$ has the singular value decomposition \eqref{eq:svd-a},
	and thus
	\[
	\int (h_p(r) - h_q(r) )^2 d\mu_R(r) 
	=  \sum_k \sigma_k^2 ( \int  \psi_k(x) (p(x)-q(x))dx )^2
	\]
	with $\sigma_k$ all strictly positive. 
	This means that (iii) is equivalent to $\int\psi_{k}(x)(p(x)-q(x))dx\ne0$
	for some $k$. When $f_{k}$ is all strictly positive, it is equivalent
	to (ii).
\end{proof}

\vskip 0.1in

\begin{proof}[Proof of Proposition \ref{prop:tildek}]
	We first verify the positive semi-definiteness of $\tilde{k}$:
	for any $f$ so that $\int f(x)^{2}p(x)dx<\infty$, by definition,
	\begin{eqnarray*}
	 &  & \int\int\tilde{k}(x,y)f(x)f(y)p(x)p(y)dxdy \\
	 & = & \int\int\int\int k(x,y)p(x')p(y')f(x)f(y)p(x)p(y)dxdydx'dy' \\
	 &    &   -\int\int\int\int k(x,y')p(x')p(y')f(x)f(y)p(x)p(y)dxdydx'dy'\\
	 &  & -\int\int\int\int k(x',y)p(x')p(y')f(x)f(y)p(x)p(y)dxdydx'dy' \\
	 &   & +\int\int\int\int k(x',y')p(x')p(y')f(x)f(y)p(x)p(y)dxdydx'dy'\\
	 & = & \int\int\int\int k(x,y)(p(x')p(y')p(x)p(y)f(x)f(y)-p(x')p(y)p(x)p(y')f(x)f(y')\\
	 &  & -p(x)p(y')p(x')p(y)f(x')f(y)+p(x)p(y)p(x')p(y')f(x')f(y'))dxdydx'dy'\\
	 & = & \int\int k(x,y)\left(\int(p(x')p(x)f(x)-p(x)p(x')f(x'))dx'\right) \\
	 &     & ~~ \left(\int(p(y')p(y)f(y)-p(y)p(y')f(y'))dy'\right)dxdy\\
	 & = & \int\int k(x,y)\tilde{f}(x)\tilde{f}(y)dxdy,
	\end{eqnarray*}
	where $\tilde{f}(x)=\int(p(u)p(x)f(x)-p(x)p(u)f(u))du$. 
	The quantity is nonnegative by that $k$ is PSD. 
	
	To prove (1): 
	Under Assumption \ref{assump:A1}, $k(x,x) \le 1$ and  $| k(x,y)| \le 1$ for any $x,y$. 
	This implies the boundedness of $k_p$ and $k_{pp} $in \eqref{eq:def-tildek}
	and leads to (1).

	(2) follows from the continuity of $k$. 
	
	The square integrability of $\tilde{k}$ then follows from
	boundedness of $k$, which makes the operator Hilbert-Schmidt with
	\begin{equation}\label{eq:sumk-lambdak2-bound}
	\sum_{k}\tilde{\lambda}_{k}^{2} = \int \tilde{k}(x,y)^2 p(x)p(y)dxdy \le 16,
	\end{equation}
	by (1). 
	Meanwhile,  by (1) again,
	\[
	\sum_{k}\tilde{\lambda}_{k}=\int\tilde{k}(x,x)p(x)dx\le 4,
	\]
	which proves that the operator is in trace class. 
	Mercer's Theorem applies to give the spectral expansion and the relevant properties,
	and $\int \tilde{\psi}_k(x) p(x)dx =0$ is by the centering so that $\int \tilde{k}(x,y)p(y)dy = 0$ for all $x$.
	This proves (3). 
	
	Finally, by the uniform convergence of Eqn. (\ref{eq:tildek-spec}), and (1),
	$4 \ge \int\tilde{k}(x,x)q(x)dx=\sum_{k}\tilde{\lambda}_{k} \int\tilde{\psi}_{k}(y)^{2}q(y)dy $,
	which proves (4). 
\end{proof}

\subsection{Proof of Theorems \ref{thm:limit1}, \ref{thm:consist1}}

\begin{lemma}[Replacement lemma]
	\label{lemma:replace}
	Let $\nu_k$ be a sequence of positive number so that $\sum_k \nu_k < \infty$. 
	Let $G_{k,n}$ be an array of random variables, $k=1,2,\cdots$, $n=1,2,\cdots$, s.t. for each $n$,
	\[
	\E G_{kn}=0,\, \forall k,n, 
	\quad
	\E G_{kn} G_{ln}=\Sigma^{(n)}_{kl}, \, \forall k,l,n,
	\]
	where $\Sigma^{(n)}_{kk} < \infty$. Furthermore, as $n \to \infty$,
	
	(i) $\Sigma^{(n)}_{kl} \to \Sigma_{kl}$ elementwise, 
	and for any finite $K$, $(G_{kn})_{1 \le k\le K} \overset{d}{\to} (G_k)_{1\le k \le K}$
	where $ (G_k)_{1\le k \le K} \sim {\cal N}( 0,  \{ \Sigma_{kl}) \}_{1\le k\le K, 1\le l \le K} )$. 
	
	(ii) There exists $B_1 > 0 $ s.t. \[
	\sum_k \nu_k \Sigma^{(n)}_{kk}  < B_1, \, \forall n
	\]
	and the convergence is uniform in $n$.
	
	Meanwhile, let $\alpha_{kn}$, $\beta_{kn}$, $\gamma_{kn}$ be three double arrays satisfying that
	
	(iii) $\alpha_{kn} \ge 0$, $|\gamma_{kn}| \le 1$; as $n\to \infty$, 
	\[
	\alpha_{kn} \to \alpha_k,
	\quad
	\beta_{kn} \to \beta_k,
	\quad
	\gamma_{kn} \to \gamma_k;
	\]
	
	(iv)  There exist $B_2, \, B_3 > 0 $ s.t. \[
	\sum_k \nu_k \alpha_{kn} < B_2,
	\quad
	\sum_k \nu_k \beta_{kn}^2 < B_3,
	\]
	for all $n$, and the convergence is uniform in $n$.
	
	Then, as $n\to \infty$, the random variable 
	\begin{equation}\label{eq:Un}
	U_n = \sum_k \nu_k ( \alpha_{k,n} +  \beta_{k,n} G_{kn} + \gamma_{k,n} G_{kn}^2 ),
	\end{equation}
	converges in distribution to $U$ defined as
	\begin{equation}\label{eq:U}
	U : = \sum_k \nu_k ( \alpha_{k} +  \beta_{k} G_k + \gamma_{k} G_k^2 ),
	\end{equation}
	and $U$ has finite mean and variance.
\end{lemma}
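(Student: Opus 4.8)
The plan is to prove convergence in distribution by the classical truncation-plus-approximation argument (the ``converging together'' lemma, e.g. Billingsley Thm.~3.2). For each truncation level $K$ set the partial sums
\[
U_n^{(K)} = \sum_{k\le K}\nu_k\big(\alpha_{kn}+\beta_{kn}G_{kn}+\gamma_{kn}G_{kn}^2\big),
\qquad
U^{(K)} = \sum_{k\le K}\nu_k\big(\alpha_k+\beta_k G_k+\gamma_k G_k^2\big),
\]
where $(G_k)_{k\ge1}$ is the centered Gaussian process whose existence and consistency are guaranteed by the finite-dimensional limits in hypothesis (i). I would then establish three facts: (a) for each fixed $K$, $U_n^{(K)}\overset{d}{\to}U^{(K)}$ as $n\to\infty$; (b) $U^{(K)}\overset{d}{\to}U$ as $K\to\infty$, with $U$ well-defined and having finite mean and variance; and (c) the truncation error is uniformly negligible, i.e. $\lim_{K\to\infty}\sup_n\Pr\big(|U_n-U_n^{(K)}|>\epsilon\big)=0$ for every $\epsilon>0$. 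Combining (a)--(c) yields $U_n\overset{d}{\to}U$.

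For (a): the map $(g_1,\dots,g_K)\mapsto\sum_{k\le K}\nu_k(\alpha_{kn}+\beta_{kn}g_k+\gamma_{kn}g_k^2)$ is a polynomial whose coefficients converge deterministically by hypothesis (iii). Splitting off the deterministic term $\sum_{k\le K}\nu_k\alpha_{kn}\to\sum_{k\le K}\nu_k\alpha_k$ and combining the convergent coefficient vectors $(\beta_{kn},\gamma_{kn})_{k\le K}$ with the joint weak convergence $(G_{kn})_{k\le K}\overset{d}{\to}(G_k)_{k\le K}$ from (i) — via Slutsky and the continuous mapping theorem applied to the polynomial — gives $U_n^{(K)}\overset{d}{\to}U^{(K)}$.

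Step (c) is the crux, and it is exactly where the two uniform-summability hypotheses (ii) and (iv) are used. Write $R_n^{(K)}:=U_n-U_n^{(K)}=\sum_{k>K}\nu_k(\alpha_{kn}+\beta_{kn}G_{kn}+\gamma_{kn}G_{kn}^2)$ and bound its first absolute moment uniformly in $n$: using $\alpha_{kn}\ge0$, $|\gamma_{kn}|\le1$, $\E|G_{kn}|\le(\Sigma^{(n)}_{kk})^{1/2}$, $\E G_{kn}^2=\Sigma^{(n)}_{kk}$, and Cauchy--Schwarz on the cross term,
\[
\E|R_n^{(K)}|\ \le\ \sum_{k>K}\nu_k\alpha_{kn}\ +\ \Big(\sum_{k>K}\nu_k\beta_{kn}^2\Big)^{1/2}\Big(\sum_{k>K}\nu_k\Sigma^{(n)}_{kk}\Big)^{1/2}\ +\ \sum_{k>K}\nu_k\Sigma^{(n)}_{kk}.
\]
Each summand on the right is the tail of a series that converges uniformly in $n$ by (ii) and (iv), so $\sup_n\E|R_n^{(K)}|\to0$ as $K\to\infty$, and Markov's inequality gives (c). For (b), the pointwise limits inherit the bounds $\sum_k\nu_k\alpha_k\le B_2$, $\sum_k\nu_k\beta_k^2\le B_3$, $\sum_k\nu_k\Sigma_{kk}\le B_1$ by passing $n\to\infty$ in the finite partial sums; the same estimate as above, now for the Gaussian array $(G_k)$, shows $(U^{(K)})_K$ is Cauchy in $L^1$, hence converges in $L^1$ — a fortiori in distribution — to a limit $U$ with finite mean. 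Finiteness of $\mathrm{Var}(U)$ then follows from standard jointly-Gaussian moment bounds ($\E G_k^2G_l^2\le3\Sigma_{kk}\Sigma_{ll}$ and $|\Sigma_{kl}|\le\sqrt{\Sigma_{kk}\Sigma_{ll}}$) together with $\sum_k\nu_k\Sigma_{kk}<\infty$ and $\sum_k\nu_k\beta_k^2<\infty$.

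I expect step (c) — the uniform-in-$n$ control of the tail — to be the main obstacle, and the one nonroutine maneuver there is the Cauchy--Schwarz split of $\sum_{k>K}\nu_k\beta_{kn}G_{kn}$ into a $\beta^2$-weighted sum and a variance-weighted sum, which lets both pieces be controlled by the assumed uniformly convergent series. The remaining steps are routine applications of the continuous mapping theorem, Markov's inequality, and standard Gaussian moment identities, and I would relegate the associated calculations to the write-up.
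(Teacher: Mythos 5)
Your proposal is correct and follows essentially the same route as the paper's proof: both are truncation arguments that combine (a) finite-dimensional convergence of the truncated sums via hypothesis (i) and the convergent coefficients, (b) well-definedness of the limit $U$, and (c) a tail bound, uniform in $n$, obtained from the uniform summability in (ii) and (iv) with the same Cauchy--Schwarz split of the linear term. The only difference is packaging: the paper carries out the ``converging together'' step by hand with characteristic functions and an intermediate variable $\bar{U}_{n,K}$, whereas you invoke the standard theorem directly and use Markov on the $L^1$ tail bound; your variance estimate for $U$ is in fact slightly more careful than the paper's.
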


\begin{proof}
	Firstly, we verify that $U$ is well-defined and has finite variance:
	notice that (ii) implies that $\sum_k \nu_k \Sigma_{kk} < B_1$,
	(iii) implies that $\alpha_k \ge 0$, $|\gamma_k| \le 1$,
	and (iv) implies that $\sum_k  \nu_k \alpha_k \le B_2$ and $\sum_k   \nu_k \beta_k^2 \le B_3$. 
	For finite $K$,
	we define the truncated $U_K$ as taking the summation from $1$ to $K$ in Eqn. (\ref{eq:U}).
	Then 
	\[
	|U-U_K |
	\le 
	\sum_{k > K} \nu_k \alpha_k +\sum_{k>K} \nu_k |\beta_k| |G_k| +  \sum_{k>K} \nu_k G_k^2,
	\]
	and thus
	\begin{equation}\label{eq:step1}
	\E |U-U_K| \to 0,
	\quad
	\text{as $K\to \infty$},
	\end{equation}
	due to the summability of 
	$\sum_k \nu_k \alpha_k$, $\sum_k \nu_k \beta_k^2$ and  $\sum_k \nu_k \Sigma_{kk} $ (the 2nd term is bounded by Cauchy-Schwarz).
	Thus $U_K \to U$ with probability one, and $\E |U| < \infty$. Furtherly, \[
	\E (U-\E U)^2 = \E ( \sum_k \nu_k \gamma_k (G_k^2-1) )^2 
	\le \sum_k \nu_k \sum_k \nu_k \E (G_k^2-1)^2 <\infty
	\]
	by that $\E G_k^4$ is finite. This verifies that $U$ has finite mean and variance. Actually, by martingale convergence theorem one can show that $U_K \to U$ a.s.
	
	Secondly, using a similar argument, defining  $U_{n,K}$ to be the truncated $U_n$ in Eqn. (\ref{eq:Un}),
	one can show that 
	\begin{equation}\label{eq:step2}
	\E |U_n-U_{n,K}| \to 0
	\quad
	\text{as $K\to \infty$ uniformly in $n$},
	\end{equation}
	by that $\sum_{k>K} \nu_k \alpha_{kn}$, $\sum_{k>K} \nu_k \beta_{kn}^2$ and $\sum_{k>K} \nu_k \Sigma_{kk}^{(n)}$ all converges to zero uniformly in $n$, which is assumed in condition (ii) and (iv).
	
	Now we come to prove $U_n \overset{d}{\to}   U$. 
	By Levy's Continuity Theorem, 
	it suffice to show the pointwise convergence of the characteristic function, namely
	\[
	\E e^{it U_n} - \E e^{it U} \to 0, \quad \forall t.  
	\]
	Using the truncation of  $k$ up to $K$, we have that 
	\begin{equation}\label{eq:three-terms}
	\begin{split}
	|\E (e^{it U_n} - e^{it U}) |
	& \le 
	\E | e^{it U_n} - e^{it U_{n,K}}  |
	+ \E | e^{it U_{n,K}} - e^{it U_{K}}  | 
	+ \E | e^{it U_K} - e^{it U}  | \\
	& \le
	\E |t| |U_n - U_{n,K}  | + \E | e^{it U_{n,K}} - e^{it U_{K}} | +  \E |t|| U_K -   U |.
	\end{split}
	\end{equation}
	By Eqn. (\ref{eq:step1}) and (\ref{eq:step2}), for any $\varepsilon > 0$, and any $t$, we can choose sufficiently large $K$ s.t. the first and the third term are both less than $\frac{\varepsilon}{3}$ for any $n$.  
	To show that the second term can be made small, we introduce 
	\[
	\bar{U}_{n,K}=  \sum_{k=1}^K \nu_k ( \alpha_{k} +  \beta_{k} G_{kn} + \gamma_{k} G_{kn}^2 ),
	\]
	and by that $(G_{kn})_{1 \le k \le K}  \overset{d}{\to} (G_{k})_{1 \le k \le K} $ (condition (i)), 
	$\bar{U}_{n,K} \overset{d}{\to} U_K$.
	Meanwhile,
	\[
	| U_{n,K} - \bar{U}_{n,K} |
	\le  \sum_{k=1}^K \nu_k \{
	| \alpha_{kn}- \alpha_{k} |  
	+  |\beta_{kn} - \beta_k| |G_{kn}| 
	+ |\gamma_{kn}-\gamma_{k}| G_{kn}^2 \}.
	\]
	Since $K$ is finite, and $\E G_{kn}^2 = \Sigma_{kk}^{(n)}$ is uniformly bounded as $n$ increases for each $k$, 
	we have that 
	$\E | U_{n,K} - \bar{U}_{n,K} | \to 0$ as $n\to \infty$,
	by the convergence of $\alpha_{kn}$, $\beta_{kn}$ and $\gamma_{kn}$.
	Thus the second term can be bounded by 
	\begin{eqnarray}
	\E | e^{it U_{n,K}} - e^{it U_{K}} | 
	& \le &
	\E | e^{it U_{n,K}} - e^{it \bar{U}_{n,K}} | 
	+ \E | e^{it \bar{U}_{n,K}} - e^{it U_{K}} | \nonumber \\
	&\le &
	\E |t| |U_{n,K} -  \bar{U}_{n,K} | +  \E | e^{it \bar{U}_{n,K}} - e^{it U_{K}} | 
	:= \text{(I)} + \text{(II)},
	\end{eqnarray}
	where (I) can be made smaller than $\frac{\varepsilon}{6}$ for large $n$ ($t$ is fixed
	and $\E | U_{n,K} - \bar{U}_{n,K} | \to 0$), 
	and (II) can be made smaller than $\frac{\varepsilon}{6}$ as a result of $\bar{U}_{n,K} \overset{d}{\to} U_K$ which implies convergence of characteristic function.
	Putting together, 
	the l.h.s. of Eqn. (\ref{eq:three-terms}) can be made smaller than $\frac{\varepsilon}{3}+\frac{\varepsilon}{3}+\frac{\varepsilon}{3} = \varepsilon$ for large $n$, which proves the claim.
\end{proof}

\vskip 0.1in

\begin{proof}[Proof of Theorem \ref{thm:limit1}]
	We introduce
	\[
	\hat{h}_k : = \frac{1}{\sqrt{n_1}}\sum_{i=1}^{n_1} \tilde{\psi}_k(x_i), 
	\quad
	\hat{g}_k : = \frac{1}{\sqrt{n_2}}\sum_{j=1}^{n_2} (\tilde{\psi}_k(y_j) + \tilde{v}_k), 
	\]
	where, by definition, 
	\[\E_{x\sim p} \tilde{\psi}_k(x) = 0,
	\quad 
	\E_{y\sim q} (\tilde{\psi}_k (y) + \tilde{v}_k) = 0,
	\]
	and 
	\[
	 \E_{x\sim p} \tilde{\psi}_k(x) \tilde{\psi}_l(x) = \delta_{kl},
	 \quad 
	 \E_{y\sim q} (\tilde{\psi}_k(y)+\tilde{v}_k) (\tilde{\psi}_l(y) + \tilde{v}_l):=S^{(n)}_{kl}.
	 \]
	Using the above notations, we rewrite Eqn. (\ref{eq:Tn1}) as 
	\begin{equation}\label{eq:nTn1}
	nT_n = \sum_k \tilde{\lambda}_k 
	\left(
	-\tau_n \sqrt{n} c_k
	+  \frac{1}{\sqrt{\rho_{1,n}}} \hat{h}_k - \frac{1}{\sqrt{\rho_{2,n}}} \hat{g}_k
	\right)^2,
	\end{equation}
	where $\rho_{1,n}= \frac{n_1}{n}$ and $\rho_{2,n}= \frac{n_2}{n}$. 
	The random variables $\{\hat{h}_k\}_k$ are independent from $\{\hat{g}_k\}_k$,
	and both are asymptotically normal for finite many $k$'s.
	We will use the replacement lemma \ref{lemma:replace} 
	to substitute $\hat{h}_k$ and $\hat{g}_k$ by their normal counterparts, 
	and discuss scenarios (1)-(3) respectively.

	To apply Lemma \ref{lemma:replace}, 
	we set $\nu_k =\tilde{\lambda}_k$, and the summability follows (3) of Proposition \ref{prop:tildek};
	we set 
\begin{equation}\label{eq:set-Gkn}
G_{kn} = \frac{1}{\sqrt{\rho_{1,n}}} \hat{h}_k - \frac{1}{\sqrt{\rho_{2,n}}} \hat{g}_k,
\end{equation}
and then \eqref{eq:nTn1} becomes
\begin{equation}\label{eq:nTn2}
nT_n 
= \sum_k \tilde{\lambda}_k 
( 
- \tau \sqrt{n} c_k
+ G_{kn}
)^2
= \sum_k \tilde{\lambda}_k 
\left\{
(\tau \sqrt{n} c_k)^2
- 2 \tau \sqrt{n} c_k G_{kn}
+ G_{kn}^2
\right\}.
\end{equation}
We have that
\begin{eqnarray}
\E G_{kn} &= &0  
\label{eq:EGkn}\\
\Sigma_{kl}^{(n)} 
&=& \E G_{kn}G_{ln}  
		= \E \frac{1}{\rho_{1,n}} \hat{h}_k \hat{h}_l + \E \frac{1}{\rho_{2,n}} \hat{g}_k \hat{g}_l 
		= \frac{1}{\rho_{1,n}} \delta_{kl} + \frac{1}{\rho_{2,n}} S^{(n)}_{kl},
		\label{eq:EGknGln}
\end{eqnarray}	
	where, recalling that $c_k = \int \tilde{\psi}_k(y) g(y) dy$, $\tilde{v}_k = -\tau c_k$, and $q_1 := p+g$,
	\begin{eqnarray}
	S^{(n)}_{kl} 
	&=& \int \tilde{\psi}_k(y)  \tilde{\psi}_l(y) (p+\tau g)(y) dy -\tilde{v}_k \tilde{v}_l  \nonumber \\
	&=& (1-\tau) \delta_{kl} + \tau \int  \tilde{\psi}_k(y)  \tilde{\psi}_l(y) q_1(y) dy  -\tau^2 c_k c_l. 
	\label{eq:Sn_kl}
	\end{eqnarray}
	By that $\rho_{1,n} \to \rho_1 > 0 $ and $  \rho_{2,n} \to \rho_2 > 0$, both $\frac{1}{\rho_{1,n}}$ and $\frac{1}{\rho_{2,n}}$ are uniformly bounded, so $\Sigma_{kl}^{(n)}$ are bounded for each $(k,l)$. We now verify condition (i) and (ii) in  Lemma \ref{lemma:replace}: 
	
	Condition (i): 
	In case (1) and (2), $\tau \to 0$, 
	thus $S^{(n)}_{kl}  \to \delta_{kl}$ 
	and then 
	$\Sigma_{kl}^{(n)} \to \Sigma_{kl} = ( \frac{1}{\rho_1} + \frac{1}{\rho_2})\delta_{kl}$. 
	In case (3) $\tau \equiv 1$,
	thus $S^{(n)}_{kl}  \equiv S_{kl}$ and then $\Sigma_{kl}^{(n)} \to \Sigma_{kl} =  \frac{1}{\rho_1} \delta_{kl} + \frac{1}{\rho_2} S_{kl}$. 
	As for the limiting distribution of $ (G_{kn})_{1 \le k \le K}$ for any finite $K$, 
	we know that $(\hat{h}_{k})_{1 \le k \le K} \overset{d}{\to} {\cal N}(0, I_K)$, 
	and 
	$(\hat{g}_{k})_{1 \le k \le K} \overset{d}{\to} {\cal N}(0, \{S_{kl}\}_{1 \le k,l \le K})$
	by Lindeberg-Levy CLT (Theorem 1.9.1 B in \cite{serfling1981approximation}, extended to the case where the covariance matrix converges to a non-degenerate limit by Slutsky Theorem).
	By definition of $G_{kn}$ and that $(\hat{h}_{k})_{1 \le k \le K}$ and $(\hat{g}_{k})_{1 \le k \le K}$ are independent, 
	$ (G_{kn})_{1 \le k \le K} \overset{d}{\to} {\cal N}(0, \{\Sigma_{kl}\}_{1 \le k,l \le K}) $ where $\rho_{1,n}$ ($\rho_{2,n}$) 
	is replaced by $\rho_1$ ($\rho_2$) by Slutsky Theorem. The argument applies to all the three cases.

	Condition (ii):
	$\Sigma_{kk}^{(n)} = \frac{1}{\rho_{1,n}} +  \frac{1}{\rho_{2,n}} S^{(n)}_{kk} \le c_1 + c_2 S^{(n)}_{kk}$
	for some absolute positive constant $c_1$ and $c_2$. Meanwhile, by Eqn. (\ref{eq:Sn_kl}),
	$S^{(n)}_{kk} = (1-\tau) + \tau \int  \tilde{\psi}_k(y)^2 q_1(y) dy - \tau^2 c_k^2 \le 1+  \int  \tilde{\psi}_k(y)^2 q_1(y) dy $, 
	thus
	\[
	\sum_k \tilde{\lambda}_k \Sigma_{kk}^{(n)}
	\le
	\sum_k \tilde{\lambda}_k (c_1 + c_2 (1 + \int  \tilde{\psi}_k(y)^2 q_1(y) dy )) < \infty,
	\] 
	thanks to that $\sum_k \tilde{\lambda}_k < \infty$ and that
	$ \sum_k \tilde{\lambda}_k \int  \tilde{\psi}_k(y)^2 q_1(y) dy < \infty $ ((4) of Proposition \ref{prop:tildek}),
	and the convergence is uniform in $n$.
	
	We now consider the three scenarios respectively:
	
	(1) Let $U_n=nT_n$, by Eqn. (\ref{eq:nTn2}) we have 
	\[
	\alpha_{k,n}= (-\tau \sqrt{n} c_k)^2 \to a^2 c_k^2,
	\quad
	\beta_{k,n}= -2 \tau \sqrt{n} c_k \to -2ac_k,
	\quad
	\gamma_{k,n} = 1,
	\]
	thus (iii) holds. Condition (iv) can be verified by that $\sum_k \tilde{\lambda}_k c_k^2 < \infty$ (upper bounded by $ \le \sum_k \tilde{\lambda}_k \int  \tilde{\psi}_k(y)^2 q_1(y) dy $). 
	As analyzed above, 
	$\Sigma_{kl}^{(n)} \to \Sigma_{kl} = ( \frac{1}{\rho_1} + \frac{1}{\rho_2})\delta_{kl}$, 
	and condition (i) and (ii) hold,
	thus Lemma \ref{lemma:replace} applies to give that 
	\[
	U_n \overset{d}{\to} 
	U =\sum_k \tilde{\lambda}_k (  a^2 c_k^2 - 2ac_k G_k + G_k^2 )
	= \sum_k \tilde{\lambda}_k ( - ac_k + G_k)^2,
	\quad
	G_k \sim {\cal N}\left(0,\frac{1}{\rho_1} + \frac{1}{\rho_2}\right) \text{ i.i.d.}
	\]
	as claimed in the theorem.

	(2) Let $U_n$ be the l.h.s. of the statement, then
	\[
	\alpha_{k,n}= 0,
	\quad
	\beta_{k,n}= -2 c_k,
	\quad
	\gamma_{k,n} = n^{-\delta} \to 0,
	\]
	and condition (iii) and (iv) hold. 
	Same as in (1), 
	$\Sigma_{kl} = ( \frac{1}{\rho_1} + \frac{1}{\rho_2})\delta_{kl}$
	and (i) and (ii) hold,
	thus Lemma \ref{lemma:replace} gives that 
	\[
	U_n \overset{d}{\to} U = \sum_k \tilde{\lambda}_k (-2c_k) G_k, 
	\quad
	G_k \sim {\cal N}\left(0,\frac{1}{\rho_1} + \frac{1}{\rho_2}\right) \text{ i.i.d.}
	\]
	By the summability of $\tilde{\lambda}_k$, $U$ is in same distribution as ${\cal N}(0,\sigma_{(2)}^2)$ as defined in the theorem.

	(3) Similar to (2), let $U_n$ be the l.h.s. of the statement, then $\alpha_{k,n}$, $\beta_{k,n}$, $\gamma_{k,n}$ are same as in (2) where $\delta=\frac{1}{2}$, so they have the same limit, and (iii) and (iv) hold. As analyzed above,
	$\Sigma_{kl}^{(n)} \to \Sigma_{kl} =  \frac{1}{\rho_1} \delta_{kl} + \frac{1}{\rho_2} S_{kl}$,
	and (i) and (ii) hold. Thus Lemma \ref{lemma:replace} gives that 
	$U_n \overset{d}{\to} U = \sum_k \tilde{\lambda}_k (-2c_k) G_k$
	where $G_k$ has covariance $\Sigma_{kl}$. 
	By the summability of $\tilde{\lambda}_k$, $U$ is in same distribution as ${\cal N}(0,\sigma_{(3)}^2)$, where
	$\sigma_{(3)}^2 = 4 \sum_{kl} \tilde{\lambda}_k\tilde{\lambda}_l c_k c_l \Sigma_{kl}$ which equals the formula claimed in the theorem. 
\end{proof}

\vskip 0.1in

\begin{proof}[Proof of Theorem \ref{thm:consist1}]
We only prove (2), as (1) directly follows from Theorem \ref{thm:limit1} (1) 
and the form of the limiting density of $nT_n$ in this case. 

We first consider the ${\cal H}_1$ case, i.e. $\tau > 0$:
Due to that $g$ satisfies Assumption \ref{assump:A2}
and the equivalent forms of $T$ as in \eqref{eq:def-T-tildek}, \eqref{eq:T-lambdak-vk},
we have that
\[
\sum_k \tilde{\lambda}_k c_k^2 := t_{(1)} > 0.
\]
Notations as in the proof of Theorem \ref{thm:limit1}, by \eqref{eq:nTn2}, 
\[
\frac{n T_n}{ (\tau \sqrt{n})^2 }
=
\sum_k \tilde{\lambda}_k
\left\{
c_k^2
- \frac{2 c_k}{ \tau \sqrt{n}} G_{kn}
+ \frac{G_{kn}^2}{ (\tau \sqrt{n} )^2}
\right\}.
\]
We set $U_n$ to be the l.h.s., and apply Lemma \ref{lemma:replace}:
$\nu_k = \tilde{\lambda}_k$ and are summable as before. 
Conditions (i) (ii) are satisfied by $G_{kn}$ (defined in \eqref{eq:set-Gkn}),
as has been verified in the proof of Theorem \ref{thm:limit1}.
Since
\[
\alpha_{kn} = c_k^2,
\quad
\beta_{kn} = -\frac{2 c_k}{\tau \sqrt{n}},
\quad
\gamma_{kn} = (\tau \sqrt{n})^{-2},
\]
and  $\tau \sqrt{n} \to + \infty$, 
Condition (iii) holds 
with the limits as
\[
\alpha_k = c_k^2,
\quad 
\beta_k = 0,
\quad
\gamma_k = 0.
\]
Condition (iv) is also satisfied due the summability of $\sum_k \tilde{\lambda}_k c_k^2$, 
same as in the proof of Theorem \ref{thm:limit1}.
Thus Lemma \ref{lemma:replace} gives that 
\begin{equation}
\label{eq:d-converge-U1}
\frac{n T_n}{ (\tau \sqrt{n})^2 } 
\overset{d}{\to}
U_{(1)}
:=
\sum_k \tilde{\lambda}_k c_k^2 = t_{(1)},
\end{equation}
which is a single-point distribution at the positive constant $t_{(1)}$.

We then consider the  ${\cal H}_0$ case, i.e. $\tau  = 0$:
By Theorem \ref{thm:limit1} (1),
$nT_n \overset{d}{\to} U_{(0)}$ 
which is a continuous nonnegative random variable with finite mean and variance.
Thus for any chosen level $\alpha$, there exists $t_{(0)} < \infty$  s.t.
$\Pr [ U_{(0)} >  t_{(0)}] < \alpha$,
and then when $n$ is large enough,
\[
\Pr [ nT_n >  t_{(0)}
 | {\cal H}_0 ] < \alpha.
\]
This means that $ \frac{t_{(0)}}{n}$ is a valid threshold in \eqref{eq:pi-n-alpha},
and as a result, (shortening ``${\cal H}_1 \text{ with $q$}$'' as ${\cal H}_1$)
\[
\pi_n(q) \ge  \Pr [ T_n  >  \frac{t_{(0)}}{n} | {\cal H}_1 ].
\]

Putting together with \eqref{eq:d-converge-U1}, 
we then have that  for sufficiently large $n$,
\[
1- \pi_n(q) 
\le  
\Pr [ T_n  \le  \frac{t_{(0)}}{n} | {\cal H}_1 ]
\to
\Pr [ U_{(1)} \le \frac {t_{(0)} }{ (\tau \sqrt{n})^2} ]
\]
which converges to 0 since $t_{(0)}$ is an absolute constant and thus 
$\frac {t_{(0)} }{ (\tau \sqrt{n})^2} \to 0$,
and meanwhile $U_{(1)} \equiv  t_{(1)} > 0$.
This proves that $\pi_n(q) \to 1$. 
\end{proof}

\subsection{Proof of Theorem \ref{thm:consist2} }

\begin{proof}[Proof of Theorem \ref{thm:consist2}]
We use Chebyshev to control the deviation of the random variable $X:= nT_n$ from its mean,
under ${\cal H}_0$ and ${\cal H}_1$ respectively.

Under ${\cal H}_0$, $\tau=0$, by \eqref{eq:nTn2},
\[
X 
= \sum_k \tilde{\lambda}_k  G_{kn}^2.
\]
By \eqref{eq:EGknGln}, $\E G_{kn}^2  = \frac{1}{\rho_{1,n}}     + \frac{1}{\rho_{2,n}} = \frac{1}{ \rho_{1,n} \rho_{2,n}}$ ,
\begin{equation}\label{eq:EX|H0}
\E X 
 = \frac{1}{ \rho_{1,n} \rho_{2,n}} \sum_k \tilde{\lambda}_k  
 = C_4.
\end{equation}
We will prove later that
\begin{equation}\label{eq:Var(X)|H0-to-prove}
\text{Var}(X) 
 \le C_3 + 0.1,
\end{equation}
and then by Chebyshev we have that for any $t_1 > 0$,
\[
\Pr [   X > C_4 + t_1 | {\cal H}_0 ]
\le 
\frac{C_3 + 0.1}{  t_1^2}.
\]
Setting the r.h.s to be $\alpha$ gives $t_1 = \sqrt{ \frac{C_3 + 0.1}{\alpha} }$,
and thus
\begin{equation}\label{eq:Pr-bound-H0}
\Pr [   X > C_4 + \sqrt{ \frac{C_3 + 0.1}{\alpha} } | {\cal H}_0 ] \le \alpha.
\end{equation}

Under ${\cal H}_1$, by \eqref{eq:nTn2} and the definition that $T_1 := \sum_k \tilde{\lambda}_k c_k^2 $, 
\begin{align}
X 
&= \sum_k \tilde{\lambda}_k 
( - \tau \sqrt{n} c_k
+ G_{kn})^2
=
(\tau^2 n) T_1
+X_1,\\
X_1
& : =   \sum_k \tilde{\lambda}_k 
\left(
-2 \tau \sqrt{n} c_k G_{kn} + G_{kn}^2
\right).
\label{eq:def-X1}
\end{align}
Since $\E G_{kn} = 0$ (c.f. \eqref{eq:EGkn}), 
\[
\E X_1  =  \sum_k \tilde{\lambda}_k G_{kn}^2 \ge 0,
\]
we have that 
\begin{equation}\label{eq:EX|H1}
\E X 
\ge (\tau^2 n) T_1.
\end{equation}
We will prove later that
\begin{equation}\label{eq:Var(X)|H1-to-prove}
\text{Var}(X) 
 \le (\tau^2 n) C_1 + \tau C_2 + C_3 + 0.1,
 \end{equation}
 and then, using Chebyshev again, for any $t_2 > 0$,
 \[
 \Pr [   X \le   (\tau^2 n) T_1 - t_2 | {\cal H}_1]
\le
 \Pr [   X \le  \E X  - t_2 | {\cal H}_1]
 \le 
 \frac{ (\tau^2 n) C_1 + \tau C_2 + C_3 + 0.1 }{t_2^2}.
 \]
Combined with \eqref{eq:Pr-bound-H0} which shows that 
$ t : = C_4 + \sqrt{ \frac{C_3 + 0.1}{\alpha} }$ is a valid threshold to achieve level $\alpha$,
by setting $t_2 =  (\tau^2 n) T_1 - t $ (which is strictly positive under \eqref{eq:cond2-consist2}), 
this gives the bound \eqref{eq:bound-consist2}.

It remains to prove \eqref{eq:Var(X)|H0-to-prove} and \eqref{eq:Var(X)|H1-to-prove} to finish the proof.

\vskip 0.1in
\underline{Proof of \eqref{eq:Var(X)|H0-to-prove}}:
We will prove that
\begin{equation}\label{eq:Var(X)|H0-2}
\text{Var}(X) 
 \le 
 \frac{2}{ (\rho_{1,n} \rho_{2,n})^2}
  	\sum_k \tilde{\lambda}_k^2 
  + \frac{16}{n} \left(  \frac{1}{ \rho_{1,n}^3 } +   \frac{1}{ \rho_{2,n}^3 }  \right)
\end{equation}
where the first term $\le C_3$ since $\sum_k \tilde{\lambda}_k^2 \le 16$ (c.f. \eqref{eq:sumk-lambdak2-bound}),
and the second term $ < 0.1$ under the condition that $ n > \frac{16}{0.1}( \frac{1}{ \rho_{1,n}^3 } + \frac{4}{ \rho_{2,n}^3 }   )$. 
This gives \eqref{eq:Var(X)|H0-to-prove}.

Recall that $X= \sum_k \tilde{\lambda}_k  G_{kn}^2$, and $\E X = C_4$.
By the definition of $G_{kn}$ \eqref{eq:set-Gkn}, 
\begin{align}
\E X^2 
& = \E \sum_k \sum_l \tilde{\lambda}_k \tilde{\lambda}_l G_{kn}^2G_{ln}^2 
\nonumber \\
& = \E \sum_k \sum_l \tilde{\lambda}_k \tilde{\lambda}_l
    \left( \frac{1}{\rho_{1,n}} \hat{h}_k^2 +  \frac{1}{\rho_{2,n}} \hat{g}_k^2  -  \frac{2}{\sqrt{\rho_{1,n}\rho_{2,n}}} \hat{h}_k\hat{g}_k   \right) 
    \left( \frac{1}{\rho_{1,n}} \hat{h}_l^2 +  \frac{1}{\rho_{2,n}} \hat{g}_l^2  -  \frac{2}{\sqrt{\rho_{1,n}\rho_{2,n}}} \hat{h}_l\hat{g}_l  \right) 
    \nonumber \\
& =   \sum_k \sum_l \tilde{\lambda}_k \tilde{\lambda}_l
	\{
	\frac{1}{\rho_{1,n}^2} \E \hat{h}_k^2 \hat{h}_l^2 
	+   \frac{2}{\rho_{1,n} \rho_{2,n} } 
	+  \frac{1}{\rho_{2,n}^2} \E \hat{g}_k^2 \hat{g}_l^2
	+ \frac{4}{\rho_{1,n} \rho_{2,n} }  \delta_{kl}
	\} 
	 \nonumber \\ 
& = \frac{2}{\rho_{1,n} \rho_{2,n} }  (\sum_k \tilde{\lambda}_k )^2
 	+ \frac{4}{\rho_{1,n} \rho_{2,n} } \sum_k \tilde{\lambda}_k^2
	+  \frac{1}{\rho_{1,n}^2} \E \sum_k \sum_l \tilde{\lambda}_k \tilde{\lambda}_l \hat{h}_k^2 \hat{h}_l^2 
	+  \frac{1}{\rho_{2,n}^2} \E \sum_k \sum_l \tilde{\lambda}_k \tilde{\lambda}_l \hat{g}_k^2 \hat{g}_l^2  .	
	\label{eq:EX2-H0-1}
\end{align}
Note that
\[
\sum_k \tilde{\lambda}_k \hat{h}_k^2
=
\sum_k \tilde{\lambda}_k \left(  \frac{1}{\sqrt{n_1}} \sum_{i} \tilde{\psi}_k(x_i) \right)^2 
= \frac{1}{n_1}  \sum_{i, \, i'} \tilde{k}( x_i, x_{i'}),
\]
and then
\begin{align}
\text{3rd term in \eqref{eq:EX2-H0-1}}
& = \frac{1}{\rho_{1,n}^2}  
	\E (\sum_k \tilde{\lambda}_k \hat{h}_k^2)^2 
	\nonumber \\
& = \frac{1}{\rho_{1,n}^2}
	\E \left( \frac{1}{n_1}  \sum_{i, \, i'} \tilde{k}( x_i, x_{i'}) \right)^2 
	\nonumber  \\
& = \frac{1}{\rho_{1,n}^2}
	\frac{1}{n_1^2}
	\E \sum_{ i,\, i', \, j, \, j'} 	\tilde{k}( x_i, x_{i'})   \tilde{k}( x_j, x_{j'}).	
	\label{eq:EX2-H0-term3}
\end{align}
Recall that $\E_{x' \sim p} \tilde{k}( x, x') =0$ for any $x$, 
thus $\E \tilde{k}( x_i, x_{i'})   \tilde{k}( x_j, x_{j'})$ does not vanish only when the indices $\{ i,\, i', \, j, \, j' \}$ all equal or fall into two pairs.
Then
\begin{align}
\text{\eqref{eq:EX2-H0-term3}}
& = \frac{1}{\rho_{1,n}^2}
	\frac{1}{n_1^2}
	\{
	n_1 \E_{x \sim p} \tilde{k}( x, x)^2  
	+ n_1 (n_1 -1) ( \E_{x \sim p} \tilde{k}( x, x) )^2
	+ 2 n_1 (n_1 -1)  \E_{x, \,x' \sim p} \tilde{k}( x, x')^2
	\} \nonumber \\
& \le
	\frac{1}{\rho_{1,n}^2}
	\{
	( \E_{x \sim p} \tilde{k}( x, x) )^2 
	+ 2  \E_{x, \,x' \sim p} \tilde{k}( x, x')^2 
	+ \frac{1}{n_1}  \E_{x \sim p} \tilde{k}( x, x)^2  
	\}
	\label{eq:EX2-H0-term3-moment},
\end{align}
Recall that 
\begin{align}
 \E_{x \sim p} \tilde{k}( x, x)  
 & = \sum_k  \tilde{\lambda}_k\\
 \E_{x, \,x' \sim p} \tilde{k}( x, x')^2 
 & = \sum_k  \tilde{\lambda}_k^2,
 \end{align}
 and $  \tilde{k}( x, x)   \le 4$ (c.f. Proposition \ref{prop:tildek} (1)), 
the above line continues to give that
\begin{equation}\label{eq:EX2-H0-term3-bound}
\text{3rd term in \eqref{eq:EX2-H0-1}}
\le  
\frac{1}{\rho_{1,n}^2}
	\{
	( \sum_k  \tilde{\lambda}_k )^2 
	+ 2 \sum_k  \tilde{\lambda}_k^2
	+ \frac{16}{n_1}  
	\}.
\end{equation}
Similarly, the 4th term can be bounded by
\begin{equation}\label{eq:EX2-H0-term4-bound}
\text{4th term in \eqref{eq:EX2-H0-1}}
\le
\frac{1}{\rho_{2,n}^2}
	\{
	( \sum_k  \tilde{\lambda}_k )^2 
	+ 2 \sum_k  \tilde{\lambda}_k^2
	+ \frac{16}{n_2}  
	\}.
\end{equation}
Back to \eqref{eq:EX2-H0-1}, we have that 
\begin{align*}
\text{Var}(X) 
&= \E X^2  - (\E X)^2 \\
&\le 
	\frac{2}{\rho_{1,n} \rho_{2,n} }  (\sum_k \tilde{\lambda}_k )^2
 	+ \frac{4}{\rho_{1,n} \rho_{2,n} } \sum_k \tilde{\lambda}_k^2
	+ \frac{1}{\rho_{1,n}^2}
	\{
	( \sum_k  \tilde{\lambda}_k )^2 +  2 \sum_k  \tilde{\lambda}_k^2
	+ \frac{16}{n_1}  
	\} \\
&~~~~~~
	+ 
	\frac{1}{\rho_{2,n}^2}
	\{
	( \sum_k  \tilde{\lambda}_k )^2 +  2 \sum_k  \tilde{\lambda}_k^2
	+ \frac{16}{n_2}  
	\}
	- \left( \frac{1}{\rho_{1,n} \rho_{2,n}} \sum_k \tilde{\lambda}_k\right)^2 \\
& = 	( \sum_k  \tilde{\lambda}_k^2 )
	 \frac{2}{(\rho_{1,n}\rho_{2,n})^2}
	+ \frac{16}{n} \left(  \frac{1}{\rho_{1,n}^3} + \frac{1}{\rho_{2,n}^3} \right)
\end{align*}
which is \eqref{eq:Var(X)|H0-2}.

\vskip 0.1in
\underline{Proof of \eqref{eq:Var(X)|H1-to-prove}}:
Recall that $X =(\tau^2 n) T_1+X_1$,  $T_1$ being constant, thus $\text{Var}(X) = \text{Var}(X_1)$.
By \eqref{eq:def-X1},
\begin{align*}
\text{Var}(X_1) 
& = (1) - (2) + (3) - (4) \\
(1)
& := ( \tau \sqrt{n})^2  4
	\sum_{k,l} \tilde{\lambda}_k   \tilde{\lambda}_l c_k c_l \Sigma_{kl}^{(n)} \\
(2)
& := \E 4\sqrt{n} 
	\sum_{k,l} \tilde{\lambda}_k   \tilde{\lambda}_l
	(\tau c_k)  G_{kn} G_{ln}^2 \\
(3)
& := \E \sum_{k,l} \tilde{\lambda}_k   \tilde{\lambda}_l
	G_{kn}^2 G_{ln}^2 \\
(4)
& :=	(\E X_1)^2 
	= (\E \sum_k \tilde{\lambda}_k G_{kn}^2 )^2
\end{align*}

We will prove that
\begin{align}
(1) 	&  \le (\tau \sqrt{n} )^2 C_1, 
	\label{eq:varX-H1-1-bound} \\
|(2) | 	& \le \tau C_2, 
	\label{eq:varX-H1-2-bound}\\
(3)-(4) & \le C_3 + 0.1
	\label{eq:varX-H1-3-bound}
\end{align}
which, putting together, gives that
\[
\text{Var}(X_1) 
\le
  (\tau \sqrt{n} )^2 C_1 + \tau C_2 + C_3 + 0.1
\]
which proves \eqref{eq:Var(X)|H1-to-prove}.

\vskip 0.1in
Proof of \eqref{eq:varX-H1-1-bound}:
Recall that (c.f. \eqref{eq:EGknGln}), and $q = p + \tau g$,
\[
\Sigma_{kl}^{(n)} 
	= \frac{1}{\rho_{1,n}} \delta_{kl} 
	+ \frac{1}{\rho_{2,n}} S_{kl}^{(n)},
\quad
S_{kl}^{(n)} = \E \hat{g}_k \hat{g}_l = \E_{y \sim q} ( \tilde{\psi}_k(y) + \tilde{v}_k) (\tilde{\psi}_l(y) + \tilde{v}_l ).
\]
We define 
\begin{equation}\label{eq:def-k1}
k_1( x ) : = \int \tilde{k}( x, z) (p+g)(z) dz 
	= \sum_k \tilde{\lambda}_k  c_k \tilde{\psi}_k(x),
\end{equation}
and then 
\[
\E_{ y\sim q} k_1 (y) = - \sum_k \tilde{\lambda}_k  c_k \tilde{v}_k,
\]
thus
\begin{align}
\sum_{k,l} \tilde{\lambda}_k   \tilde{\lambda}_l c_k c_l S_{kl}^{(n)} 
& = \E_{y \sim q} 
	\sum_{k,l} \tilde{\lambda}_k   \tilde{\lambda}_l c_k c_l
	( \tilde{\psi}_k(y) + \tilde{v}_k) (\tilde{\psi}_l(y) + \tilde{v}_l )
	\nonumber \\
&= \E_{y \sim q} ( \sum_{k} \tilde{\lambda}_k c_k ( \tilde{\psi}_k(y) + \tilde{v}_k) )^2
	\nonumber \\
& = \E_{y \sim q} ( k_1(y) - \E_{ y\sim q} k_1 (y) )^2 
 \le \E_{y \sim q} k_1(y)^2
 \label{eq:sumkl-ckcl-Slk-bound}
\end{align}
Meanwhile, by Proposition \ref{prop:tildek} (1), $|\tilde{k}(x,y)| \le 4$ uniformly, 
this gives that  $|k_1 (x) | \le 4$ for any $x$. 
Thus,
\[
\sum_{k,l} 
	\tilde{\lambda}_k   \tilde{\lambda}_l c_k c_l S_{kl}^{(n)}
\le 	16.
\]
As a result,
\[
\sum_{k,l} \tilde{\lambda}_k   \tilde{\lambda}_l c_k c_l \Sigma_{kl}^{(n)}
= 	  \frac{1}{\rho_{1,n}} \sum_k \tilde{\lambda}_k^2 c_k^2 
        + \frac{1}{\rho_{2,n}}  \sum_{k,l} \tilde{\lambda}_k   \tilde{\lambda}_l c_k c_l S_{kl}^{(n)}
\le 	\frac{1}{\rho_{1,n}} \sum_k \tilde{\lambda}_k^2 c_k^2  
	+  \frac{16}{\rho_{2,n}}
\]
which proves \eqref{eq:varX-H1-1-bound}.

\vskip 0.1in
Proof of \eqref{eq:varX-H1-2-bound}:
By \eqref{eq:set-Gkn}, the independence of $\hat{h}_k$ from $\hat{g}_k$, and that $\E \hat{h}_k = 0 $, $\E \hat{g}_k = 0 $ for all $k$, 
\[
\E G_{kn} G_{ln}^2 
= \frac{1}{\rho_{1,n}^{3/2}} \E \hat{h}_k \hat{h}_l^2 
	- \frac{1}{\rho_{2,n}^{3/2}} \E \hat{g}_k \hat{g}_l^2.
\]
We then have that 
\begin{align*}
(2) 
&= (4 \sqrt{n})
	\E \sum_{k,l} \tilde{\lambda}_k   \tilde{\lambda}_l 
		\tau c_k \left(  \frac{1}{\rho_{1,n}^{3/2}} \E \hat{h}_k \hat{h}_l^2 
		- \frac{1}{\rho_{2,n}^{3/2}} \E \hat{g}_k \hat{g}_l^2  \right) 
		\nonumber \\
& = (4 \sqrt{n})
    \E  \left\{
     \frac{1}{\rho_{1,n}^{3/2}}  \textcircled{\small{1}} \textcircled{\small{2}}
    - \frac{1}{\rho_{2,n}^{3/2}}  \textcircled{\small{3}} \textcircled{\small{4}}
    \right\}
    \label{eq:(2)-fromula} \\
\textcircled{\small{1}} 
& = \sum_k   \tau c_k \tilde{\lambda}_k \hat{h}_k 
	= \frac{\tau}{ \sqrt{n_1}} \sum_i k_1(x_i) 
	\nonumber \\
\textcircled{\small{2}} 
& = \sum_k  \tilde{\lambda}_k \hat{h}_k^2 
	=  \frac{1}{ n_1} \sum_{i, \, i'} \tilde{k}(x_i, x_{i'})
	\nonumber \\
\textcircled{\small{3}} 
& = \sum_k   \tau c_k \tilde{\lambda}_k \hat{g}_k 
	= \frac{\tau}{ \sqrt{n_2}} \sum_j ( k_1(y_j) -  \E_{ y\sim q} k_1 (y)  )
	\nonumber \\
\textcircled{\small{4}} 
& = \sum_k  \tilde{\lambda}_k \hat{g}_k^2
	= \frac{1}{ n_2} \sum_{j, \, j'} \tilde{\tilde{k}}(y_j, y_{j'}),
\end{align*}
where $\tilde{\tilde{k}}$ is defined as
\begin{align}
\tilde{\tilde{k}}(y, y') 
& : = \tilde{k}(y, y') -  \tilde{k}_q(y) -  \tilde{k}_q(y') +  \tilde{k}_{qq}
    = \sum_k \tilde{\lambda}_k (\tilde{\psi}_k(y) + \tilde{v}_k)(\tilde{\psi}_k(y') + \tilde{v}_k) , 
\\
\tilde{k}_q(y)
& : = \int \tilde{k}(y, z)q(z)dz
    = - \sum_k \tilde{\lambda}_k  \tilde{v}_k \tilde{\psi}_k(y),  \\
\tilde{k}_{qq} 
& : = \E_{y\sim q} \tilde{k}_q(y)
    = \sum_k \tilde{\lambda}_k  \tilde{v}_k^2 .
\end{align}
We compute $\E \textcircled{\small{1}}  \textcircled{\small{2}} $ and $ \E \textcircled{\small{3}}  \textcircled{\small{4}} $ respectively:
Note that $\E_{x \sim p} k_1 (x) = 0$, thus
\[
\E \textcircled{\small{1}}  \textcircled{\small{2}}
= \frac{\tau}{ n_1^{3/2}} \sum_{i, j, j'} \E k_1(x_i) \tilde{k}(x_j, x_{j'})
= \frac{\tau}{\sqrt{n_1}} \E_{x \sim p} k_1(x) \tilde{k}(x, x)
\]
due to that $\E k_1(x_i) \tilde{k}(x_j, x_{j'})$ vanishes unless the three indices coincide.
By that $|k_1(x)| \le 4$ and $\tilde{k}(x, x) \le 4$,  we have that 
\begin{equation}
| \E \textcircled{\small{1}}  \textcircled{\small{2}}|
\le  16 \frac{\tau}{\sqrt{n_1}}.
\end{equation}
As for $ \E \textcircled{\small{3}}  \textcircled{\small{4}} $, by that $\E_{y' \sim q } \tilde{\tilde{k}}(y, y')  = 0$ for any $y$, 
a similar argument gives that 
\[
\E \textcircled{\small{3}}  \textcircled{\small{4}}
= \frac{\tau}{\sqrt{n_2}} \E_{y \sim q} ( k_1(y) -  \E_{ z \sim q} k_1 (z)  ) \tilde{\tilde{k}}(y, y).
\]
To proceed, by Cauchy-Schwarz,
\[
| \E_{y \sim q} ( k_1(y) -  \E_{ z \sim q} k_1 (z)  ) \tilde{\tilde{k}}(y, y) |^2
\le 
	\E_{y \sim q} ( k_1(y) -  \E_{ z \sim q} k_1 (z)  )^2
	\cdot	 \E_{y \sim q}  \tilde{\tilde{k}}(y, y)^2,
\]
where, same as in \eqref{eq:sumkl-ckcl-Slk-bound},
\[
\E_{y \sim q} ( k_1(y) -  \E_{ z \sim q} k_1 (z)  )^2
\le 
\E_{y \sim q}  k_1(y)^2 \le 16,
\]
and, 
by the uniform bound that $\tilde{\tilde{k}}(y, y) \le 16$,
we also have that
\begin{equation}\label{eq:Etildetildek(y,y)2-bound}
\E_{y \sim q}  \tilde{\tilde{k}}(y, y)^2
\le 16 \E_{y \sim q}  \tilde{\tilde{k}}(y, y)
= 16 (\E_{y \sim q}  \tilde{k}(y, y) - \tilde{k}_{qq})
\le 16 \E_{y \sim q}  \tilde{k}(y, y) \le 16 \times 4,
\end{equation}
Together, they give that
\[
| \E_{y \sim q} ( k_1(y) -  \E_{ z \sim q} k_1 (z)  ) \tilde{\tilde{k}}(y, y) |
\le \sqrt{ 16 \times 16 \times 4 } = 32.
\]
This means that
\begin{equation}
| \E \textcircled{\small{3}}  \textcircled{\small{4}}|
\le  32  \frac{\tau}{\sqrt{n_2}}.
\end{equation}
Back to \eqref{eq:(2)-fromula}, we have that
\[
|(2) |
\le 
(4 \sqrt{n})
   \left\{
     \frac{1}{\rho_{1,n}^{3/2}}  16 \frac{\tau}{\sqrt{n_1}}
    + \frac{1}{\rho_{2,n}^{3/2}}  32  \frac{\tau}{\sqrt{n_2}}
    \right\}
\le
4 \times 32 \tau \left(\frac{1}{\rho_{1,n}^{2}} + \frac{1}{\rho_{2,n}^{2}}  \right),
\]
namely \eqref{eq:varX-H1-2-bound}.

\vskip 0.1in
Proof of \eqref{eq:varX-H1-3-bound}:
Similar to \eqref{eq:EX2-H0-1},  
\begin{align*}
(3)
&=   \frac{1}{\rho_{1,n}\rho_{2,n}} \sum_{k,l}  \tilde{\lambda}_k \tilde{\lambda}_l (  S^{(n)}_{kk} + S^{(n)}_{ll} + 4  \delta_{kl} S^{(n)}_{kk} ) \\
& ~~~ ~~~ 
  +   \frac{1}{\rho_{1,n}^2} \E  \sum_{k,l} \tilde{\lambda}_k \tilde{\lambda}_l \hat{h}_k^2 \hat{h}_l^2 
 +   \frac{1}{\rho_{2,n}^2} \E  \sum_{k,l} \tilde{\lambda}_k \tilde{\lambda}_l \hat{g}_k^2 \hat{g}_l^2 \\
& := (3)_1 +   (3)_2 + (3)_3.
\end{align*}
We have that
\begin{equation}\label{eq:bound-3-1}
(3)_1
=  \frac{1}{\rho_{1,n}\rho_{2,n}}
	\left\{
    2 \sum_{k}  \tilde{\lambda}_k  S^{(n)}_{kk}   \cdot \sum_l \tilde{\lambda}_l 
    + 4  \sum_{k}  \tilde{\lambda}_k^2 S^{(n)}_{kk} 
    \right\},
\end{equation}
and by \eqref{eq:EX2-H0-term3-bound},
\begin{equation}\label{eq:bound-3-2}
(3)_2
\le \frac{1}{\rho_{1,n}^2} \left\{
	( \sum_k  \tilde{\lambda}_k )^2 
	+ 2  \sum_k  \tilde{\lambda}_k^2
	+ \frac{16}{n_1}  \right\}.
\end{equation}
Using a similar argument to derive \eqref{eq:EX2-H0-term3-moment},
one can verify that
\begin{align*}
(3)_3
= \frac{1}{\rho_{2,n}^2} 
	\E ( \sum_{k} \tilde{\lambda}_k \hat{g}_k^2)^2
= \frac{1}{\rho_{2,n}^2} 
	\frac{1}{n_2^2} \E \sum_{i, i', j, j'} \tilde{\tilde{k}}(y_i, y_{i'}) \tilde{\tilde{k}}(y_j, y_{j'}) \\
\le 	
 \frac{1}{\rho_{2,n}^2} 
 \left\{
 ( \E_{y \sim q} \tilde{\tilde{k}}(y,y) )^2
 + 
  2 \E_{y, y' \sim q} \tilde{\tilde{k}}(y,y')^2 
 + 
 \frac{1}{n_2} \E_{y \sim q} \tilde{\tilde{k}}(y,y)^2
 \right\}. 
\end{align*}
Meanwhile, 
\[
\E_{y \sim q} \tilde{\tilde{k}}(y,y) = \sum_k \tilde{\lambda}_k S^{(n)}_{kk},
\quad
 \E_{y, y' \sim q} \tilde{\tilde{k}}(y,y')^2  = \sum_{kl}\tilde{\lambda}_k  \tilde{\lambda}_l (S^{(n)}_{kl})^2
\]
and $\E_{y \sim q} \tilde{\tilde{k}}(y,y)^2 \le 64$ (c.f. \eqref{eq:Etildetildek(y,y)2-bound}).
This gives that
\begin{equation}\label{eq:bound-3-3}
(3)_3
\le 
\frac{1}{\rho_{2,n}^2} \left\{
	(\sum_k \tilde{\lambda}_k S^{(n)}_{kk})^2
	+ 2 \sum_{kl} \tilde{\lambda}_k  \tilde{\lambda}_l (S^{(n)}_{kl})^2
	+ \frac{64}{n_2}  \right\}.
\end{equation}

We also have that
\[
(4)
= \left( \sum_k \tilde{\lambda}_k \left( \frac{1}{\rho_{1,n}} +  \frac{1}{\rho_{2,n}} S^{(n)}_{kk} \right) \right)^2,
\]
together with \eqref{eq:bound-3-1}, \eqref{eq:bound-3-2}, \eqref{eq:bound-3-3},
this gives that
\[
(3) - (4)
\le
	\frac{4}{\rho_{1,n}\rho_{2,n}}
     \sum_{k}  \tilde{\lambda}_k^2 S^{(n)}_{kk} 
     + 
     \frac{2}{\rho_{1,n}^2} 
	 \sum_k  \tilde{\lambda}_k^2
     + \frac{2}{\rho_{2,n}^2} 
	\sum_{kl} \tilde{\lambda}_k  \tilde{\lambda}_l (S^{(n)}_{kl})^2
	+ \frac{16}{n_1}  \frac{1}{\rho_{1,n}^2} 
	+ \frac{64}{n_2}  \frac{1}{\rho_{2,n}^2} .
\]
Note that $ \sum_k \tilde{\lambda}_k S^{(n)}_{kk} = \E_{y \sim q} \tilde{\tilde{k}}(y,y) \le \E_{y \sim q} \tilde{k}(y,y) \le 4$,
and $\sum_k \tilde{\lambda}_k \le 4$, 
thus
\begin{align*}
\sum_{k}  \tilde{\lambda}_k^2 S^{(n)}_{kk}  
& \le 4 \sum_{k}  \tilde{\lambda}_k S^{(n)}_{kk} \le 4\times 4, 
\\
\sum_k  \tilde{\lambda}_k^2
& \le 16,
 \quad \text{( c.f. \eqref{eq:sumk-lambdak2-bound})}
\\
\sum_{kl} \tilde{\lambda}_k  \tilde{\lambda}_l (S^{(n)}_{kl})^2
& \le
\sum_{kl} \tilde{\lambda}_k  \tilde{\lambda}_l S^{(n)}_{kk} S^{(n)}_{ll}
= (\sum_{k}\tilde{\lambda}_k S^{(n)}_{kk})^2  \le 4^2,
\end{align*}
this gives that
\[
(3) - (4)
\le
16 \left( 
\frac{4}{\rho_{1,n}\rho_{2,n}}
+
 \frac{2}{\rho_{1,n}^2} 
+
\frac{2}{\rho_{2,n}^2} 
\right)
+ \frac{16}{n}  \left( \frac{1}{\rho_{1,n}^3}  + \frac{4}{\rho_{2,n}^3} \right)
\]
The first term equals $C_3$,
and the second term $< 0.1$ under the condition of the Theorem.
This proves \eqref{eq:varX-H1-3-bound}.
\end{proof}

\end{document}